\documentclass{article} % For LaTeX2e

\usepackage{natbib}
\bibliographystyle{unsrtnat}
\usepackage{times}
\usepackage{hyperref}
\usepackage{fancyhdr}
\usepackage{url}
\usepackage{xspace}
\usepackage{mathtools, bbm}
\usepackage{amsthm}
\newtheorem*{theorem*}{Theorem}
\newtheorem{theorem}{Theorem}
\newtheorem{assumption}{Assumption}
\newtheorem{proposition}{Proposition}
\newtheorem{lemma}{Lemma}
\newtheorem{corollary}{Corollary}
\usepackage{algorithm,algpseudocode}

\setlength{\paperheight}{11in}
\setlength{\paperwidth}{8.5in}

\oddsidemargin .5in    %   Note \oddsidemargin = \evensidemargin
\evensidemargin .5in
\marginparwidth 0.07 true in
%\marginparwidth 0.75 true in
%\topmargin 0 true pt           % Nominal distance from top of page to top of
%\topmargin 0.125in
\topmargin -0.575in
\addtolength{\headsep}{0.25in}
\textheight 9.0 true in       % Height of text (including footnotes & figures)
\textwidth 5.5 true in        % Width of text line.
\widowpenalty=10000
\clubpenalty=10000

\DeclareMathOperator*{\argmax}{arg\,max}
\DeclareMathOperator*{\argmin}{arg\,min}

\DeclareMathOperator{\sign}{sign}

\newcommand{\E}{\mathbb{E}}

\newcommand{\R}{\mathbb{R}}

\newcommand{\eps}{\varepsilon}

\newcommand{\bhat}{\hat{\beta}}
\newcommand{\bbar}{\bar{\beta}}
\newcommand{\mubar}{\bar{\mu}}
\newcommand{\ubar}{\bar{u}}
\newcommand{\alphan}{\hat{\alpha}_n}
\renewcommand{\P}{ \mathbb{P} }
\renewcommand{\R}{ \mathbb{R} }
\newcommand{\epspar}{ \epsilon_\parallel }
\newcommand{\epsort}{ \epsilon_\perp }

\newcommand{\mun}{ \hat{\mu}^{\rm Sign}_n }
\newcommand{\muS}{ \hat{\mu}^{\rm Sign} }

\newcommand{\usign}{\hat{u}^{\rm Sign}}
\newcommand{\munaive}{ \hat{\mu}^{\rm RLHF} }
\newcommand{\betanaive}{ \hat{\beta}^{\rm RLHF} }
\newcommand{\normal}{\mathcal{N}}
\newcommand{\lbin}{\mathcal{L}_{0-1}}
\newcommand{\lbinemp}{\mathcal{L}_{0-1}^n}
\newcommand{\cotan}{\text{cotan}}

% Highlight a newly defined term

% Figure reference, lower-case.

% Figure reference, capital. For start of sentence

% Section reference, lower-case.

% Section reference, capital.

% Reference to two sections.

% Reference to three sections.

% Reference to an equation, lower-case.
\def\eqref#1{equation~\ref{#1}}
% Reference to an equation, upper case

% A raw reference to an equation---avoid using if possible

% Reference to a chapter, lower-case.

% Reference to an equation, upper case.

% Reference to a range of chapters

% Reference to an algorithm, lower-case.

% Reference to an algorithm, upper case.

% Reference to a part, lower case

% Reference to a part, upper case

\def\1{\bm{1}}

\def\eps{{\epsilon}}

\newcommand{\cL}{\mathcal{L}}

\newcommand{\bb}{\mathbb}
\newcommand{\ex}[1]{{\mathbb E} \left[ #1 \right]}
\newcommand{\extwo}[2]{{\mathbb E}_{#1} \left[ #2 \right]}
\newcommand{\expartwo}[2]{{\mathbb E}_{#1} [ #2 ]}
\newcommand{\expar}[1]{{\mathbb E} [ #1 ]}
\newcommand{\pr}[1]{\mathbb{P} \left( #1 \right)}

\newcommand{\prpar}[1]{\mathbb{P} ( #1 )}

\usepackage{pifont}% http://ctan.org/pkg/pifont
\usepackage{amssymb}
\usepackage{algorithm}
\usepackage{algpseudocode}
\usepackage{float}
\floatstyle{boxed}
\restylefloat{algorithm}

\usepackage[dvipsnames]{xcolor}
\usepackage{subcaption}
\usepackage{graphicx}

\usepackage{color}              
\usepackage[suppress]{color-edits}

%Colored boxes
\usepackage[most]{tcolorbox}
\usepackage{lmodern}
\usepackage[T1]{fontenc}
\usepackage{tabularx}
\usepackage{enumitem}
\usepackage{adjustbox}
\usepackage{xcolor}
\newcommand{\figtiny}{\fontsize{8}{9}\selectfont}

\usepackage{booktabs}
\usepackage{siunitx}
\sisetup{
table-number-alignment = center,
round-mode = places,
round-precision = 2
}

\newcommand{\EstimatorA}{RLHF}
\newcommand{\EstimatorB}{Sign}

\definecolor{personaA}{RGB}{52,152,219}   % blue
\definecolor{personaB}{RGB}{46,204,113}   % green
\definecolor{midband}{RGB}{155,89,182}    % purple
\definecolor{boxbg}{RGB}{255,255,255}
\definecolor{frame}{RGB}{230,230,230}

\newtcolorbox{personaBox}[2][]{
  % faint fill and frame derived from the accent color (#1)
  colback=#1!6!white,          % body background (faint)
  colframe=#1!25!white,        % border slightly stronger
  colbacktitle=#1!18!white,    % title bar (as before)
  coltitle=black,
  title={#2},
  fonttitle=\bfseries\figtiny,
  boxrule=0.1pt,
  left=6pt,right=6pt,top=1pt,bottom=1pt,
}

\newtcolorbox{metricBox}[2][]{
  colback=#1!6!white,
  colframe=#1!25!white,
  colbacktitle=#1!18!white,
  coltitle=black,
  title={#2},
  fonttitle=\bfseries\figtiny,
  boxrule=0.5pt,
  left=6pt,right=6pt,top=2pt,bottom=2pt,
}

\newtcolorbox{centerBand}{
  enhanced,
  colback=midband!6,
  colframe=midband!20,
  boxrule=0.5pt,
  left=8pt,right=8pt,top=1pt,bottom=1pt,
}

\definecolor{promptcolor}{RGB}{76,120,255}
\definecolor{instructioncolor}{RGB}{33,37,41}   % near-black
\definecolor{personacolor}{RGB}{39,174,96}      % green
\definecolor{questioncolor}{RGB}{211,84,0}      % orange
\definecolor{altonecolor}{RGB}{41,128,185}      % blue
\definecolor{alttwocolor}{RGB}{142,68,173}      % purple
\newtcolorbox{promptBox}{
  enhanced, breakable,
  colback=promptcolor!6!white,
  colframe=promptcolor!25!white,
  colbacktitle=promptcolor!18!black,
  title={Persona-conditioned annotation prompt},
  fonttitle=\bfseries\figtiny,
  boxrule=0.5pt,
  left=6pt,right=6pt,top=6pt,bottom=6pt,
}
\newcommand{\pLabel}[2]{\textbf{\textcolor{#1}{#2}}}

\newcolumntype{Y}{>{\figtiny\raggedright\arraybackslash}X}

\newtcolorbox{prompttemplate}[2][Prompt Template]{%
  enhanced,
  breakable,
  colback=#2!4,
  colframe=#2!60!black,
  boxrule=0.8pt,
  arc=2pt,
  left=6pt,right=6pt,top=1pt,bottom=1pt,
  title={#1},
  coltitle=white,
  fonttitle=\bfseries,
  attach boxed title to top left={yshift*=-\tcboxedtitleheight},
  boxed title style={
    sharp corners,
    colback=#2!70!black,
    interior code={
      \fill[#2!70!black] (interior.north west) rectangle (interior.south east);
    }
  }
}

\title{The Sign Estimator: LLM Alignment in the Face of Choice Heterogeneity}

% Authors must not appear in the submitted version. They should be hidden
% as long as the \iclrfinalcopy macro remains commented out below.
% Non-anonymous submissions will be rejected without review.

\author{
  Ali Aouad \\
  MIT
  \and
  Aymane El Gadarri\thanks{Correspondence to: Aymane El Gadarri <aymelgad@mit.edu>} \\
  MIT
  \and
  Vivek F. Farias \\
  MIT
  }

\date{}
% The \author macro works with any number of authors. There are two commands
% used to separate the names and addresses of multiple authors: \And and \AND.
%
% Using \And between authors leaves it to \LaTeX{} to determine where to break
% the lines. Using \AND forces a linebreak at that point. So, if \LaTeX{}
% puts 3 of 4 authors names on the first line, and the last on the second
% line, try using \AND instead of \And before the third author name.

%\iclrfinalcopy % Uncomment for camera-ready version, but NOT for submission.
\begin{document}

\maketitle
\begin{abstract}
Traditional LLM alignment methods are vulnerable to heterogeneity in human preferences. Fitting a naïve probabilistic model to pairwise comparison data (say over prompt-completion pairs) yields an inconsistent estimate of the population-average utility---a canonical measure of social welfare. We propose a new method, dubbed the {\em sign estimator}, that provides a simple, provably consistent, and efficient estimator by replacing cross-entropy with binary classification loss in the aggregation step. This simple modification recovers consistent ordinal alignment under mild assumptions and achieves the first polynomial finite-sample error bounds in this setting. In realistic simulations of LLM alignment using digital twins, the sign estimator substantially reduces preference distortion over a panel of simulated personas, cutting (angular) estimation error by nearly 35\% and decreasing disagreement with true population preferences from 12\% to 8\% 
compared to standard RLHF. Our method also compares favorably to panel data heuristics that explicitly model user heterogeneity and require tracking individual-level preference data---all while maintaining the implementation simplicity of existing LLM alignment pipelines.

\end{abstract}

\section{Introduction}

We consider the problem of learning a reward (or utility) model, say, in the context of LLM alignment. We are particularly concerned with doing so in the face of user heterogeneity. Our problem can be described as follows: we are given a set of pairs of alternatives (for instance, these may be pairs of text completions for different prompts). For each pair, we collect feedback from a random user on which alternative they prefer. We seek to aggregate the user feedback by learning the {\em population-average utility} for each alternative. 

In the typical generative model for the above setup, given a pair of alternatives $x, x'$ in some set $\mathcal{X}$, a random user has random utility $u(x)+\xi$ (respectively, $u(x')+\xi'$) for the two alternatives and selects the alternative with the higher random utility. Here $u(\cdot):\mathcal{X} \rightarrow \mathbb{R}$ is a utility function that we seek to learn and $\xi, \xi'$ are  i.i.d. noise terms. Given the data collected from user choices, the task of learning $u(\cdot)$ is relatively straightforward and essentially accomplished via maximum likelihood estimation.  

The above model does not capture systematic heterogeneity in user preferences---a feature of the real world. A typical view of the setup with heterogeneity would associate a user with a parameter $\beta$ in some set ${\cal B}$ and model the user's random utility for $x$ via the random utility function $u(x;\beta) + \xi$. Given users drawn from some distribution over ${\cal B}$, a canonical learning task would be to recover $\bar u(\cdot) \triangleq \expar{u(\cdot;\beta)}$. Aligning according to this utility function would be equivalent to aligning according to the average utility of the population.\footnote{This notion of social welfare is justified via Harsanyi's Utilitarian theorem; see Appendix~\ref{app:background}}

\textbf{The Problem: }What if we attempted to learn $\bar u(\cdot)$ while being oblivious to the underlying heterogeneity in the user population? That is, what if we assumed individual utility models took the simplified form $u(\cdot)$ rather than $u(\cdot; \beta)$? In effect, this is what existing reward learning pipelines do. As it turns out, the results of doing this yield a substantively biased view of population preferences and fail to recover $\bar u(\cdot)$. In fact, these biases can result in perverse impacts on model behavior, as we demonstrate later. More troubling still,  recovering $\bar u(\cdot)$ is, in general,  impossible \citep{golz2025distortion}. Our problem, therefore, is identifying a meaningful set of utility models and a learning algorithm that recovers $\bar u(\cdot)$ or a close approximation thereof.

% As a typical generative model for the user's choices, we assume that the user has a (random) utility for each alternative and selects the alternative with the higher random utility. We are interested in the setting where users are heterogeneous so that the expected utility for a specific alternative varies across users. Denoting by $u(x;\theta)$ the expected utility of user $\theta$ for alternative $x$, our estimation task is to learn the expected utility of the population of users, $\mathbb{E}_\theta[u(\cdot;\theta]$.\footnote{Why?} 

% To appreciate the task above, let us focus on linear utility models, $u(x;\theta) = x^\top\theta$ for the sake of clarity, so that we simply seek to estimate $\mathbb{E}[\theta] = \theta^*$, and consider the use of (a mis-specified) MLE to estimate this mean. Such an approach would ignore the randomness in $\theta$ itself, assuming that randomness in preferences stems entirely from idiosyncratic variability. This mis-specified MLE aligns with typical reward learning pipelines. As it turns out (perhaps not surprisingly), this mis-specified MLE is biased and often substantially so [lit?]. (Other approaches here and why they are not ideal)

\subsection{This paper}   

As stated above, our focus is on the task of learning $\bar u$ in the face of user heterogeneity. In the important special case of {\em linear} utility models where 
$
u(x;\beta) = \langle \phi(x), \beta \rangle
$, this reduces to the task of recovering $\bar \beta = \mathbb{E}_\beta[\beta]$. We make the following contributions:
\newline
\newline
\textbf{Mis-specified Estimation: }We show that the status-quo approach to our learning problem in Reinforcement Learning from Human Feedback(RLHF) pipelines (essentially, MLE on a mis-specified model) can be interpreted as a certain ``re-weighting'' of $\beta$ that under-weights low variance preferences; this transparently characterizes the nature of the bias introduced by the typical approach to learning $\bar u$. 
\newline
\textbf{The Sign Estimator: }We introduce the {\em Sign estimator}. We show that when user heterogeneity is symmetric in a sense we make precise later, the population level sign estimator recovers identical ordinal preferences to $\bar u$. In the context of linear utility models, we recover $\bar \beta/ \| \bar \beta\|$. Importantly, our assumptions subsume the typical assumptions made regarding user preferences in common econometric models of heterogeneous preferences. 
\newline
\textbf{Polynomial Rates: }In the context of linear utility models, the sign estimator recovers $\bar \beta/\|\bar \beta\|$ at a $O(n^{-1/3})$ rate. This is a significant step forward: the best estimators for this problem in the generality we allow have rates that scale like $n^{-1/d}$ where $d$ is the dimension. 
\newline
\textbf{Practical Use and Performance: }The sign estimator is practical to deploy; it serves as a {\em drop-in replacement} for cross-entropy loss in reward learning pipelines. On preference data collected from a set of digital twins calibrated to real users and without assuming symmetry, we show that the use of the Sign estimator reduces angular error from $63^\circ$ to $41^\circ$ and cuts disagreement rates from $12\%$ to $8\%$ with population level preferences when compared with the status-quo approach to reward learning. 

\subsection{Related literature}

The success of LLM systems has spurred significant work on modeling and optimizing RLHF pipelines~\citep{RLHFAnthropic1,RLHFOpenAI1,RLHFOPENAI2}. In the standard procedure, a reward model aggregates human preferences via maximum-likelihood estimation---under Bradley–Terry or Plackett–Luce assumptions---and then yields a fine-tuned  policy~\citep{ouyang2022training,zhu2023principled,wang2023rlhf}. Reward models, however, fail to capture heterogeneous views in the population---a point widely recognized in recent literature~\citep{xu2024rlhf}---distorting the utilitarian objective. For example, \cite{siththaranjan2023distributional} proves that the standard reward learning approach implements a social-choice rule known as ''Borda count'', which is less intuitive and lacks strong axiomatic justification. Our work  provides an intuitive analysis of this inconsistency, making explicit how this estimator implicitly re-weights individuals in its aggregation process.

% Focusing on utilitarian alignment,
We focus on average-utility maximization over the population as a social choice rule. This coincides with the standard RLHF objective when preferences are homogeneous and is consistent with welfare analysis in economics.\footnote{\footnotesize While welfare functionals can be more general, the additive utilitarian form has been predominant since Harsanyi’s aggregation result~\citep{harsanyi1955cardinal}.} The best-known results in this context are competitive algorithms---incurring a constant-factor ``distortion'' in the worst case~\citep{golz2025distortion}. In contrast, our sign estimator provably converges to the population-average ordinal preferences under a mild assumption of symmetry---a condition that holds for example in the case of the mixed logits model. 

% Consistency is achieved without altering the standard RLHF training architecture: it just amounts to a change of loss function, from cross-entropy to binary classification. 

Recent literature proposes pluralistic alignment methods such as estimation of mixture models or even personalized alignment \citep{park2024rlhf,scheid2024optimal,chakraborty2024maxmin,poddar2024personalizing}, which are either computationally more involved or alter the standard RLHF architecture. Our work also contrasts with recent literature that proposes axiomatic reformulations of LLM alignment \citep{geaxioms} or game-theoretic solution concepts \citep{munos2023nash} to mitigate the limitations of reward models. 

Lastly, our estimation results contribute to inference for mixtures of generalized linear models---a notoriously difficult problem \citep{ammar2014s,oh2014learning,sedghi2016provable} which often requires panel data structures \citep{kallus2020dynamic}. Remarkably, our method circumvents the need to actually estimate the mixture, which is often unidentifiable, and instead directly recovers the population-average utility direction, which is identified under less restrictive assumptions \citep{fox2012random}.

% The canonical method for estimating mixture models is the EM algorithm, which comes with no guarantees and is often complex to implement. 
% Our work circumvents the need to estimate the mixtures distribution, which is often computationally costly
% computationally heavier and deviate from current practices.

% We focus on expected-utility maximization over the population as a social choice rule. This coincides with the standard RLHF objective when preferences are homogeneous and is consistent with welfare analysis in economics. While welfare functionals can be more general, the additive utilitarian form has been predominant since Harsanyi’s aggregation result~\citep{harsanyi1955cardinal}. 

% Our work also contributes to inference for mixtures of generalized linear models. The canonical method for estimating mixture models is the EM algorithm, which comes with no guarantees and is often complex to implement. \Alicomment{I saw similar claim in related lit. should we add a cite?} The estimation of mixtures distributions is often computationally costly~\citep{ammar2014s,oh2014learning,sedghi2016provable} or requires panel data structures~\citep{kallus2020dynamic}. Remarkably, our method circumvents the need to actually estimate the  mixture, which is often unidentifiable, and instead directly recovers the expected-utility vector, which is determined under mild assumptions~\citep{fox2012random}. 

% To the best of our knowledge, our result is the first to provide non-exponential finite-sample convergence rates in this context.
\section{Problem set-up}
% \subsection{Model}

As discussed in the Introduction, we are given a potential set of alternatives ${\cal X}$; as a concrete example, ${\cal X}$ could represent the set of all (prompt, completion) pairs. A user is parameterized by a parameter $\beta \in \cal B$. We are also endowed with a distribution over users (i.e. over $\cal B$) and when clear from context will also use $\beta$ to denote the parameter of a random user drawn from $\cal B$ according to this distribution.  

\textbf{Utility Model: }A user with parameter $\beta$ is assumed to have a mean utility function parameterized by $\beta$, $u(\cdot;\beta): \cal X \rightarrow \mathbb{R}$. Such a user ascribes random utility $u(x;\beta)+\xi_{x,\beta}$ to alternative $x$. We assume 
$\{\xi_{x,\beta}: x\in \cal X, \beta \in \cal B \}$ 
to be an i.i.d. collection of mean-centered Gumbel (i.e., ${\rm Gumbel}(-\gamma,1)$) random variables. We assume the existence of a distinguished $o \in \cal X$ with $u(o;\beta) = 0$ for all $\beta \in {\cal B}$; $o$ is often referred to as the `outside option'. Given a pair of alternatives $x_1, x_2 \in \cal X$, a user $\beta$ thus selects the first alternative with probability $\sigma(u(x_1;\beta) - u(x_2;\beta))$ where $\sigma(\cdot)$ is the Logistic function. This is known as the Bradley-Terry or Plackett-Luce model. 

An important focal point of our work will be the case of {\em linear utility functions} $u(\cdot;\beta) = \langle \phi(\cdot), \beta \rangle$ where, $\phi(\cdot)$ is a general feature map (e.g., last hidden layer of a language model) and $\beta$ is the user's utility vector. Of course, the linearity assumption is without loss if we allow for general feature maps.
% This footnote feels a bit confusing to me
% \footnote{Linear models can also serve to approximate arbitrary random coefficients beyond the Gumbel assumption.~\citep{mcfadden2000mixed}}

% Let $z$ denote a query; say a prompt or context given to a language model. Let $x\in {\cal X}$ denote an alternative; this is a completion output generated by the language model to query $z$. Each user has Bradley-Terry(BT)/Plackett-Luce probabilistic preferences specified by a utility (or reward) function $u((z,x);\theta) = \langle \phi(x,z), \theta \rangle$, where parameter $\theta\in \Theta$ is the user's utility vector. Here, $\phi(\cdot)$ is a nonlinear feature map (e.g., last hidden layer of a language model). This approach can approximate any random utility model arbitrarily closely given an adequate feature map $\phi(\cdot)$.

\textbf{Data: } We assume a preference dataset constructed via the following generative process: We sample a random user $\beta$ from $\cal B$. We also sample a pair of alternatives $(X_1,X_2)$ according to some distribution $\mu$ over ${\cal X}^2$. We assume $\mu$ is exchangeable (so that the order of alternatives conveys no information) and to avoid certain degeneracies, we also assume that the set of ties $\{(x_1,x_2)\in {\cal X}^2: \mathbb{E}_\beta[u(x_1;\beta)] = \mathbb{E}_\beta[u(x_2;\beta)]\}$ has 
measure zero. We observe $Y \triangleq \mathbf{1}\{u(X_1;\beta)+\xi_{X_1,\beta} \geq u(X_2;\beta)+\xi_{X_2,\beta}\}$, i.e., whether the user prefers $X_1$ over $X_2$. Our preference dataset is $\{(X_1,X_2)_i,Y_i: i \in [n]\}$, a collection of $n$ such i.i.d. draws. Notice that in this setup, conditioned on $(X_1,X_2)$,  $Y$ is distributed as a Bernoulli random variable with mean $\E_\beta \left[\sigma(u(X_{1};\beta) - u(X_{2};\beta))\right]$. 

\textbf{Learning Task: }Given a preference dataset, our goal is to recover the mean utility function $\bar u(\cdot) \triangleq \mathbb{E}_\beta[u(\cdot;\beta)]$, corresponding to the utilitarian aggregate of the population preferences. In the case of a linear utility function, we simply care to recover the mean utility vector $\bar \beta \triangleq \mathbb{E}_\beta[\beta]$. 

In this paper we will satisfy ourselves with the goal of recovering $\bar u(\cdot)$ up to induced preferences: we will recover a $u: \cal X \mapsto \mathbb{R}$ for which $u(x_1) \geq u(x_2) \iff \bar u(x_1) \geq \bar u(x_2)$ for any $x_1, x_2 \in \cal X$. If $\cal X$ is sufficiently `dense', then for linear utility functions,  this amounts to recovering the direction of the average utility vector $\bar \beta/\|\bar \beta\|$.  In particular, this learning target suffices to determine the class of fine-tuned policies via RLHF where one optimizes a linear combination of expected utility and KL divergence to a reference policy; see Appendix~\ref{app:background}.

% Alicomment{Explain in footnote why? connect with RLHF?}

%For downstream use in an RLHF pipeline, we will actually be satisfied to recover $\bar u(\cdot)$ only up to a (non-negative) affine transformation: it will suffice for us to identify the set of functions $\{a \bar u(\cdot) + b: a > 0 \}$. Accordingly, in the case of linear utility functions, we will only seek to recover $\bar \beta/\|\bar \beta
%\|$.   

\subsection{Status Quo RLHF Estimator} 
% \Alicomment{RLHF MLE estimator?} \Alicomment{I prefer $\hat{\mu}^{\rm MLE}$, $\hat{\mu}^{\rm naive}$, $\hat{\mu}^{\rm sign}$}

The status-quo approach to the learning task above would simply seek to find a single utility function that minimizes cross-entropy loss to the observed preference data \citep{ziegler2020finetuninglanguagemodelshuman,RLHFOpenAI1,christiano2023}. Specifically, given some (rich enough) class of functions  $\cal U$ such that $\bar u \in \cal U$, we estimate
\[
\munaive \in
\argmin_{\tilde u \in \cal U}
-\E
\left[
Y \log\left(
\sigma(\tilde u(X_{1}) - \tilde u(X_{2}))
\right)
+
(1-Y)\log\left(1- 
\sigma(\tilde u(X_{1}) - \tilde u(X_{2}))
\right)
\right].
\]
By a slight abuse of terminology, we refer to the status-quo estimator as the \emph{RLHF estimator} and note that it is predominantly employed in RLHF pipelines \citep{RLHFAnthropic1,Anthropic2,RLHFOPENAI2}.
We immediately observe that the loss function above is mis-specified: specifically, conditioned on $(X_1,X_2)$, $Y$ is actually distributed as a Bernoulli with mean $\mathbb{E}_\beta\left[\sigma(u(X_{1};\beta) - u(X_{2};\beta))\right]$ whereas the above loss seeks to model $Y$ as a Bernoulli with mean $\sigma(\tilde u(X_{1}) -  \tilde u(X_{2}))$. Can we nonetheless have $\munaive= \bar u$? We study this issue in the remainder of this section, but start with a simple example illustrating the inconsistency noticed in recent literature:

Consider two types of users, ${\cal B} = \{1,2\}$, faced with two alternatives, ${\cal X} = \{o,1\}$.  Users of type 1 make up a fraction $\alpha$ of the population. 
We assume $u(1;1) = 1$ while $u(1;2)= -M < 0$. We  can show that for any $M \geq 3$ and any $\alpha \in [0.7, 1-1/M]$, the RLHF estimator recovers $\tilde u(1) > 0$ while $\bar u(1) < 0$; i.e. the RLHF estimator recovers the wrong social preference; see Appendix~\ref{app:two}. Remarkably, if we keep $\alpha$ fixed (at say $0.7$) and let $M$ grow large: despite $30 \%$ of the population having a strong preference for $o$ over option $1$, with the remaining $70 \%$ having a relatively negligible preference for option $1$, the RLHF estimator still picks option $1$. In other words this is a setting where the RLHF estimator incurs arbitrarily large disutility to the population as a whole.

\subsection{Utility Aggregation for Gaussian Data via the RLHF Estimator}
% What utility direction is recovered by the MLE? When does it deviate from $\bar{\mu}$? 
Let us specialize here to linear utility models and denote $X \triangleq \phi(X_1) - \phi(X_2)$. The RLHF estimator learns $\betanaive$ satisfying the moment equation 
 \[
 \expartwo{X}{\sigma(X^\top \betanaive)X} = \expartwo{X}{\expartwo{\beta}{\sigma(X^\top \beta)}X}
 \]
%The naive estimator is identified by the population moment equation $\expartwo{X}{\sigma(X^\top \hat{\beta}^{\rm MLE})X} = \expartwo{X}{\expartwo{\beta}{\sigma(X^\top \beta)}X}$. 
This identity shows that  the RLHF estimator calibrates $\betanaive$ so that the choice frequencies $\sigma(X^\top \betanaive)$ match the observed choice frequencies $\expartwo{\beta}{\sigma(X^\top \beta)}$, weighted by the context $X$. \cite{siththaranjan2023distributional} show that this can be interpreted as an aggregation of individual preferences via the so-called Borda count. Here we take a complementary perspective that allows for a transparent interpretation of how the RLHF estimator aggregates utility functions (as opposed to preferences). We show that for Gaussian data,  $\betanaive$ can be interpreted as a {\em re-weighted} mean of $\beta$. This re-weighting introduces bias by overweighting certain types of users and under-weighting others: 

\begin{proposition} \label{prop:agg}
    Suppose that $(X_1,X_2)$ is distributed so that $\phi(X_1) - \phi(X_2) \triangleq X \sim {\cal N}(0,\Sigma)$. Then the RLHF estimator recovers $\betanaive$ satisfying:
    \begin{equation}
        \betanaive \propto \expartwo{\beta}{\expartwo{X}{\sigma'(X^\top\beta)}\beta}
    \end{equation}
\end{proposition}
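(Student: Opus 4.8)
The plan is to apply Gaussian integration by parts (Stein's lemma) to both sides of the moment equation that characterizes the RLHF estimator in the linear setting, namely
\[
\expartwo{X}{\sigma(X^\top \betanaive)\,X} \;=\; \expartwo{X}{\expartwo{\beta}{\sigma(X^\top \beta)}\,X},
\]
as recorded just above. For $X \sim \normal(0,\Sigma)$ and any fixed vector $b$, Stein's lemma yields $\expartwo{X}{X\,\sigma(X^\top b)} = \Sigma\,\expartwo{X}{\sigma'(X^\top b)}\,b$, since $\nabla_x\,\sigma(x^\top b) = \sigma'(x^\top b)\,b$ and the integrability requirements are trivially met ($\sigma,\sigma'$ are bounded). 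Applying this with $b=\betanaive$ on the left-hand side, and with $b=\beta$ under the outer expectation on the right-hand side (swapping $\expartwo{X}{\cdot}$ and $\expartwo{\beta}{\cdot}$ by Fubini), the moment equation becomes
\[
\Sigma\,\expartwo{X}{\sigma'(X^\top \betanaive)}\,\betanaive \;=\; \Sigma\,\expartwo{\beta}{\expartwo{X}{\sigma'(X^\top \beta)}\,\beta}.
\]

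Next I would cancel $\Sigma$. If $\Sigma \succ 0$ this is immediate by left-multiplying by $\Sigma^{-1}$; if $\Sigma$ is only positive semidefinite one first restricts to its range, on which the feature differences $X$ live almost surely. The remaining scalar $c \triangleq \expartwo{X}{\sigma'(X^\top \betanaive)}$ is strictly positive because $\sigma'(t)=\sigma(t)(1-\sigma(t))>0$ for all $t\in\R$, so dividing by $c$ gives $\betanaive = c^{-1}\,\expartwo{\beta}{\expartwo{X}{\sigma'(X^\top\beta)}\,\beta}$, which is exactly the asserted proportionality. The weight attached to type $\beta$ is its expected choice variance $\expartwo{X}{\sigma'(X^\top\beta)} = \expartwo{X}{\sigma(X^\top\beta)(1-\sigma(X^\top\beta))}$, which is small precisely for ``decisive'' types (those with large $\beta^\top\Sigma\beta$), making explicit the under-weighting of low-variance preferences.

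The only points requiring care are routine: confirming that the displayed moment equation is indeed the first-order stationarity condition of the population cross-entropy objective (its convexity in the utility differences makes the stationary point the minimizer), and verifying the elementary dominated-convergence and Fubini conditions that license differentiating under the expectation in Stein's lemma and interchanging the order of integration. I do not expect a genuine obstacle: essentially all of the content is the single invocation of Stein's lemma, which is what converts the ``matching of choice probabilities'' moment condition into the ``re-weighted mean of utilities'' identity.
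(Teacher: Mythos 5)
Your proof is correct and follows essentially the same route as the paper: the paper likewise applies Stein's lemma to both sides of the moment equation $\expartwo{X}{\sigma(X^\top \betanaive)X} = \expartwo{X}{\expartwo{\beta}{\sigma(X^\top \beta)}X}$ to obtain $\expartwo{X}{\sigma'(X^\top\betanaive)}\betanaive = \expartwo{\beta}{\expartwo{X}{\sigma'(X^\top\beta)}\beta}$ and reads off the proportionality. Your treatment is if anything slightly more complete, since the paper's appendix argument is written for $\Sigma = I_d$ while you explicitly cancel a general $\Sigma$ and verify positivity of the scalar $\expartwo{X}{\sigma'(X^\top\betanaive)}$.
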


Proposition~\ref{prop:agg} reveals a fundamental issue with how the RLHF estimator aggregates heterogeneous utility vectors in the population. While our goal is to estimate the expected utility $\bar{\beta}= \expar{\beta}$, the RLHF estimator instead recovers a weighted average $\expar{w(\beta) \beta}$ where the (non-negative) weighting function $w(\beta) = \expartwo{X}{\sigma'(X^\top\beta)}$. This weighting scheme has a striking interpretation: since $\sigma'(X^\top\beta) = \sigma(X^\top\beta)(1-\sigma(X^\top\beta))$ is the variance of  user $\beta$'s choice response for the pair $(X_1,X_2)$, the RLHF estimator amplifies the influence of uncertain users while diminishing that of confident ones. That is, if a user strongly favors one of the two outputs, their stated preferences are {\em discounted}! In fact this discounting is quite extreme: noting that $\lim_{\lVert\beta \rVert \rightarrow +\infty} \expartwo{X}{\sigma'(X^\top\beta)}\beta = 0$ we see that in the limit, the users who have negligible variance in their choices---or equivalently care most strongly (or have the most informed opinion) about a particular outcome---are ignored. We note that this is {\em precisely} the behavior we saw in the simple example with two alternatives. This behavior seems undesirable. 

\section{The Sign Estimator}
% \Aymanecomment{List of questions this section should answer: What is the intuition of using the sign estimator (i.e. explain why should it work when $\eps$ is symmetric? Why does the MLE not work? (absence of classification calibration when the hypothesis set is too small). For the 0-1 loss, why in the absence of symmetry of $\eps$ does $\mubar$ need not be one of the minima(distortion of choice)? In the presence of symmetry, which assumption on X give that $\mubar$ is the unique minimum?}
% \Alicomment{I would simplify: Just introduce the sign estimator, the symmetry assumption, and then state unique maximizer lemma for the loss plus consistency} \Aymanecomment{Remove proposition 3? Vivek suggests to state the intuition for why it would work, or should I just say it in words?}\Alicomment{all good}
%Present sign estimator intuitively, replace logistic. compare it to MLE, add plot of losses. show uniqueness of minima result. Convey intuition about we came with it.

%\paragraph{Symmetry and sign estimator.} 
% As  we observed previously, the status-quo MLE is not consistent for $\bar{\mu}$. This inconsistency holds even under additional structure (e.g., Gaussian mixture).

%For arbitrary mixtures $\beta$, our learning task is highly complex in light of the strong impossibility results surrounding mixtures of generalized linear models. We posit the benign assumption of {\em symmetry} that often holds for parametric random utility models specified in practice.

The previous section illustrated that the RLHF estimator is inconsistent in general for recovering $\bar u$ (or $\bar \beta$ in the linear utility case). Further, it is known that it is impossible to recover $\bar u$ for general distributions on the random field $\epsilon(\cdot) \triangleq u(\cdot;\beta) - \bar u (\cdot)$; see \cite{golz2025distortion}. As such, this Section seeks to make progress via a mild assumption on $\epsilon(\cdot)$: 

\begin{assumption} \label{ass:symmetry}
The distribution over individual utilities is symmetric about its mean: specifically, for all $x \in \cal X$, $\epsilon(x)$ and $-\epsilon(x)$ have the same distribution. 
\end{assumption}

We emphasize two points: first, the RLHF estimator remains biased even under the above assumption; see Appendix~\ref{app:counterexample}. Second, and more importantly, the assumption of symmetry allows for the so-called Gaussian mixed logit family of random utility models. The latter family is the workhorse of modeling heterogeneous user preferences in the economics literature \cite[Chap.~6]{train2009discrete}. The remainder of this Section develops an estimator we dub the {\em Sign estimator}, which we show to mitigate the problem of inconsistency. We begin with a simple observation that underpins our development of the sign estimator: 

\begin{proposition} \label{prop:symmetry} If $\epsilon(x)$ is symmetrically distributed for all $x \in \cal X$, we have
    \[
        \sign\left(\ubar(x_1) - \ubar(x_2)\right) \ = \sign\left(\pr{Y=1| X_1=x_1,X_2=x_2} - \frac{1}{2}\right)
    \]
\end{proposition}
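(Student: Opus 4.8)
The plan is to express the conditional choice probability as an expectation over $\beta$ and exploit the symmetry of $\epsilon(\cdot)$ together with the monotonicity of the logistic function. Write $\Delta \triangleq \bar u(x_1) - \bar u(x_2)$ and recall from the problem set-up that $\pr{Y=1 \mid X_1 = x_1, X_2 = x_2} = \expartwo{\beta}{\sigma(u(x_1;\beta) - u(x_2;\beta))}$. Decomposing $u(x_i;\beta) = \bar u(x_i) + \epsilon(x_i)$, the inner argument becomes $\Delta + (\epsilon(x_1) - \epsilon(x_2))$. Setting $Z \triangleq \epsilon(x_1) - \epsilon(x_2)$, Assumption~\ref{ass:symmetry} (symmetry of the joint field at $\{x_1,x_2\}$ about its mean, hence symmetry of each coordinate and of the difference) gives that $Z$ is symmetric about $0$, i.e. $Z$ and $-Z$ have the same distribution. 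So it suffices to show: if $Z \overset{d}{=} -Z$, then $\sign\!\left(\expar{\sigma(\Delta + Z)} - \tfrac12\right) = \sign(\Delta)$.

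The key step is the following symmetrization identity for the logistic function: $\sigma(t) + \sigma(-t) = 1$ for all $t \in \R$. Pairing the realization $Z$ with $-Z$ (legitimate since they are equidistributed), I would write
\[
\expar{\sigma(\Delta + Z)} = \tfrac12\,\expar{\sigma(\Delta + Z) + \sigma(\Delta - Z)}.
\]
Then subtract $\tfrac12 = \tfrac12\,\expar{\sigma(\Delta + Z) + \sigma(-(\Delta+Z))}$... more directly: compare $\sigma(\Delta+Z) + \sigma(\Delta - Z)$ against $1 = \sigma(Z) + \sigma(-Z)$ is not quite the cleanest route. Instead I would argue pointwise that $g(\Delta) \triangleq \expar{\sigma(\Delta+Z)}$ is strictly increasing in $\Delta$ (immediate, since $\sigma$ is strictly increasing and we may differentiate under the expectation, or use monotone coupling), and that $g(0) = \expar{\sigma(Z)} = \tfrac12$ by the symmetrization identity: $\expar{\sigma(Z)} = \tfrac12\expar{\sigma(Z) + \sigma(-Z)} = \tfrac12\expar{1} = \tfrac12$, where the first equality uses $Z \overset{d}{=} -Z$. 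Strict monotonicity plus $g(0) = \tfrac12$ yields $g(\Delta) > \tfrac12 \iff \Delta > 0$ and $g(\Delta) < \tfrac12 \iff \Delta < 0$, which is exactly the claimed sign identity (the tie case $\Delta = 0$ has measure zero under $\mu$ by assumption, and in any case both sides are then $0$).

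I would also note the one technical point worth stating carefully: the symmetry assumption is phrased marginally ("for all $x$, $\epsilon(x) \overset{d}{=} -\epsilon(x)$"), so to conclude $Z = \epsilon(x_1) - \epsilon(x_2) \overset{d}{=} -Z$ I should either (i) read Assumption~\ref{ass:symmetry} as symmetry of the finite-dimensional marginal $(\epsilon(x_1), \epsilon(x_2))$ about its mean — which is the natural reading and is what holds in the Gaussian mixed logit family — so that negating the whole vector negates $Z$; or (ii) observe that only the law of $Z$ enters, and under the intended model $(\epsilon(x_1),\epsilon(x_2)) \overset{d}{=} -(\epsilon(x_1),\epsilon(x_2))$ is precisely the needed hypothesis. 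I expect this is the main (really the only) obstacle: making sure the symmetry hypothesis is invoked at the right level of generality so that the difference $Z$ inherits symmetry. Everything else — differentiating $\sigma$ under the expectation for strict monotonicity, the identity $\sigma(t)+\sigma(-t)=1$, and handling the measure-zero tie set — is routine.
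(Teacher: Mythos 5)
Your proposal is correct and follows essentially the same route as the paper's proof: both define $g(t) = \E[\sigma(t + \eps(x_1)-\eps(x_2))]$, establish $g(0)=1/2$ from the symmetry of the noise together with $\sigma(t)+\sigma(-t)=1$, and conclude via strict monotonicity of $g$. Your side remark about needing symmetry of the joint marginal $(\eps(x_1),\eps(x_2))$ rather than just each coordinate is well taken — the paper's own step $\E[\sigma(\eps(x_2)-\eps(x_1))] = \E[\sigma(\eps(x_1)-\eps(x_2))]$ implicitly relies on exactly that reading of Assumption~\ref{ass:symmetry}.
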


\begin{proof}
    For $x_1,x_2 \in \mathcal{X}$ and $t \in \mathbb{R}$, define $g(t) \triangleq \E_\eps[\sigma(t + \eps(x_1)-\eps(x_2))]$.
    The function $g$ is increasing since $\sigma$ is. Moreover, $g(0) = 1/2$. To see the latter note that $g(0) = \E_{\eps}(\sigma(\eps(x_1)-\eps(x_2)))$. But by the assumed symmetry of $\epsilon(\cdot)$, and the identity $\sigma(t) = 1 - \sigma(-t)$
    \[
    \E_{\eps}(\sigma(\eps(x_1)-\eps(x_2))) = 1 - \E_{\eps}(\sigma(\eps(x_2)-\eps(x_1))) = 1 - \E_{\eps}(\sigma(\eps(x_1)-\eps(x_2))) \ ,
    \]
so that $g(0) = \E_{\eps}(\sigma(\eps(x_1)-\eps(x_2))) = 1/2$. 
    
    Thus, for $t \leq 0, g(t) \leq 1/2$ while for $t > 0, g(t) > 1/2$. In other words, 
    \[
    \sign(t) = \sign\left(g(t) - \frac{1}{2}\right)
    \]
    The result follows by taking $t = \ubar(x_1) - \ubar(x_2)$.
    \end{proof}

Proposition~\ref{prop:symmetry} shows that ordinal preferences induced by $\bar u$ (i.e.. $\sign\left(\ubar(x_1) - \ubar(x_2)\right)$) are in correspondence with the sign of $2\E[Y| X_1=x_1,X_2=x_2] - 1$. This motivates the \textbf{{\em Sign estimator}} that maximizes the agreement between the signs of $\usign(x_1) - \usign(x_2)$ and $2Y-1$: 
\begin{equation}\label{eq:pop-risk}
\boxed{
 \usign \in \argmin_{\tilde u \in \cal U} \lbin(\tilde u)\;=\;
  -\,\E_{X_1,X_2,Y}\!\Bigl[(2Y-1)\,\mathrm{sign}(\tilde u(X_1) - \tilde u(X_2))\Bigr] 
}
\end{equation}
The remainder of this Section establishes what the (population level) sign estimator above recovers; the next Section will then study its empirical counterpart.  

%\Aymanecomment{Add which set the argmin is over.}.  Compared to the status-quo MLE, our approach amounts to  a change of loss function: $\usign$ estimates a classifier with respect to the binary classification loss, whereas $\hat{u}^{MLE}$ does that with the logistic loss. 

\subsection{Ordinal Consistency for General Utilities} 

We now show that all minimizers of the 0-1 loss defining the Sign estimator are ordinally consistent with $\ubar$. Recall that our data generating process assumes $(X_1,X_2)$ is drawn from ${\cal X}^2$ according to some exchangable distribution $\mu$. We have:  

\begin{theorem} \label{thm:consistency}
    Under symmetry, (i.e. Assumption \ref{ass:symmetry}), we have: 
    \[
    \ubar(X_1) \ \geq\  \ubar(X_2) \implies \usign(X_1) \ \geq \  \usign(X_2) \ \mu \ a.s. \
    \]
\end{theorem}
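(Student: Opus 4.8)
The plan is to show that the population risk $\lbin(\tilde u)$ is minimized exactly by those $\tilde u$ whose induced preference ordering agrees $\mu$-a.s. with that of $\ubar$, and that $\ubar$ itself (which lies in $\cal U$) achieves this minimum, so that any minimizer $\usign$ must inherit the same ordering. First, I would rewrite the objective by conditioning on $(X_1,X_2)$: since $\E[2Y-1 \mid X_1=x_1,X_2=x_2] = 2\,\pr{Y=1\mid x_1,x_2} - 1$, we have
\[
\lbin(\tilde u) \;=\; -\,\E_{X_1,X_2}\!\Bigl[\bigl(2\,\pr{Y=1\mid X_1,X_2} - 1\bigr)\,\sign(\tilde u(X_1) - \tilde u(X_2))\Bigr].
\]
For each fixed pair $(x_1,x_2)$, the integrand is maximized in absolute contribution (i.e., the negative is minimized) by choosing $\sign(\tilde u(x_1)-\tilde u(x_2)) = \sign\bigl(2\,\pr{Y=1\mid x_1,x_2}-1\bigr)$, and by Proposition~\ref{prop:symmetry} the latter equals $\sign(\ubar(x_1)-\ubar(x_2))$. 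Hence pointwise the best achievable value of the integrand is $-|2\,\pr{Y=1\mid x_1,x_2}-1|$, and this is attained by $\tilde u = \ubar$.

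Next I would argue the converse: the no-ties assumption (the set $\{(x_1,x_2): \ubar(x_1)=\ubar(x_2)\}$ has $\mu$-measure zero) guarantees that off a null set we have $2\,\pr{Y=1\mid x_1,x_2}-1 \neq 0$ — indeed $g$ from Proposition~\ref{prop:symmetry}'s proof is strictly monotone through $1/2$, so strict inequality $\ubar(x_1) > \ubar(x_2)$ forces $\pr{Y=1\mid x_1,x_2} > 1/2$ strictly. Therefore on a $\mu$-full set the pointwise optimal sign is uniquely determined, and any $\tilde u$ that disagrees with $\ubar$'s ordering on a set of positive $\mu$-measure incurs strictly larger risk than $\ubar$. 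Consequently every minimizer $\usign$ must satisfy $\sign(\usign(x_1)-\usign(x_2)) = \sign(\ubar(x_1)-\ubar(x_2))$ for $\mu$-a.e.\ $(x_1,x_2)$; combined with the no-ties assumption on $\ubar$, this yields $\ubar(x_1)\geq\ubar(x_2) \implies \usign(x_1)\geq\usign(x_2)$ $\mu$-a.s., as claimed.

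The main subtlety — and the step I would be most careful about — is the handling of the case $\tilde u(x_1) = \tilde u(x_2)$ and the precise meaning of $\sign(0)$ in the objective: when $\usign$ ties two alternatives that $\ubar$ strictly orders, the integrand contributes $0$ rather than the strictly negative optimum $-|2\pr{Y=1\mid x_1,x_2}-1|$, so such ties are also strictly suboptimal and excluded $\mu$-a.s.; this is exactly why the conclusion can be stated with $\geq$ rather than just $>$. A second point needing care is that $\cal U$ is assumed rich enough to contain $\ubar$, so the infimum is actually attained and the comparison "$\usign$ vs. $\ubar$" is legitimate; I would state this reliance on $\ubar \in \cal U$ explicitly. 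No measure-theoretic pathology arises beyond a standard "integrand maximized pointwise $\implies$ integral maximized, with equality iff pointwise optimal a.s." argument, which I would invoke directly.
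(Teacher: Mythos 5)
Your proposal is correct and follows essentially the same route as the paper: both reduce the excess risk $\lbin(\tilde u)-\lbin(\ubar)$ to an integral of $\lvert 2\,\pr{Y=1\mid X_1,X_2}-1\rvert$ over the disagreement set, invoke Proposition~\ref{prop:symmetry} to identify the pointwise-optimal sign with $\sign(\ubar(X_1)-\ubar(X_2))$, and use the no-ties assumption to force $\mu$-a.s.\ agreement for any minimizer. Your explicit treatment of the $\sign(0)$ (tie) case is a nice touch that the paper's algebraic identity glosses over, but it does not change the substance of the argument.
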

% \Alicomment{Since ties occur with proba 0 by hypothesis, this implies non-strict inequality almost surely. what's $\mu$ in the equation?}
\begin{proof}
    First, we use Proposition \ref{prop:symmetry} to show that $\ubar$ is one of the minimizers of the loss. Denote ${\rm ch}(X_1,X_2) = 2 (\pr{Y=1| X_1=x_1,X_2=x_2} - \frac{1}{2})$. Then, for an arbitrary function $u'$, we have that:
    \[
	\lbin(u') - \lbin(\ubar) 
        = \E_{X_1,X_2}[{\rm ch}(X_1,X_2)(\sign(\ubar(X_1) - \ubar(X_2)) - \sign(\usign(X_1) - \usign(X_2)))]
    \]
    
    From Proposition \ref{prop:symmetry} we know that $\sign(\ubar(X_1) - \ubar(X_2)) = \sign({\rm ch}(X_1,X_2))$.
    Thus:
    \begin{align*}
        &\lbin(u') - \lbin(\ubar)\\ 
        &= \E_{X_1,X_2}\left[|{\rm ch}(X_1,X_2)|(1 - \sign(\usign(X_1) - \usign(X_2)) \sign(\ubar(X_1) - \ubar(X_2)))\right] \\
        &= 2\E_{X_1,X_2}\left[|{\rm ch}(X_1,X_2)|\mathbf{1}(\sign(\usign(X_1) - \usign(X_2)) \neq \sign(\ubar(X_1) - \ubar(X_2)))\right] \\
        &\geq 0
    \end{align*}
  Hence, $\ubar$ is a minimizer of the loss. Next, let $\usign \in \arg \min_{u'}\lbin(u')$ be a minimizer of the loss so that $\lbin(\usign) - \lbin(\ubar) = 0$. Observe that $\ubar(X_1) > \ubar(X_2) \implies {\rm ch}(X_1,X_2) \neq 0$. But then from the penultimate display above, we must have $\usign(X_1) - \usign(X_2)) = \sign(\ubar(X_1) - \ubar(X_2))$ $\mu$ a.s., so that $\usign(X_1) > \usign(X_2)$ $\mu$ a.s. Finally, recall we assumed that ties $u(X_1)=u(X_2)$ occur on a set of $\mu$ measure zero. This completes the proof. 
 %  which by non-negativity of the term inside the sigmoid is true iff almost surely:\Aymanecomment{We need to add this only holds in the support of X rather than the full $\cal X$} $\forall x_1,x_2 \in \cal X:$ $ch(x_1,x_2) = 0$ or $\sign(\usign(x_1) - \sign(x_2)) = \sign(\ubar(x_1) - \ubar(x_2))$.
%  By monotonicity of $g:t \mapsto \E_\eps[\sigma(t + \eps(x_1)-\eps(x_2))]$, the first case is only true if $\ubar(x_1) = \ubar(x_2)$. Hence, whenever $\ubar(x_1) > \ubar(x_2)$, we have that $\sign(\usign(x_1) - \sign(x_2)) = \sign(\ubar(x_1) - \ubar(x_2)) > 0$.
\end{proof}

% TODO: The above is incomplete: maybe we need that $u(x_1) = u(x_2)$ implies $x_1 = x_2$? 
% \Alicomment{Assuming that ties occur w.p. zero enables us to make the stronger claim. But it is just `hiding' the difficulty. The strict inequality may be cleaner.}

\subsection{Consistency for Linear Utilities up to Positive Scaling} 

In this section, we deduce from Theorem~\ref{thm:consistency} a stronger consistency result for linearly parametrized models, i.e., where $u(\cdot; \beta) = \langle \phi(\cdot), \beta \rangle$. Here, we define $\eps \triangleq \beta - \bar\beta$; the symmetry assumption amounts to $\eps$ and $-\eps$ having the same distribution. We seek to recover $\bar \beta = \mathbb{E}_\beta[\beta]$ up to positive scaling. That is, we seek to recover $\bar \mu \triangleq \bar \beta/ \| \bar \beta \|$. We require a further assumption on the density of $X \triangleq \phi(X_1) - \phi(X_2)$ so that $\bar \mu$ is identifiable from pairwise choice data, a pre-requisite for any consistent estimator.

\begin{assumption} \label{ass:identifiable}
     X is a continuous random variable with density $f_X$.  
     % with respect to the Lebesgue measure. -- removed: obvious. 
     $f_X$ is bounded away from zero in an open set containing the origin.   
    % around 0, $\exists r>0: \forall x\in \R^ds.t:||x|| \leq r, f_X(x) > 0$.
\end{assumption}

Assumption~\ref{ass:identifiable} is, to our knowledge, the weakest known sufficient condition that simply allows for the identification of $\bar\beta$; see~\cite{fox2012random}.\footnote{Parenthetically, while \cite{fox2012random} address identification through a constructive proof, they do not provide a practical/efficient estimator.
% ; our results represent a substantive step forward in their econometric context.
}
% I removed the below because it doesnt seem accurate -- the fox paper works for general distributions on $\beta$. 
% Assumption~\ref{ass:identifiable} generalizes conditions for recovery of $\bar beta$ in the special case of Gaussian mixed logits (see~\cite{fox2012random}.
% This assumption guarantees that $\bar{\beta}$ is identifiable from parwise comparison data---a pre-requisite for the existence of any consistent estimator.\footnote{To our knowledge, Assumption~\ref{ass:identifiable} is the weakest known sufficient condition that identifies $\bar\beta$ for Gaussian mixed logits (see~\cite{fox2012random})---a special case of our linear setting.}
In the linear setting, the Sign estimator recovers
    \begin{equation*}\label{eq:pop-risk}
 \muS \in \argmin_{\theta \in {\cal S}^{d-1}} \lbin(\theta)\;=\;
  -\,\E_{X_1,X_2,Y}\!\Bigl[(2Y-1)\,\mathrm{sign}\left(X^\top \theta \right)\Bigr] \ .
\end{equation*} 
We then have: 

% Hence, we use our ordinal consistency result to deduce that $\bar{\mu}\triangleq \bar{\beta}/\lVert \bar\beta \rVert$ is the unique minimizer of the population 0-1 loss. 

\begin{corollary} \label{ineq:consistency}
    Under Assumption~\ref{ass:symmetry}, $\mubar$ is a  minimizer of $\lbin$ over ${\cal S}^{d-1}$. If Assumption~\ref{ass:identifiable} also holds,
     this minimizer is unique so that $\muS = \bar{\mu}$.
\end{corollary}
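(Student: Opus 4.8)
The plan is to obtain both assertions from the proof of Theorem~\ref{thm:consistency}, specialized to the linear hypothesis class $\{x \mapsto \langle\phi(x),\theta\rangle : \theta \in {\cal S}^{d-1}\}$, and to invoke Assumption~\ref{ass:identifiable} only at the very last step. For the first assertion, a linear utility with parameter $\theta$ induces the pairwise sign $\sign(\langle\phi(X_1),\theta\rangle - \langle\phi(X_2),\theta\rangle) = \sign(X^\top\theta)$, and since $\bar\beta = \|\bar\beta\|\,\mubar$ with $\|\bar\beta\|>0$ we have $\sign(X^\top\mubar) = \sign(X^\top\bar\beta) = \sign(\ubar(X_1)-\ubar(X_2))$, which by Proposition~\ref{prop:symmetry} equals $\sign({\rm ch}(X_1,X_2))$ where ${\rm ch}(X_1,X_2) \triangleq 2(\pr{Y=1\mid X_1,X_2} - \tfrac12)$. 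Re-running the computation in the proof of Theorem~\ref{thm:consistency} verbatim, now with the minimum over ${\cal S}^{d-1}$, gives for every $\theta\in{\cal S}^{d-1}$
\[
\lbin(\theta) - \lbin(\mubar) \;=\; \E_X\!\left[|{\rm ch}(X_1,X_2)|\,\bigl(1 - \sign(X^\top\theta)\sign(X^\top\mubar)\bigr)\right] \;\geq\; 0,
\]
so $\mubar$ attains the minimum; this is essentially Theorem~\ref{thm:consistency} in the case that $\ubar$ itself lies in the hypothesis class.

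For uniqueness, suppose $\theta \in {\cal S}^{d-1}$ also minimizes $\lbin$, i.e.\ the displayed excess loss vanishes. I would first record that $\pr{Y=1\mid X_1,X_2} = \E_\eps[\sigma(X^\top\bar\beta + X^\top\eps)]$, and that by the symmetry of $\eps$ and strict monotonicity of $\sigma$ the map $t \mapsto \E_\eps[\sigma(t + X^\top\eps)]$ is strictly increasing with value $\tfrac12$ at $t=0$; hence ${\rm ch}(X_1,X_2) = 0$ if and only if $X^\top\bar\beta = 0$, so $|{\rm ch}| > 0$ for $f_X$-almost every $X$ (the exceptional set being the hyperplane $\{X^\top\mubar=0\}$, which is $f_X$-null since $X$ is continuous). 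Vanishing excess loss therefore forces $\sign(X^\top\theta) = \sign(X^\top\mubar)$ for $f_X$-almost every $X$, and in particular $f_X(D_\theta) = 0$ for the sign-disagreement cone $D_\theta \triangleq \{x:\langle x,\theta\rangle>0\}\,\triangle\,\{x:\langle x,\mubar\rangle>0\}$.

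The crux is then to show $f_X(D_\theta) > 0$ whenever $\theta \neq \mubar$, contradicting the above. When $\theta\neq\mubar$, the set $\{x:\langle x,\theta\rangle>0\}\cap\{x:\langle x,\mubar\rangle<0\}\subseteq D_\theta$ is a nonempty open cone --- the intersection of two open halfspaces through the origin is nonempty precisely when their normals are not antipodal, i.e.\ precisely when $\theta\neq\mubar$ (and when $\theta=-\mubar$, $D_\theta$ is the complement of two hyperplanes). Being a cone, $D_\theta$ contains points arbitrarily close to the origin, so intersecting with the open neighborhood of the origin on which $f_X \geq c>0$ (Assumption~\ref{ass:identifiable}) yields an open subset $U\subseteq D_\theta$ with $f_X\ge c$ on $U$, whence $f_X(D_\theta) \geq c\,|U| > 0$. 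This contradiction shows $\mubar$ is the unique minimizer, so $\muS = \mubar$.

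I expect this last geometric step to be the only non-routine part: the rest is bookkeeping on top of Theorem~\ref{thm:consistency}, but one must verify that the sign-disagreement region between two distinct unit vectors is a nonempty open cone reaching down to the origin --- which is exactly what makes Assumption~\ref{ass:identifiable} the right hypothesis. A minor point not to skip is that ${\rm ch}$ vanishes exactly on a hyperplane (rather than merely on the $\mu$-null tie set), so that positive-measure sign disagreement genuinely yields strictly positive excess loss.
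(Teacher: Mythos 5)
Your proof is correct and follows the route the paper intends: the paper states this corollary without an explicit proof, deriving minimality of $\mubar$ directly from the argument of Theorem~\ref{thm:consistency} (the loss depends only on $\sign(X^\top\theta)$, so $\mubar$ and $\bar\beta$ are interchangeable) and delegating uniqueness to the identification condition of Assumption~\ref{ass:identifiable}, citing \cite{fox2012random}. The substantive content you supply --- that ${\rm ch}$ vanishes only on the hyperplane $\{X^\top\mubar=0\}$, and that the sign-disagreement region of two distinct unit vectors is a nonempty open cone reaching the origin, hence has positive mass under the local-density condition --- is exactly the identification step the paper leaves implicit, and your handling of it (including the antipodal case $\theta=-\mubar$) is sound.
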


\section{Finite-Sample Convergence}
% \Alicomment{Currently editing this section}

In this section, we consider the empirical counterpart of the 0-1 loss and derive a sample complexity result. We focus on the linear utility setting and establish cube-root convergence of the Sign estimator to the learning target $\bar\mu$.

\begin{theorem*}[Informal]
Under Assumption~\ref{ass:symmetry} (symmetry of $\eps$) and mild regularity on the distributions of $\beta$ and $X$, the Sign estimator achieves a $\tilde{O}(n^{-1/3})$ rate for the empirical risk with high probability.\footnote{We use $\tilde{O}(\cdot)$ to hide additional factors with poly-logarithmic dependence in $n$ and the failure probability $\delta$, and dependence on  other instance parameters including $d$. See Theorem~\ref{thm:finite}. }
\end{theorem*}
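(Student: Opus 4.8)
The plan is to analyze the empirical minimizer $\hat\mu_n^{\rm Sign} \in \argmin_{\theta \in \mathcal{S}^{d-1}} \lbinemp(\theta)$ of the empirical $0$-$1$ risk $\lbinemp(\theta) = -\frac{1}{n}\sum_{i=1}^n (2Y_i - 1)\sign(X_i^\top \theta)$ via a two-part argument: (i) a uniform deviation bound relating $\lbinemp$ to its population counterpart $\lbin$ over the sphere, and (ii) a \emph{curvature} (or sharpness) lower bound showing that $\lbin(\theta) - \lbin(\bar\mu)$ grows quadratically in the angle between $\theta$ and $\bar\mu$ near the optimum. Combining these in the standard way — the empirical minimizer has empirical risk at most that of $\bar\mu$, so its \emph{population} suboptimality is at most twice the uniform deviation — yields an angular error bound, and the $n^{-1/3}$ rate emerges from balancing the two effects once the deviation term is localized.

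For step (i), the function class $\{x \mapsto \sign(x^\top\theta) : \theta \in \mathcal{S}^{d-1}\}$ is the class of homogeneous linear threshold functions, which has VC dimension $d$. Hence the empirical process $\sup_\theta |\lbinemp(\theta) - \lbin(\theta)|$ is $\tilde O(\sqrt{d/n})$ by standard VC/Rademacher arguments. But a global $\sqrt{1/n}$ bound is not enough to get past $n^{-1/2}$ in angle after dividing by quadratic curvature — wait, that would actually give $n^{-1/4}$. To reach $n^{-1/3}$ we need a \emph{localized} bound: restricting to $\theta$ with angular distance $\le r$ from $\bar\mu$, the relevant functions $\sign(X^\top\theta)$ differ from $\sign(X^\top\bar\mu)$ only on a ``wedge'' of probability $O(r)$ (using Assumption~\ref{ass:identifiable}, the bounded density near the origin, to upper bound the measure of the set where the two half-spaces disagree by something linear in $r$). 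This gives a variance proxy of order $r$, so by a Bernstein-type / Talagrand localized bound the fluctuation of $\lbinemp(\theta) - \lbinemp(\bar\mu) - (\lbin(\theta) - \lbin(\bar\mu))$ over the $r$-ball is $\tilde O(\sqrt{r \cdot d/n} + d/n)$.

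For step (ii), I would show $\lbin(\theta) - \lbin(\bar\mu) \ge c\, \|\theta - \bar\mu\|^2$ for $\theta$ near $\bar\mu$ (equivalently in the squared angle). Starting from the identity already derived in the proof of Theorem~\ref{thm:consistency}, $\lbin(\theta) - \lbin(\bar\mu) = 2\,\E_X[\,|{\rm ch}(X)|\,\mathbf 1(\sign(X^\top\theta) \ne \sign(X^\top\bar\mu))\,]$, I need two ingredients: the disagreement region between the half-spaces $\{X^\top\theta \ge 0\}$ and $\{X^\top\bar\mu \ge 0\}$ has $f_X$-measure $\gtrsim \|\theta - \bar\mu\|$ near the origin (again via Assumption~\ref{ass:identifiable}, lower bound on the density in a neighborhood of $0$ — this forces a genuinely linear-in-angle lower bound on the measure of the wedge), and $|{\rm ch}(X)| = |2\,\E_\beta[\sigma(X^\top\beta)] - 1| \gtrsim \|\mathrm{proj}\ \text{of } X \text{ onto relevant direction}\|$ for $X$ close to the decision boundary — i.e. $|{\rm ch}(X)|$ is itself of order (distance of $X$ to the hyperplane $\bar\mu^\perp$), which is of order $\|\theta-\bar\mu\|$ on the wedge. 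Multiplying a measure of order $r$ by an integrand of order $r$ yields the $\Theta(r^2)$ lower bound. This requires a smoothness/non-degeneracy condition on the law of $\beta$ (so that ${\rm ch}$ has nonzero derivative transverse to the boundary) — this is where the ``mild regularity on the distribution of $\beta$'' in the informal statement is consumed.

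The main obstacle is step (ii), the quadratic curvature lower bound — specifically, establishing the $\Theta(r)$ (two-sided) control on the $f_X$-measure of the disagreement wedge between two nearby half-spaces, \emph{uniformly} over the direction in which $\theta$ perturbs away from $\bar\mu$, and simultaneously showing $|{\rm ch}(X)|$ does not degenerate faster than linearly as $X$ approaches the decision hyperplane. Both facts hinge delicately on Assumption~\ref{ass:identifiable} being used both as an upper and a lower bound on $f_X$ near the origin, plus a regularity condition on $\beta$'s law; getting the constants and the neighborhood radius to be compatible with the localization radius $r$ in step (i) is the crux. Once the curvature exponent $2$ and the localized deviation exponent are pinned down, balancing $r^2 \asymp \sqrt{r\cdot d/n}$ gives $r \asymp (d/n)^{1/3}$, hence the claimed $\tilde O(n^{-1/3})$ rate (stated for the empirical risk, and equivalently for the angular error, in the formal Theorem~\ref{thm:finite}).
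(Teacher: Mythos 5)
Your proposal is correct and follows essentially the same route as the paper: a quadratic curvature lower bound on the excess population risk in the angle (driven by the density lower bound near the origin and the central-mass condition on $\eps$), a localized empirical-process bound over the angular ball with variance proxy linear in the angle (Dudley's entropy integral plus a Talagrand/Bousquet concentration step), and the balance $r^2 \asymp \sqrt{rd/n}$ yielding the $\tilde O(n^{-1/3})$ rate. The only ingredient you leave implicit is the peeling argument required because the localization radius is itself random (it equals the angle of the estimator), which is precisely where the $\log\log n$ factors in the paper's final bound arise.
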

% \Alicomment{@both I am not sure if $O(n^{-1/3})$ is considered fast.}

To formalize this claim, we define the empirical loss as $\lbinemp(\theta) \triangleq - \frac{1}{n} \cdot( \sum_{i=1}^n (2Y_i - 1) \cdot \sign(X_i^\top \theta))$, and the corresponding Sign estimator  $\mun \in \argmin_{\theta \in {\cal S}^{d-1}} \lbinemp(\theta)$.  Notice that the empirical loss is piecewise constant, making the Sign estimator highly degenerate. Nonetheless, as a first step, we establish that any arbitrary family of minimizers $\{\mun\}_{n\geq 1}$ of the empirical loss is consistent.

\begin{proposition}
   Under Assumptions~\ref{ass:symmetry} and~\ref{ass:identifiable}, the Sign estimator is consistent, i.e., we have $\prpar{\lim_{n \rightarrow +\infty}\mun = \mubar}=1$.
\end{proposition}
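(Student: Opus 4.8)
The plan is to establish consistency of the empirical Sign estimator via a uniform law of large numbers (ULLN) combined with the fact (from Corollary~\ref{ineq:consistency}) that the population risk $\lbin$ has a \emph{unique} minimizer $\mubar$ on ${\cal S}^{d-1}$. The strategy is a classical M-estimation argument (in the spirit of van der Vaart's Theorem~5.7), with the added subtlety that the loss is discontinuous and the parameter space is compact but non-convex.

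First I would argue that the function class $\{x \mapsto \sign(x^\top\theta) : \theta \in {\cal S}^{d-1}\}$ is a Vapnik--Chervonenkis (VC) class — it is contained in the class of homogeneous linear halfspaces in $\R^d$, which has VC dimension $d$. Multiplying by the bounded factor $(2Y-1) \in \{-1,+1\}$ preserves the relevant covering-number bounds, so the class of functions $(x,y)\mapsto (2y-1)\sign(x^\top\theta)$ is a uniformly bounded VC-subgraph class. Standard Glivenko--Cantelli-type results for VC classes then give $\sup_{\theta \in {\cal S}^{d-1}} |\lbinemp(\theta) - \lbin(\theta)| \to 0$ almost surely as $n \to \infty$. (One must be slightly careful that the supremum is measurable; this is handled by separability of the halfspace class, e.g.\ restricting $\theta$ to a countable dense subset, since $\sign(x^\top\theta)$ is, for $x$ off a measure-zero set, locally constant in $\theta$.)

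Next I would invoke the argmax/argmin continuity step. Let $\mun$ be any minimizer of $\lbinemp$. Fix $\eps > 0$ and consider the compact set $K_\eps = \{\theta \in {\cal S}^{d-1} : \|\theta - \mubar\| \geq \eps\}$. Because $\lbin$ is continuous on the compact set $K_\eps$ (continuity of $\lbin$ follows from Assumption~\ref{ass:identifiable}: $X$ has a density, so $\pr{X^\top\theta = 0} = 0$ for every $\theta$, hence $\theta \mapsto \E[(2Y-1)\sign(X^\top\theta)]$ is continuous by dominated convergence), it attains a minimum there, and by uniqueness of the population minimizer this minimum is strictly larger than $\lbin(\mubar)$: write $\eta \triangleq \min_{\theta \in K_\eps}\lbin(\theta) - \lbin(\mubar) > 0$. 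On the event that $\sup_\theta |\lbinemp(\theta) - \lbin(\theta)| < \eta/3$ (which holds for all large $n$, a.s., by the ULLN), we have $\lbinemp(\mun) \leq \lbinemp(\mubar) \leq \lbin(\mubar) + \eta/3$, while for any $\theta \in K_\eps$, $\lbinemp(\theta) \geq \lbin(\theta) - \eta/3 \geq \lbin(\mubar) + 2\eta/3$; hence $\mun \notin K_\eps$, i.e.\ $\|\mun - \mubar\| < \eps$. Since this holds eventually almost surely for every $\eps > 0$, we conclude $\mun \to \mubar$ almost surely, which is the claim.

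The main obstacle I anticipate is \emph{not} the ULLN itself (VC theory handles it cleanly) but the two measurability/continuity technicalities that the discontinuous loss forces on us: (a) ensuring the empirical supremum $\sup_\theta|\lbinemp - \lbin|$ is a genuine random variable despite the uncountable index and the discontinuity of $\theta\mapsto\sign(x^\top\theta)$, and (b) confirming $\lbin$ is continuous on all of ${\cal S}^{d-1}$ — both of which hinge essentially on Assumption~\ref{ass:identifiable} guaranteeing that the "bad" set $\{X^\top\theta = 0\}$ has probability zero for each $\theta$ and, more strongly, that the density being bounded below near the origin rules out any flat region of $\lbin$ that could spoil uniqueness. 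I would state these as short lemmas rather than belabor them inline. Everything else is the routine argmax-consistency template.
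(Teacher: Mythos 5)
Your proof is correct: the paper states this proposition without an explicit proof, and the argument it implicitly relies on is exactly the one you give --- a uniform law of large numbers over the VC class of homogeneous halfspaces combined with the uniqueness of the population minimizer from Corollary~\ref{ineq:consistency} and a standard Wald/argmin-consistency step on the compact sphere. Your handling of the two technicalities (continuity of $\lbin$ via $\pr{X^\top\theta=0}=0$ under Assumption~\ref{ass:identifiable}, and measurability of the supremum via separability) is sound and consistent with the empirical-process machinery the paper deploys in localized form for Theorem~\ref{thm:finite}.
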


% Following the consistency of the sign estimator under linearily parametrized models, we consider the empirical loss $\lbinemp(\theta) = - \frac{1}{n} \sum_{i=1}^n (2Y_i - 1) sign(X_i^\top \theta)$ and introduce the corresponding sign estimator $\hat{\mu}^{n,{\rm sign}} \in \argmin_{\theta \in {\cal S}^{d-1}} \lbinemp(\theta)$. We note that the empirical loss is constant by parts, and therefore it has multiple minimizers. The following theorem shows that all families of minimizers $\{\hat{\mu}^{n,{\rm sign}}\}_{n\geq 1}$ are consistent.
%\begin{theorem}
%    Under Assumptions~\ref{ass:symmetry} and~\ref{ass:identifiable}, the sign estimator is consistent, i.e., we have $\prpar{\lim_{n \rightarrow +\infty}\hat{\mu}^{n,{\rm sign}} = \mubar}=1$.
%\end{theorem}

We next turn to a finer sample complexity analysis. We seek to upper bound the angular loss $\alphan \triangleq \alpha(\mun,\mubar)$ between $\mun$ and $\mubar$ with high probability. We focus on the angular loss as it is easy to interpret when comparing two unit vectors. The angle is connected to the $\ell_2$-risk through the identity $\lVert\theta-\theta'\rVert^2 = 2(1-\cos(\alpha(\theta,\theta')))$.

% Recall that $\mun\in \argmin_{\theta\in {\cal S}^{d-1}}\lbinemp(\theta)$. 

We make use of two assumptions to establish local strong convexity of the population risk. First, we require $X$ to have bounded density, bounded support, and have density bounded away from zero in a neighborhood of the origin, strengthening Assumption~\ref{ass:identifiable}.\footnote{We remark that our analysis can handle a standard sub-gaussian tail assumption (instead of boundedness) but it degrades our sample complexity bound by instance-dependent factors.}
\begin{assumption} \label{ass:X}
       Suppose that $X$ is a continuous random variable with density $f_X$ with respect to the Lebesgue measure. Moreover, there exist $ R,r,C_X,\rho_X>0$ such that $\forall x \in {\cal X}: f_X(x) \leq C_X$, $\lVert X\rVert \leq R$ almost surely, and $\forall x \in B(0,r): f_X(x) \geq \rho_X$.
\end{assumption}
% \Alicomment{@Aymane: I removed `symmetry'}

Additionally, we require a uniform central-mass condition: every projection of the noise vector $\eps = \beta -\bar{\beta}$ places at least a fixed probability on a small neighborhood of zero.
\begin{assumption} \label{ass:eps}
Suppose that we have $\rho_\eps \triangleq \inf_{u \in {\cal S}^{d-1}}\P_\eps(|(u^\top \eps| \leq \frac{\lVert\bbar\rVert}{2\sqrt{2}})>0$. 
\end{assumption}
This assumption is easily satisfied; for example, it holds for  Gaussian mixed logits.

% \begin{assumption}\label{ass:eps}
% Let \(\epsilon \triangleq \beta-\bar\beta\). 
% There exist constants \(\tau_0>0\) and \(\rho_0\in(0,1]\) such that
% \[
% \rho_\epsilon(\tau_0)
% \triangleq \inf_{u\in \mathbb S^{d-1}} \Pr\!\big(|u^\top \epsilon|\le \tau_0\big)
% \;\ge\; \rho_0.
% \]
% In our proofs, we require \(\tau_0=\|\bar\beta\|/(2\sqrt{2})\), but any fixed \(\tau_0\asymp \|\bar\beta\|\) would suffice.
% \end{assumption}

The next theorem states our finite-sample bound on the angular loss between $\mun$ and $\bar\mu$, yielding a cube-root convergence rate.  

\begin{theorem} \label{thm:finite}
     Fix $\alpha_0 \in [0,\frac{\pi}{4}]$ and $\delta \in(0,\frac{1}{2})$. Define $n_0$ as the universal constant such that with probability at least $1 - \delta$, we have $\alpha(\mun,\mubar) \leq \alpha_0$ for every $n\geq n_0$. If Assumptions~\ref{ass:symmetry}, \ref{ass:X}, and~\ref{ass:eps} hold, then there exist $ K,C_0 > 0$ independent of $n,\delta$ such that with probability at least $1 - 2 \delta$, for every $n \geq n_0$
\begin{equation}
   \alpha(\mun,\mubar) \leq \left(\frac{K}{C_0}\right)^\frac{2}{3} \left(\frac{d-1 + \log(\frac{\log(n)}{\delta})}{n}\right)^\frac{1}{3} + \sqrt{\frac{\log(\frac{\log(n)}{\delta})}{n}} \ .
\end{equation}
Specifically, we have $K = \Theta(\log(\frac{A^d}{C_XVol(B_{d-2}(R))R^2})\sqrt{d} + \frac{2 C_X Vol(B_{d-2}(R))R^2}{\sqrt{d}})$ where $A$ is a universal constant and $C_0 =\Theta( \lVert\bbar\rVert\sigma'(\frac{\lVert\bbar\rVert r}{\sqrt{2}})\rho_\eps \rho_XVol(B_{d-2}(r)) r^2)$.
\end{theorem}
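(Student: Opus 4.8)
The plan is to follow the standard template for $M$-estimators with non-standard (cube-root) rates — the ``local quadratic lower bound on the population risk'' plus ``local modulus of continuity of the empirical process'' argument à la van der Vaart–Wellner / Kim–Pollard. Concretely, write the centered population excess risk as $\Delta(\theta) \triangleq \lbin(\theta) - \lbin(\mubar)$ and the empirical excess risk as $\Delta_n(\theta) \triangleq \lbinemp(\theta) - \lbinemp(\mubar)$, and let $\nu_n(\theta) \triangleq \Delta_n(\theta) - \Delta(\theta)$ be the empirical process. Since $\mun$ minimizes $\lbinemp$, we have $\Delta_n(\mun) \le 0$, hence $\Delta(\mun) \le -\nu_n(\mun) \le \sup_{\theta : \alpha(\theta,\mubar) \le \alpha_0} |\nu_n(\theta)|$ on the event (of probability $\ge 1-\delta$, valid for $n \ge n_0$) that $\mun$ lies in the $\alpha_0$-cap. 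So the two ingredients are: (a) a \emph{lower bound} $\Delta(\theta) \ge C_0\, \alpha(\theta,\mubar)^2$ on the cap $\alpha(\theta,\mubar) \le \alpha_0 \le \pi/4$; and (b) a \emph{peeling/chaining bound} on the local fluctuations of $\nu_n$ over caps of radius $t$, of the form $\E \sup_{\alpha(\theta,\mubar)\le t}|\nu_n(\theta)| \lesssim K\sqrt{t\,(d-1)/n}$, upgraded to a high-probability statement via a Talagrand/bounded-differences inequality (the summands $(2Y_i-1)\sign(X_i^\top\theta)$ are bounded by $1$, which is why the $\sqrt{\log(\log n/\delta)/n}$ additive term appears and why Bernstein-type variance control is available). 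Combining (a) and (b) with a standard ``basic inequality + peeling over dyadic shells $2^{j} r_n \le \alpha(\mun,\mubar) \le 2^{j+1} r_n$'' argument yields $C_0\,\alpha_n^2 \lesssim K\sqrt{\alpha_n (d-1+\log(\log n/\delta))/n} + \sqrt{\log(\log n /\delta)/n}$, and solving this self-bounding inequality for $\alpha_n$ (the dominant balance $C_0 \alpha_n^2 \asymp K \sqrt{\alpha_n (d-1)/n}$ gives $\alpha_n \asymp (K/C_0)^{2/3}((d-1)/n)^{1/3}$) produces exactly the stated bound.

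For ingredient (a), I would first compute the population excess risk explicitly. From the proof of Theorem~\ref{thm:consistency}, $\Delta(\theta) = 2\,\E_X[\,|{\rm ch}(X)|\,\mathbf{1}\{\sign(X^\top\theta)\ne\sign(X^\top\mubar)\}\,]$ where ${\rm ch}(X) = 2\,\E_\beta[\sigma(X^\top\beta)] - 1$, so the indicator is supported on the ``wedge'' $W_\theta = \{x : \sign(x^\top\theta)\ne\sign(x^\top\mubar)\}$ between the two hyperplanes, whose angular opening is $\alpha(\theta,\mubar)$. On this wedge I need a lower bound $|{\rm ch}(X)| \gtrsim |X^\top \mubar|$ near the $\mubar$-hyperplane: since ${\rm ch}(x) = 2g(x^\top\bbar) - 1$ (in the notation $g(t)=\E_\eps[\sigma(t + (\text{proj of }\eps))]$... more carefully $\E_\beta\sigma(x^\top\beta) = \E_\eps \sigma(x^\top\bbar + x^\top\eps)$), and $g$ is increasing with $g(0)=1/2$, a Taylor/mean-value argument controlled by Assumption~\ref{ass:eps} (which forces $g$ to have slope bounded below near $0$, quantitatively via $\sigma'(\lVert\bbar\rVert r/\sqrt 2)\rho_\eps$) gives $|{\rm ch}(x)| \ge c\, |x^\top\bbar|$ for $x$ in a neighborhood of the origin. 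Then integrating $|x^\top\bbar|$ against $f_X$ over the portion of the wedge inside $B(0,r)$, using the lower bound $f_X \ge \rho_X$ there (Assumption~\ref{ass:X}), and the elementary geometric fact that the wedge of opening $\alpha$ captures a set on which $|x^\top\bbar|$ integrates to $\Theta(\lVert\bbar\rVert\,\mathrm{Vol}(B_{d-2}(r))\,r^2\,\alpha^2)$ for small $\alpha$ (the extra $\alpha^2$ comes from: the wedge's angular width is $\alpha$, and $|x^\top\bbar|$ on the wedge is itself $O(\alpha)$), yields $\Delta(\theta)\ge C_0\,\alpha(\theta,\mubar)^2$ with $C_0$ exactly as stated. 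This quadratic-not-linear growth is the source of the cube-root (rather than square-root) rate.

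For ingredient (b), the relevant function class is $\mathcal{F} = \{x \mapsto (2y-1)\sign(x^\top\theta) : \theta \in \mathcal{S}^{d-1}\}$, which is a VC-subgraph class of dimension $O(d)$ (it's built from halfspaces), so its covering numbers are polynomial in $1/\eps$ with exponent $O(d)$; localizing to the cap $\alpha(\theta,\mubar)\le t$, the $L_2(P)$-diameter of the localized class is $O(\sqrt t)$ because $\|\mathbf{1}_{W_\theta}-\mathbf{1}_{W_{\theta'}}\|_{L_1}$ and hence the $L_2$ distance between the corresponding sign functions scales like the symmetric-difference wedge measure, which is $O(\alpha(\theta,\theta'))$ under bounded density $C_X$ and bounded support $R$ — this is where the $C_X\mathrm{Vol}(B_{d-2}(R))R^2$ factor inside $K$ enters. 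Dudley's entropy integral then gives $\E\sup_{\alpha\le t}|\nu_n| \lesssim \sqrt{(d-1)\log(1/\text{(scale)})}\cdot\sqrt{t/n}$, i.e. the $K\sqrt{t(d-1+\dots)/n}$ shape, and a bounded-differences / Bousquet inequality (all summands in $[-1,1]$, localized variance $\le$ diameter$^2 = O(t)$) converts the expectation bound to the high-probability bound with the stated $\log(\log n/\delta)$ dependence. I expect ingredient (a) — specifically the quantitative lower bound $|{\rm ch}(x)|\gtrsim|x^\top\bbar|$ uniformly over the cap, carefully tracking the constant $C_0$ through the geometry of the wedge and Assumption~\ref{ass:eps} — to be the main obstacle; the empirical-process side (b) is comparatively routine VC/localization machinery, with the only care needed being the passage from $L_1$ wedge-measure bounds to $L_2$ control and the bookkeeping of the $d$-dependent constants. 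A final minor step is verifying that $n_0$ (the burn-in after which $\mun$ is in the $\alpha_0$-cap) is finite, which follows from the consistency Proposition combined with a uniform deviation bound of the same VC flavor, so it contributes no new ideas.
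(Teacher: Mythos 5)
Your plan is correct and follows essentially the same route as the paper's proof: a local quadratic (curvature) lower bound $\cL(\theta)-\cL(\mubar)\ge C_0\,\alpha(\theta,\mubar)^2$ obtained by bounding $|{\rm ch}(x)|\gtrsim|x^\top\bbar|$ near the origin and integrating over the disagreement wedge, combined with a localized Dudley entropy bound and Bousquet's inequality on spherical caps, dyadic peeling, and solving the resulting fixed-point inequality. The only cosmetic difference is that you invoke VC-subgraph covering numbers for halfspaces where the paper bounds spherical-cap packing numbers directly; both yield the same $(d-1)\log(\cdot)$ entropy and the same final rate.
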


A key step in our analysis is to establish local curvature of the population risk around $\bar\mu$; the proof ideas may be of independent interest. Our analysis uses a localized law of large numbers to control the empirical excess risk and obtain sharp finite sample bounds.

% \Alicomment{Still editing}
To our knowledge, this is among the first finite-sample recovery guarantees for the aggregate parameter $\bar{\mu}$ in general mixtures of generalized linear models. In contrast, the nonparametric random-coefficients literature provides rates of the form $\Theta(n^{-O({1}/{d})})$ where the exponent typically deteriorates with the dimension $d$ (e.g.,~\cite{gautier2013nonparametric,fox2016simple}). While polynomial sample complexity rates are achievable for finite mixtures, they typically require linear independence among the utility vectors~\citep{ammar2014s,oh2014learning,sedghi2016provable},  rendering them inapplicable in our setting.
Of course, the improvement is possible because our estimator sidesteps the need to learn the mixture components, directly targeting $\bar{\mu}$ by exploiting the symmetry structure.

\paragraph{Computational implementation.} Despite the conceptual simplicity of our estimator, the 0-1 loss is non-differentiable and NP-hard in general \citep{ben-david_difficulty_2003}. That said, binary classification with 0-1 loss is a widely studied learning task, with practical implementations using convex surrogates, smooth relaxations of the sign function \citep{bartlett2006classification,bao2020softclassification}, or integer programming. Our implementations approximates the sign function in the 0-1 loss through the simple point-wise smooth function $\sign(t) \approx 2\sigma(\lambda t) - 1 $ where $\lambda$ is a temperature parameter that we anneal during the training.

Importantly, while using a convex surrogate function (i.e., logistic or hinge loss) may seem natural, it would essentially revert to the RLHF estimator (which uses logistic loss).  The use of convex surrogates is typically justified under the assumption {\em classification calibration}---that the model class is realizable~\citep{bartlett2006classification,bach2024learning}---an assumption that clearly fails in our setting since heterogeneity causes the RLHF estimator to be misspecified.
% While classification calibration, meaning minimizing a surrogate convex loss implies minimizing the classification loss assumes a realizable model class \cite{bartlett2006classification} \cite{bach2024learning}, it can fail under misspecification which is the case in RLHF under heterogeneity and the reason why BT-MLE fails. 

\section{Experiments}

This section conducts an empirical study of the Sign estimator.
% %The agussian experiments dont add value in their current form.
% under high dimensional regimes where the dimension is large and not vanishingly small compared to the number of samples \cite{candes2020phase}; see appendix \ref{.} \Aymanecomment{Add ref} for synthetic experiments of the bias of the estimators under the theoretical assumptions and classical regimes $\frac{d}{n} \lim$. 
The data for our study is itself of note: it consists of preferences obtained from a panel of 200 synthetic personas across 43,834 pairs of alternative answers. The questions and alternatives are drawn from the Anthropic Helpfulness training corpus \cite{RLHFAnthropic1}, while the synthetic personas were carefully selected from a set of over 2,500 `digital twins' curated by \cite{tianyi2k500}. Each digital twin is effectively constructed from an approximately 500-question interview with a human interviewee. 
% We emphasize later this setup is `real world' and does not necessarily hew to the Assumptions we made for our theoretical results. 
Our main findings are the following:

% \Aymanecomment{We don't make an asymptotic point here either.} \Alicomment{yep, bias neither. call it estimation error? } 
{\textbf{Estimation error: }} The RLHF estimator has signfincant bias. This bias is strongly mitigated by the Sign estimator: angular estimation error with the true mean utility vector is reduced by about $35 \%$ (from $63^\circ$ to $41^\circ$); while disagreement rates with optimal choices under the true mean utility reduce reduce by about $40 \%$.

% We find that the standard RLHF estimator incurs significant estimation error in the task of learning population mean utilities achieving an angular estimation error larger than $60^\circ$.
% % achieving an agreement rate of just $76 \%$ with the true population mean utilities. 
% Despite the data not satisfying the symmetry assumptions needed for our theoretical guarantees, the sign estimator largely mitigates this bias, achieving an angle of on $40^\circ$.   

% \Alicomment{I commented out variance: we don't make this point} 
% \textbf{Variance: }We show that the gains from the faster $\sqrt{n}$ rates achieved by the RLHF estimator are counterbalanced by its bias across all sample size regimes. The behavior of the sign estimator in both synthetic and `real' preference data hews to our theoretically predicted  $n^{1/3}$ rate. The latter rates are practical: in the study above, the sign estimator achieved $93\%$ agreement with under $200$k preferences. 

\textbf{Comparative Statics: }Since our real world dataset of alterative answers itself limits observed heterogeneity, we artificially increase heterogeneity in the experiments above, and show that this only serves to increase the relative improvement of the Sign estimator.
% We use a synthetic setup to understand the impact of the extent of heterogeneity and the geometry of $\cal X$ on the performance of the RLHF estimator and the relative merits of the Sign estimator. We show that increasing heterogeneity only amplifies the issues of bias and the relative merits of the sign estimator. \Alicomment{Consider dropping the following comment at the end} The conditioning of ${\cal X}$ also exacerbates bias and amplifies relative merits. 

\textbf{Panel Data Heuristics: }Panel data heuristics would seek to leverage the identity of users (PII) to learn mean utility: data not used by the Sign estimator or typical reward learning pipelines. As a representative of this class, we implement an EM algorithm proposed recently for reward learning. Despite its simplicity (a drop-in replacement for CE loss in a typical reward learning pipeline) and not using PII, the Sign estimator dominates. 

% While not the focus of our setup, we observe that panel data heuristics for estimating mixture models (specifically, the EM algorithm) are not more effective than our Sign estimator, despite requiring more complex implementation.

% heuristics that attempt to learn personalized utilities from (sparse) panel data also benefit dramatically from using the sign estimator as a shrinkage target (or `grand mean'). 

% \subsection{Set-up}

% \Alicomment{I think the literature uses digital twins (persona refers to outward info but they use more than that)}

\textbf{Set-up: }Our alternatives ${\cal X}$ consist of $43,834$ prompt-completions pairs from Anthropic's Helpfulness training corpus  \citep{RLHFAnthropic1}. Our population consists of $200$ {\em personas} chosen from a set of $2,500$ digital twins~\citep{tianyi2k500}, which reproduce the economic behaviors and preferences of real individuals from a representative US panel. Each persona consists of answers to a battery of 500 questions from a distinct human interviewee. \cite{tianyi2k500} showed that when prompted with a subset of these answers, language models could predict the interviewee's responses to held-out questions with approximately 88\% accuracy; \cite{park2024generative} report similar findings. For each persona and each pair $(x_1,x_2)\in {\cal X}$, we prompt GPT-4o-mini with that persona's survey summary (see Appendix \ref{section:persona-summary}) to extract the persona's exact probability of preferring $x_1$. The dataset exhibits interesting and realistic heterogeneity: Figure~\ref{fig:tv_vs_persona} (left panel) shows a histogram of difference in preferences across a randomly chosen prompt and pair of personas; the mean difference is $10$\%. 
% \footnote{As one indirect point of comparison the agreement rate of a panel of $1,000$ users with the OpenAI Model Spec in a recent study \cite{OpenAI} was $\sim80$\%. \Alicomment{I don't understand and I don't know this paper}}. 
The right panel shows self-reported demographic features of the $200$ corresponding individuals.

\textbf{Ground-truth Utilities: }Since we have access to the actual preference probabilities—rather than merely sampled preferences—for each persona, we can effectively recover each persona's implicit utility vector up to the choice of origin. We choose the function class $u(x;\beta^k) = \phi(x)^\top \beta^k$ where the feature map $\phi(\cdot) \in {\bb R}^{2880}$ represents features from \texttt{gpt-oss-20b}'s final hidden layer and $\beta^k \in {\bb R}^{2880}$ is persona $k$'s utility vector. This representation provides a good fit to the observed choice probabilities: the Jensen-Shannon divergence between the probabilities yielded by this linear utility model and the observed probabilities on a holdout set is $0.08$ (on a scale of $0$ to $\log 2$). In what follows, we take ${\cal B} = \{\beta^k\}_{k\in [200]}$ to be the set of these learned utility vectors, and treat the uniform distribution over this set as our ground truth distribution of users.  

\textbf{Preference Datasets:} 
% We construct a variety of preference datasets in our experiments.  
Our source dataset \{$(X_1,X_2)_{ij},Y_{ij}: i\in [n_u], j\in [n_p]\}$ is generated as follows. We sample $n_u=4000$ users $\beta_1,\ldots,\beta_{n_u}$ uniformly from $\cal B$ with replacement. For each user $\beta_i$, we draw $n_{p}=5$ prompt-completion pairs from ${\cal X}$ without replacement, where $(X_1,X_2)_{ij}$ denotes the $j$-th pair shown to user $i$. We then generate the labels $Y_{ij}$ of user $\beta_i$'s preferred alternative in the $j$-th pair. 
% We will take random subsamples of varying training size $n$.

% \begin{table}[ht]
% \centering
% \begin{tabular}{|l | l | l | l | c |}
% \hline
% Scale & 1 & 4 & 8 & 12 \\
% \hline
%  region & South (40.00\%) & Midwest (22.50\%) & \dots \\
% Gender & Male (56.00\%) & Female (44.00\%) & \\
% Age & 50--64 (34.50\%) & 30--49 (34.00\%) & \dots \\
% Education level & College graduate/some postgrad (34.00\%) & Postgraduate (22.50\%) & \dots \\
% Race & White (68.50\%) & Black (13.00\%) & \dots \\
% Citizen of the US & Yes (98.50\%) & No (1.50\%) & \\
% Marital status & Married (40.00\%) & Never been married (33.00\%) & \dots \\
% Religion & Protestant (24.50\%) & Roman Catholic (21.50\%) & \dots \\
% Religious attendance & Never (33.50\%) & Seldom (26.00\%) & \dots \\
% Political affiliation & Democrat (40.00\%) & Independent (31.50\%) & \dots \\
% Income & \$100{,}000 or more (33.00\%) & \$30{,}000--\$50{,}000 (20.00\%) & \dots \\
% Political views & Moderate (27.50\%) & Liberal (24.50\%) & \dots \\
% Employment status & Full-time employment (43.50\%) & Part-time employment (14.00\%) & \dots \\
% \hline
% \end{tabular}
% \end{table}
\subsection{Results} \label{subsec:real_comparison}

\begin{figure}[ht]
    \centering
    \begin{subfigure}{0.48\linewidth}
        \centering
        \includegraphics[width=\linewidth]{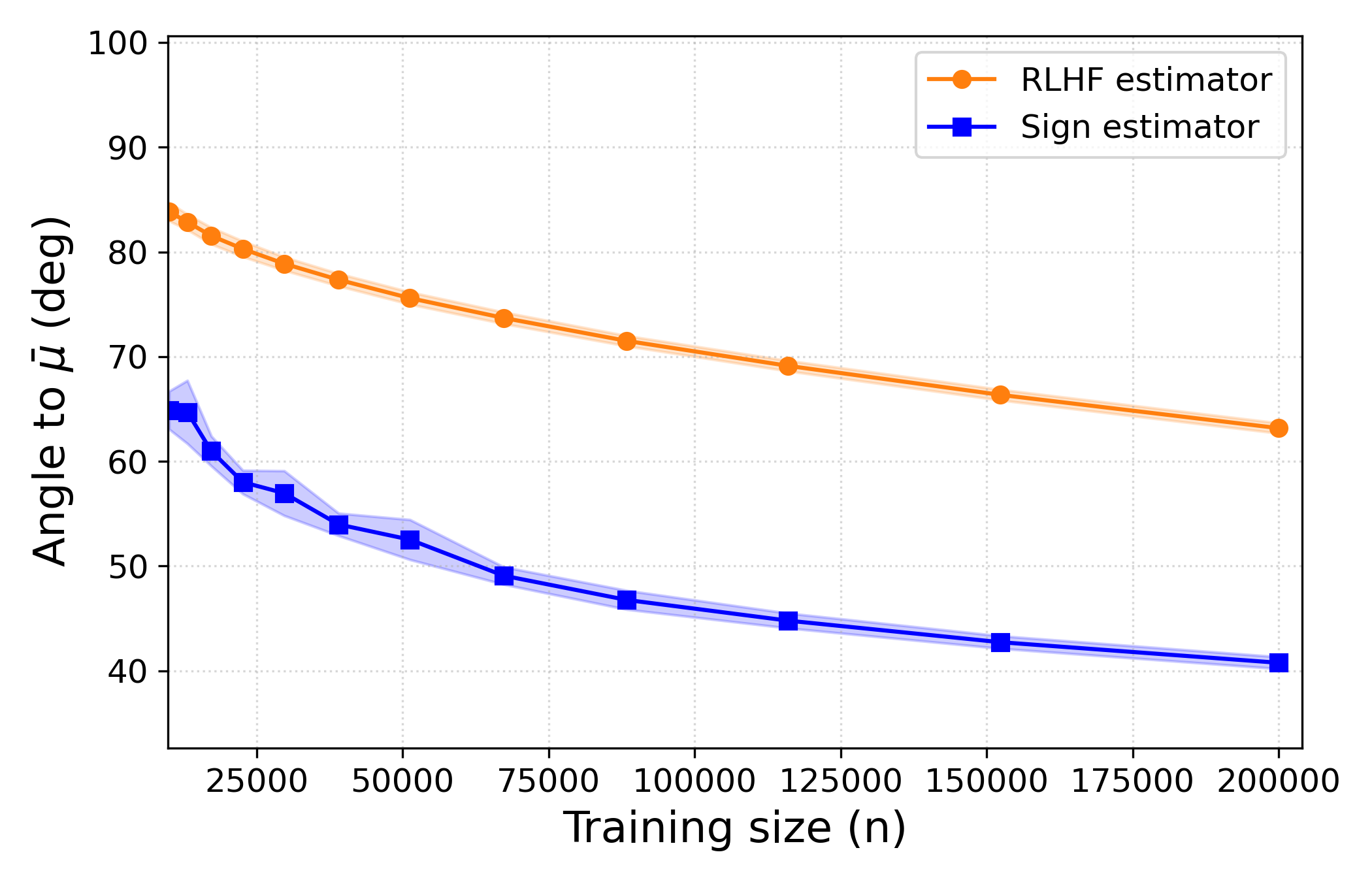}
        % \caption{Angular estimation}
        \label{fig:angle-real}
    \end{subfigure}
    \hfill
    \begin{subfigure}{0.48\linewidth}
        \centering
        \includegraphics[width=\linewidth]{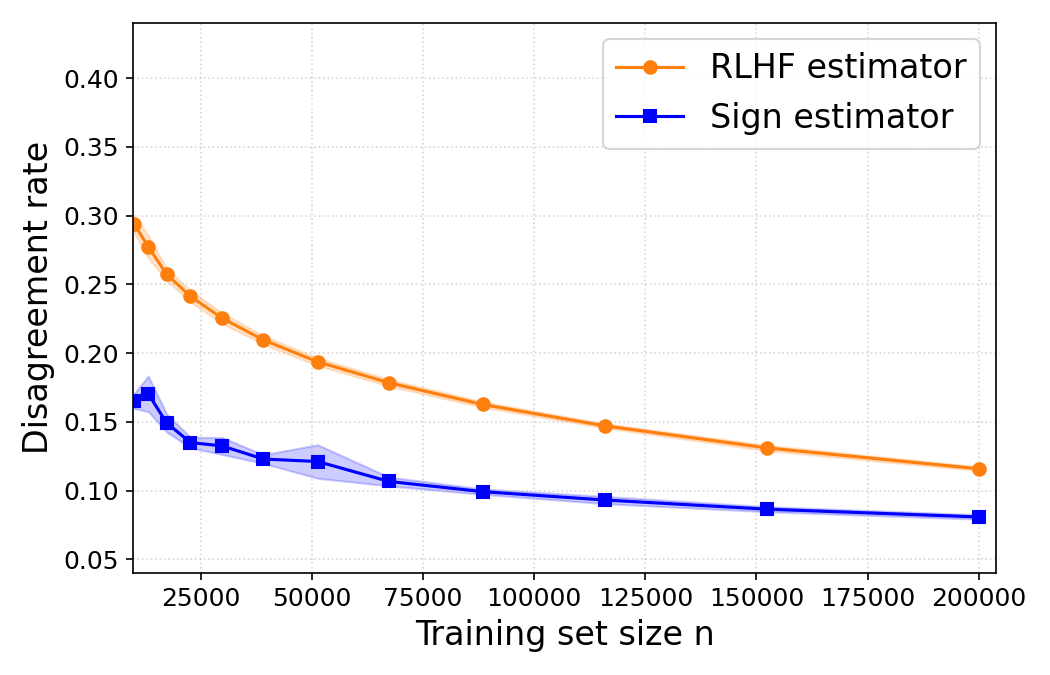}
        % \caption{Disagreement rate }
        \label{fig:disagrament-real}
    \end{subfigure}
    \caption{\small Sign Estimator reduces angular estimaton error and disagreement rates by $\sim 40\%$. Left: Angle error with true mean. Right: Disagreement rate with true mean utility. }
    \label{fig:main-comparison}
\end{figure}
\textbf{RLHF vs Sign Estimator on Real Preferences: } Figure~\ref{fig:main-comparison} compares the RLHF and Sign estimators as we vary the train data size $n$. The left panel reports the angular loss $\alpha(\hat{\mu},\bar\mu)$ for each estimator $\hat{\mu}$ while the right panel reports the rate of disagreements between $\mubar$ and the estimator across {\em all} 43,834 questions in the data. 
% on a hold-out dataset. 
The Sign estimator achieves significant improvement over the RLHF approach, reducing angular error from approximately $63^\circ$ to $41^\circ$ for a plausible training size of $n = 200{,}000$. This translates to a striking $40\%$ reduction in disagreement rates from roughly $13\%$ to less than $8\%$. We emphasize that the setup here does not satisfy the Assumptions we made for our theoretical development illustrating robustness. 
% Improvements on both metrics indicate higher quality alignment, demonstrating not only better adherence to population preferences in-distribution, but more importantly, a more accurate recovery of aggregate preferences $\bar{\mu} = \bar{\beta}/\lVert\bar{\beta}\rVert$, and thus better generalization to out-of-distribution prompts.
% These experiments suggest a large potential upside from improved alignment methods. The bias of standard RLHF is particularly salient, with an angle greater than $60^\circ$ (compared to $90^\circ$ for a uniformly random vector).  We emphasize that real data may not satisfy our theoretical sufficient conditions for recovery. In particular, $\mathcal{B}$ is a discrete set in a high-dimensional space and thus likely violates the symmetry assumption. Nevertheless, the sign estimator's empirical performance suggests it is robust to deviations from the assumptions required by our analysis.
As an important set of comparative statics, we conduct additional synthetic experiments where we raise the level of heterogeneity in preferences by scaling $\beta_i - \bar \beta$; see Appendix \ref{app: heterogeneity} for details. We find that increasing heterogeneity amplifies the performance gap between the two methods.

\paragraph{\textbf{Comparison with EM:}}
Our methods so far treat all pairwise comparisons independently, ignoring the panel data structure---that each of the $n_u$ users annotates $n_p$ prompt-completion pairs. By pooling all user data, our approach maintains compatibility with standard RLHF pipelines, requiring only a modified loss function rather than architectural changes. In contrast, panel estimation methods leverage individual-level information to explicitly model users' heterogeneous preferences and seek to recover the distribution of $\beta$s. We use an EM algorithm, proposed for RLHF by \citet[Alg.~2]{chakraborty2024maxmin}.
% \footnote{We note that \citet{chakraborty2024maxmin} propose a max-min RLHF alignment procedure to maximize the minimum reward over different user segments. That said, their EM method yields an estimate of $\bar{\beta}$.} 
The algorithm alternates between assigning users to one of $K$ `homogeneous' segments (E-step) and fitting a Bradley-Terry reward model within each segment (M-step); see Appendix~\ref{app:EM}. 

\begin{figure}[H]
    \centering
    \begin{subfigure}{0.48\linewidth}
        \centering
        \includegraphics[width=\linewidth]{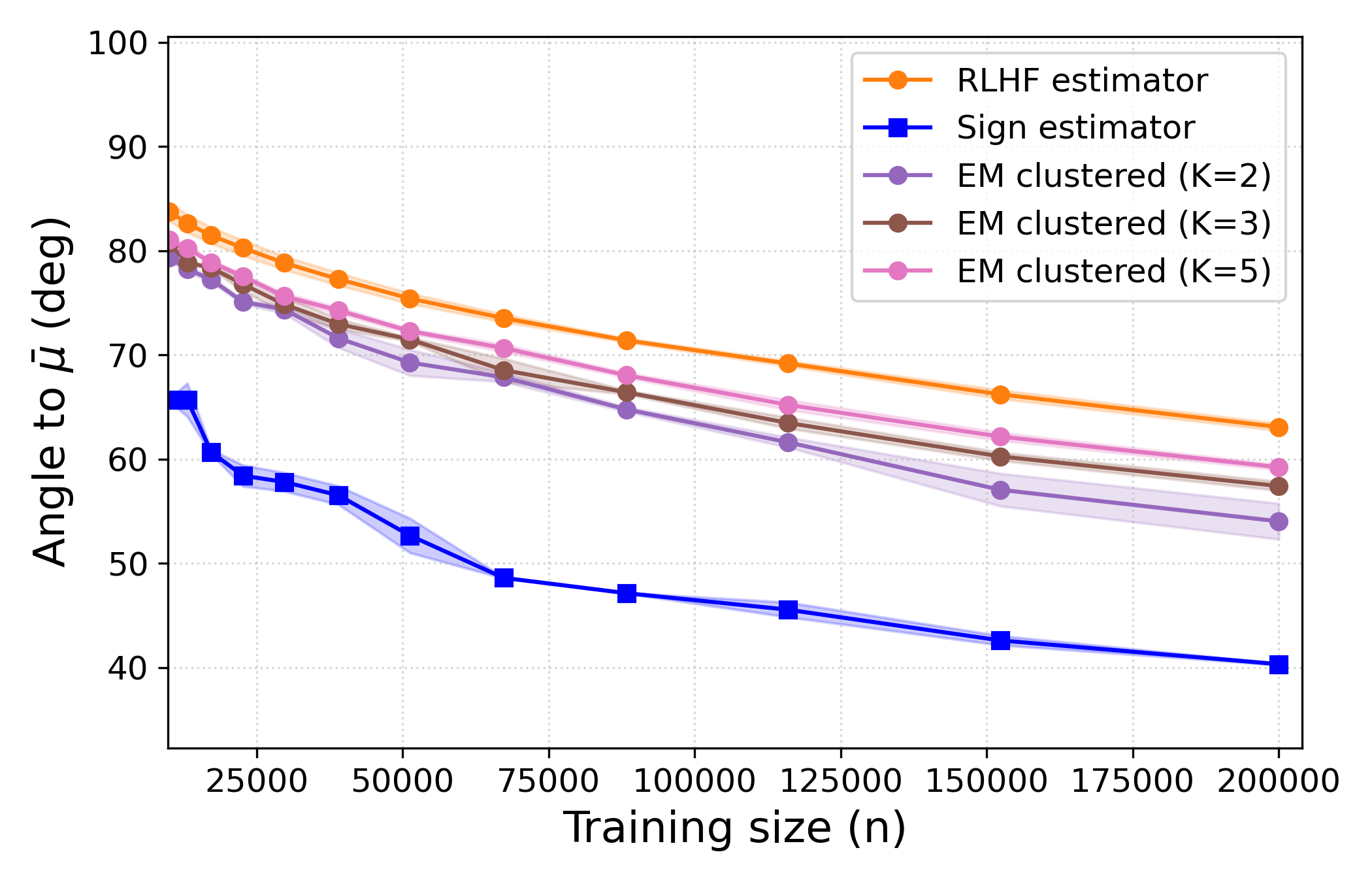}
        % \caption{Angular estimation error }
        \label{fig:angle-real}
    \end{subfigure}
    \hfill
    \begin{subfigure}{0.48\linewidth}
        \centering
        \includegraphics[width=\linewidth]{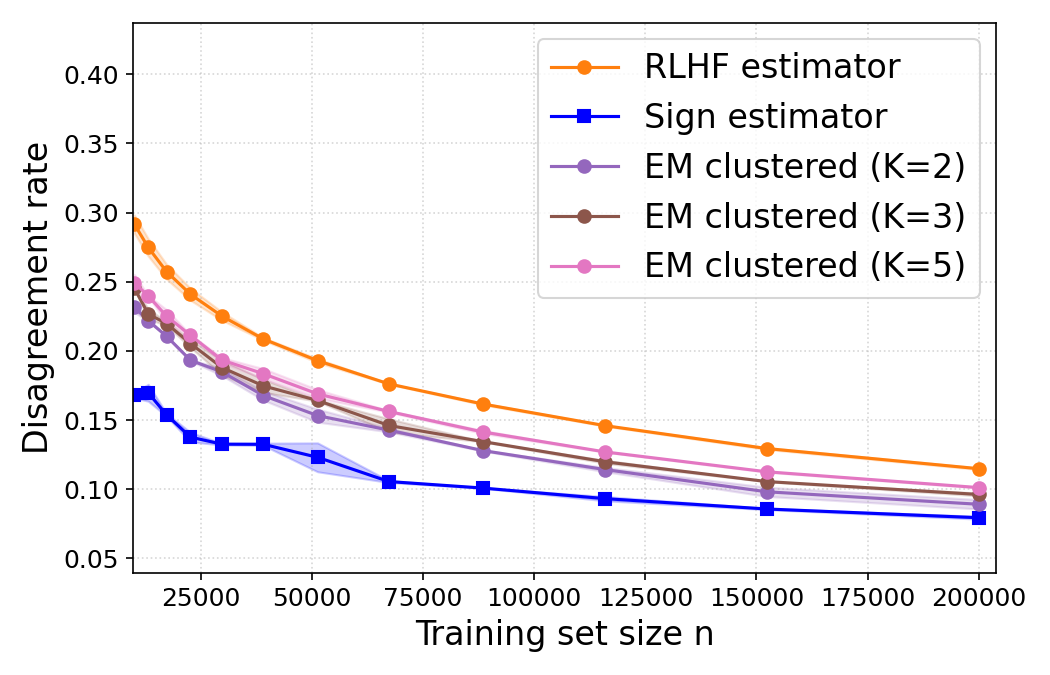}
        % \caption{Disagreement rate }
        \label{fig:disagrament-real}
    \end{subfigure}
    \caption{\small RLHF, EM vs Sign. Angular Estimation error (left) and disagreement rate (right)}
    \label{fig:em-comparison}
\end{figure}
When $K=1$, the approach reduces to the standard RLHF estimator $\betanaive$, where we pool the data from all personas, whereas $K=200$ fits one reward model $\hat{\beta}_i$ on each persona $\beta_i$'s preference data. In this section, we explore intermediate values of $K\in \{2,3,5\}$. 

Figure~\ref{fig:em-comparison} reports our performance metrics in this context. We see that the EM algorithm with $K = 2$ achieves improvements over the RLHF estimator---which can be attributed to better capturing heterogeneity---but performance degrades for $K\in \{3,5\}$---suggesting a variance counter-effect due to segmentation of the user sample, or possibly convergence of the EM algorithm to spurious stationary points. The Sign estimator still emerges as the most effective method, although it ignores the panel information and maintains a far simpler and less computationally intensive implementation.

\newpage
\bibliography{biblio}
\appendix
\newpage
\section{Additional Experiments}\label{app: heterogeneity}
We first note the heterogeneity in the selected population through Figure \ref{fig:tv_vs_persona}

\begin{figure}[h]
  \centering
  \begin{minipage}[c]{0.49\linewidth}
    \centering
    \includegraphics[width=\linewidth]{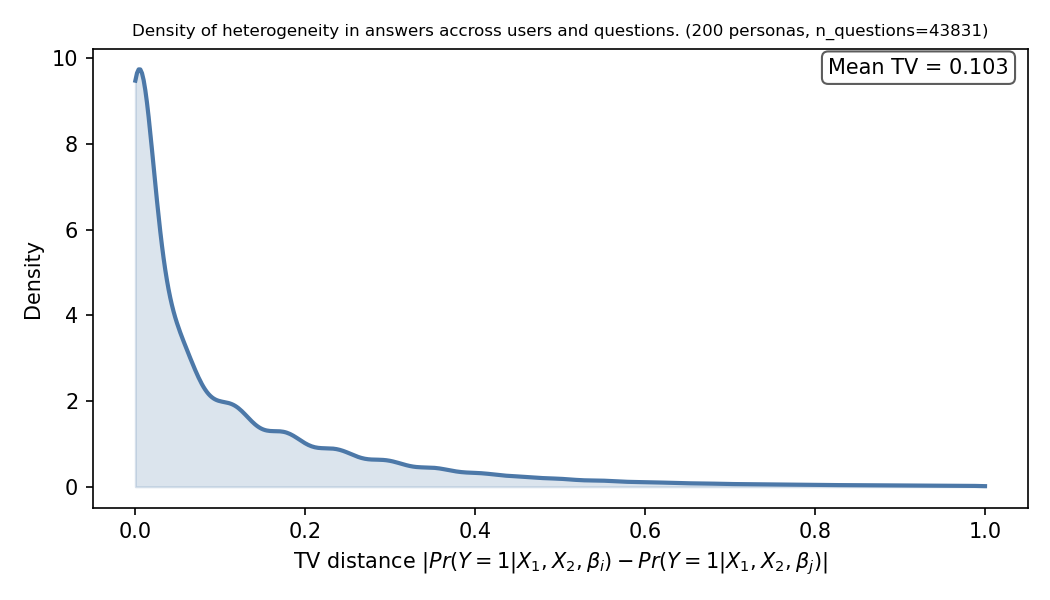}
    \label{fig:tvplot}
  \end{minipage}
  \hfill
  \begin{minipage}[c]{0.49\linewidth}
    \centering
    % --- Persona 2 survey summary ---
    \begin{personaBox}[personaA]{Persona Demographics}
      {\figtiny
        \textbf{Region}: South ($40.0\%$); Midwest ($22.5\%$) \dots \newline
        \textbf{Gender}: Male ($56.0\%$); Female ($44.0\%$) \dots \newline
        \textbf{Age}: 50-64 ($34.5\%$); 30-49 ($34.0\%$) \dots \newline
        \textbf{Education}: College ($34.0\%$); Postgrad ($22.5\%$) \newline
        \textbf{Race}: White ($68.5\%$); Black ($13.0\%$) \dots \newline
        \textbf{Marital}: Married ($40.0\%$); Never ($33.0\%$) \newline
        \textbf{Religion}: Protestant ($24\%$); Catholic ($21.5\%$) \newline
        \textbf{Income}: $\$100$k ($33.0\%$); $\$30\text{--}50$k ($20.0\%$) \dots \newline
        \textbf{Views}: Moderate ($27.5\%$); Liberal ($24.5\%$) \dots \newline
        \textbf{Employment}: FT ($43.5\%$); PT ($14.0\%$) \dots
      }
    \end{personaBox}
  \end{minipage}

  \caption{(left) Heterogeneity histogram and (right) Persona Demographics .}
  \label{fig:tv_vs_persona}
\end{figure}

We show in Figure \ref{fig:variance} and Table \ref{tab:scale-snr-angle-disagree} the effect of scaling the variance in the users utility functions and its impact on the estimation risk of both the RLHF and Sign estimators.
\begin{table}[H]
\centering
\caption{Increased Heterogeneity Increases Relative Improvement: Coefficient of Variation (CV) of $\beta$, angle error, and disagreement for two estimators.}
\label{tab:scale-snr-angle-disagree}
\begin{tabular}{
    l
    S[table-format=2.1] % SNR
    S[table-format=2.2] % Angle A
    S[table-format=2.2] % Angle B
    S[table-format=2.2] % Disagr A
    S[table-format=2.2] % Disagr B
    S[table-format=2.2] % delta
}
\toprule
\multicolumn{1}{c}{\textbf{Scale}} &
\multicolumn{1}{c}{\textbf{CV}} &
\multicolumn{2}{c}{\textbf{$\alpha(\hat \mu, \bar \mu)$}} &
\multicolumn{2}{c}{\textbf{Disagreement (\%)}} &
\multicolumn{1}{c}{\textbf{$\delta \%$}}\\
% \cmidrule(lr){2-2}
\cmidrule(lr){3-4}\cmidrule(lr){5-6}
& & {\EstimatorA} & {\EstimatorB} & {\EstimatorA} & {\EstimatorB} \\
\midrule
$1.0\times$ & 0.592 & 63.24 & \textbf{41.55} &  11.6 & \textbf{8.21} & 29.22\% \\
$4.0\times$ & 1.184 & 65.89 & \textbf{43.36} &  12.58 &  \textbf{8.92} & 29.09 \% \\
$8.0\times$ & 1.67 & 70.85 & \textbf{47.11} &  15.42 &  \textbf{9.95} & 35.47 \% \\
$12.0\times$ & 2.05 & 74.36 & \textbf{49.68} &  18.21 &  \textbf{11.2} & 38.49 \% \\
\bottomrule
\end{tabular}
\end{table}

\begin{figure}[H]
    \centering

    % Row 1
    \begin{subfigure}{0.48\linewidth}
        \centering
        \includegraphics[width=\linewidth]{pooled_curve_l2_plot_mode=user_uniform_Nu=4000_Np=50.png}
        \caption{Angular estimation error(scale 1)}
        \label{fig:angle-real}
    \end{subfigure}
    \hfill
    \begin{subfigure}{0.48\linewidth}
        \centering
        \includegraphics[width=\linewidth]{pooled_curve_l2_plot_mode=user_uniform_Nu=4000_Np=50_disagreement_rate.png}
        \caption{Disagreement rate(scale 1)}
        \label{fig:disagreement-real}
    \end{subfigure}

    \vspace{-0.4em} % space between rows

    % Row 3
    \begin{subfigure}{0.48\linewidth}
        \centering
        \includegraphics[width=\linewidth]{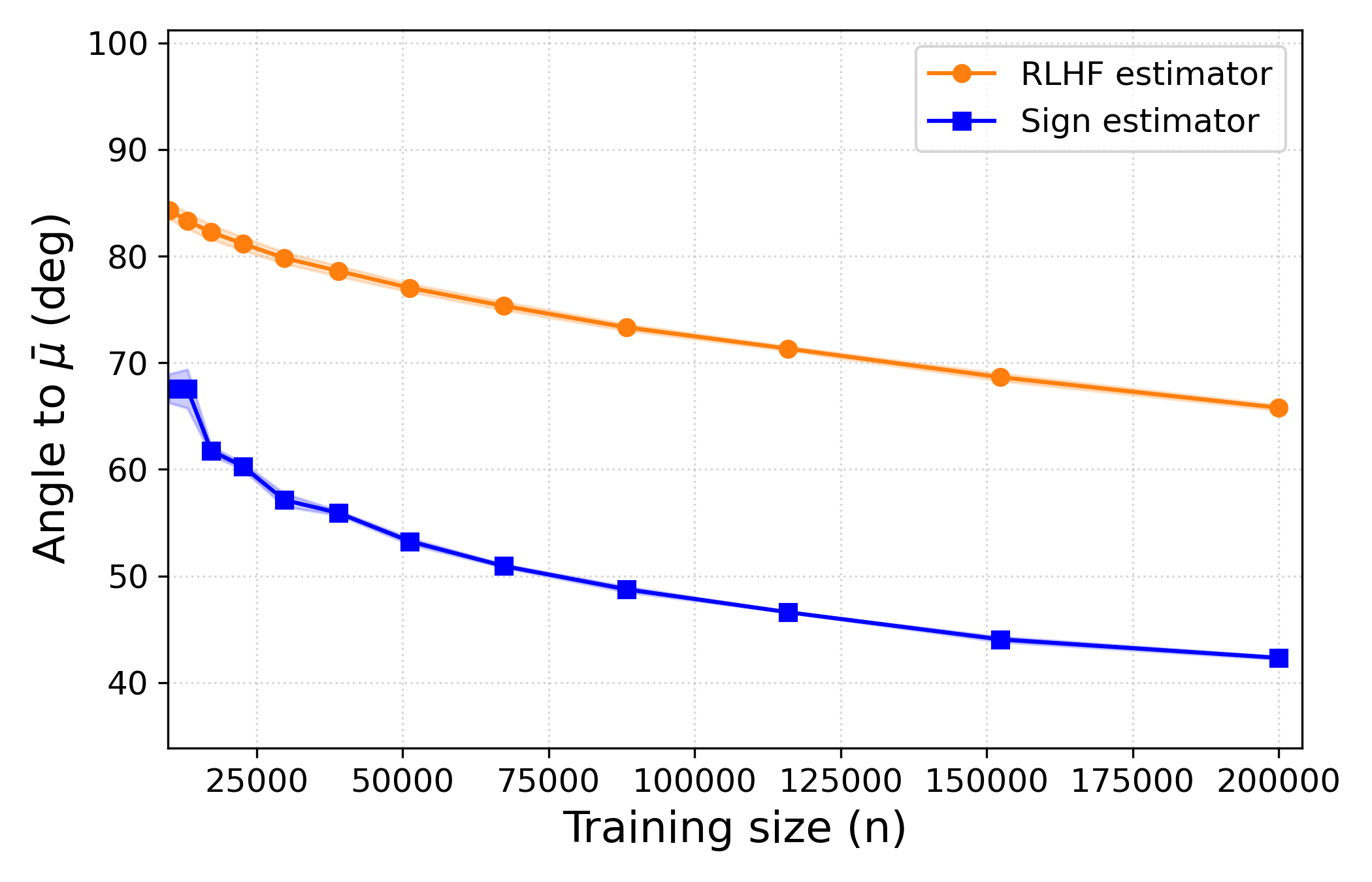}
        \caption{Angular estimation error(scale 4)}
        \label{fig:angle-real}
    \end{subfigure}
    \hfill
    \begin{subfigure}{0.48\linewidth}
        \centering
        \includegraphics[width=\linewidth]{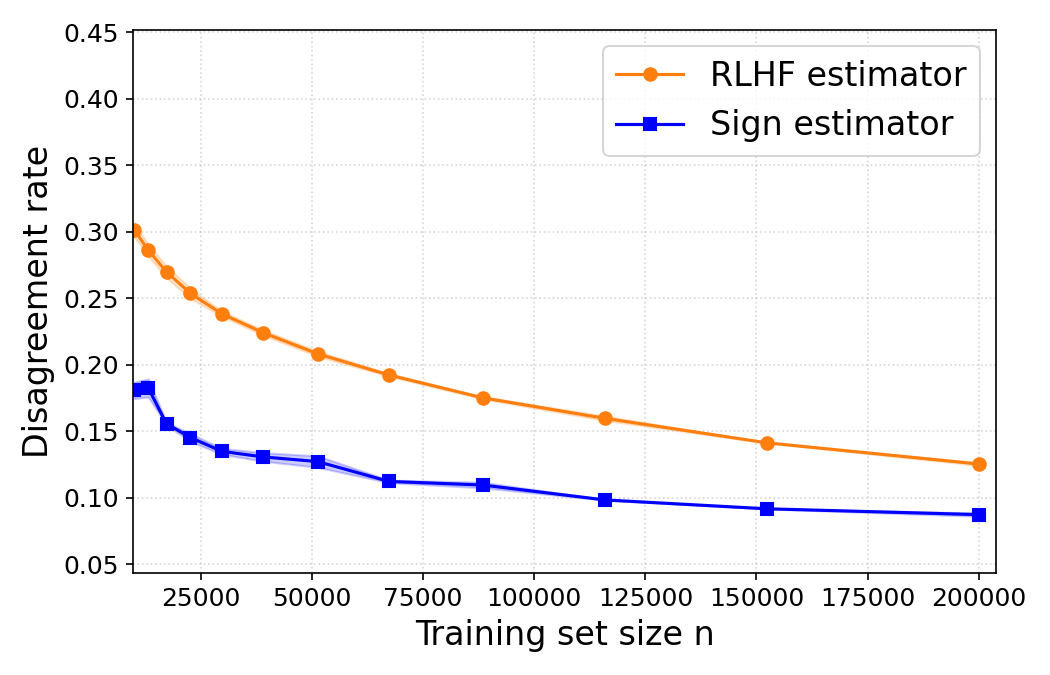}
        \caption{Disagreement rate(scale 4)}
        \label{fig:disagreement-real}
    \end{subfigure}

    \vspace{0.8em}
    % Row 3
    \begin{subfigure}{0.48\linewidth}
        \centering
        \includegraphics[width=\linewidth]{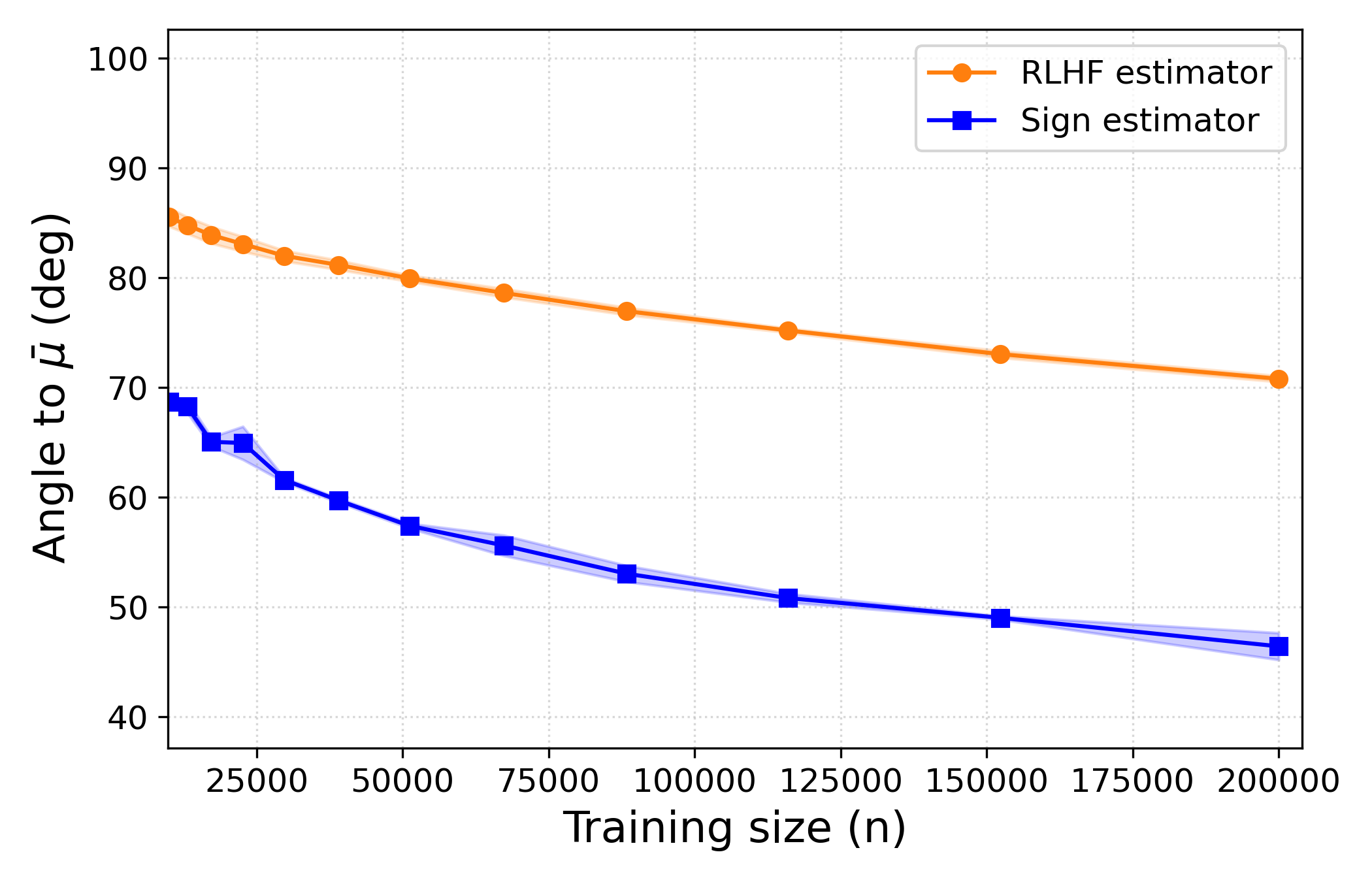}
        \caption{Angular estimation error(scale 8)}
        \label{fig:angle-real}
    \end{subfigure}
    \hfill
    \begin{subfigure}{0.48\linewidth}
        \centering
        \includegraphics[width=\linewidth]{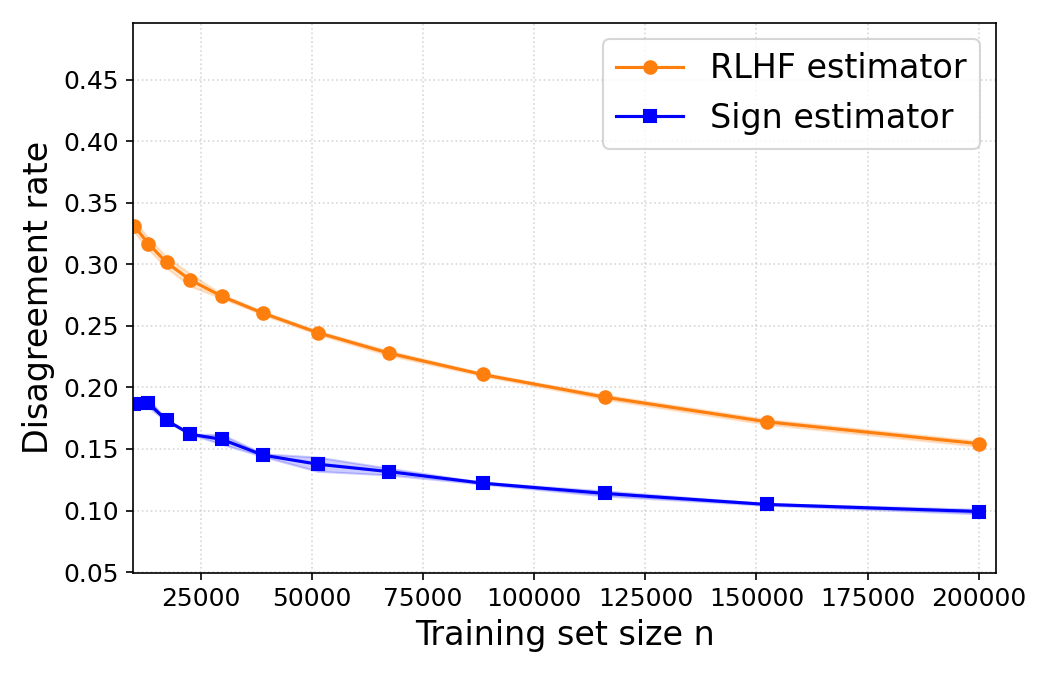}
        \caption{Disagreement rate(scale 8)}
        \label{fig:disagreement-real}
    \end{subfigure}

    \vspace{0.8em}

    % Row 4
    \begin{subfigure}{0.48\linewidth}
        \centering
        \includegraphics[width=\linewidth]{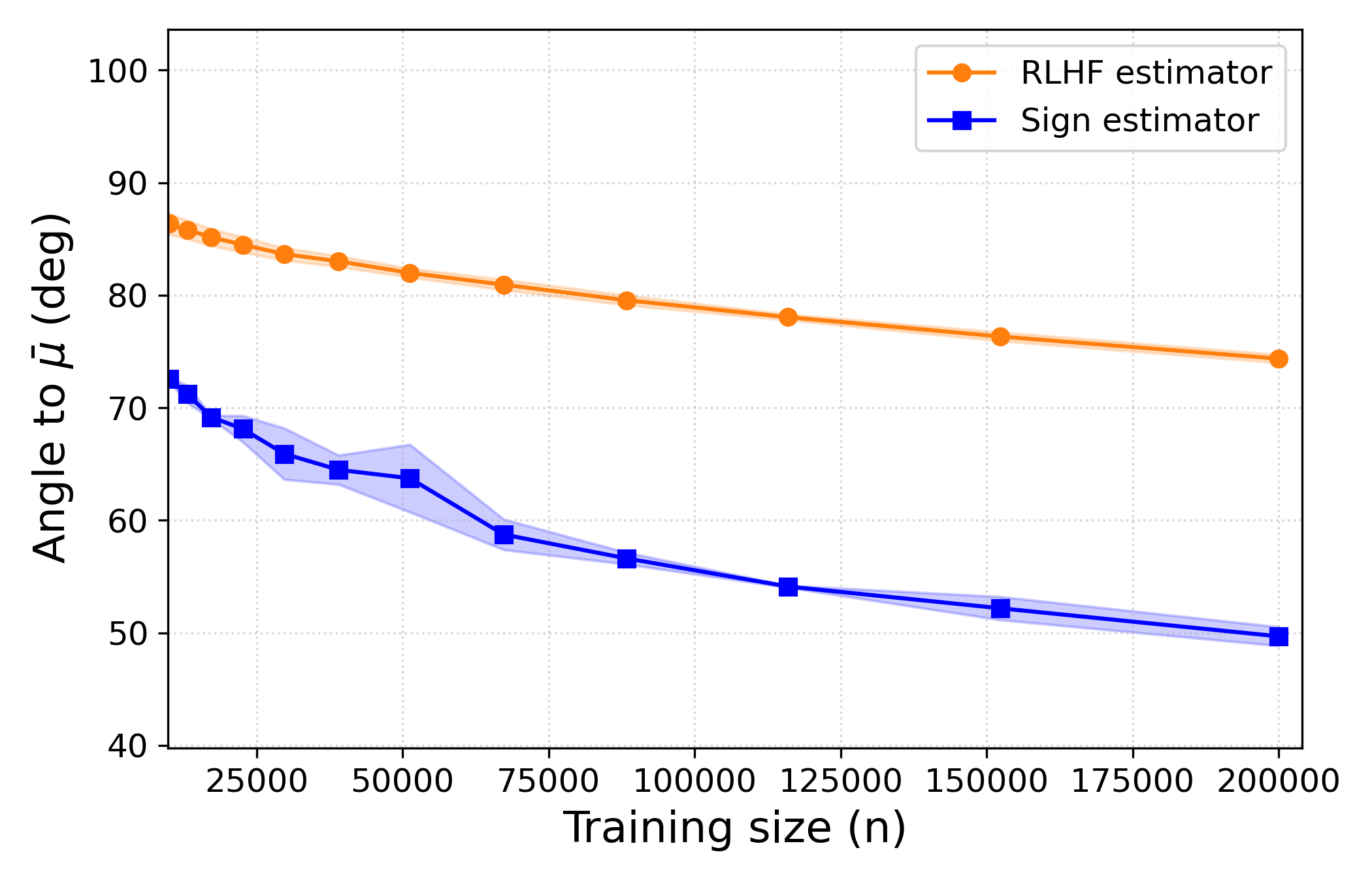}
        \caption{Angular estimation error(scale 12)}
        \label{fig:angle-real}
    \end{subfigure}
    \hfill
    \begin{subfigure}{0.48\linewidth}
        \centering
        \includegraphics[width=\linewidth]{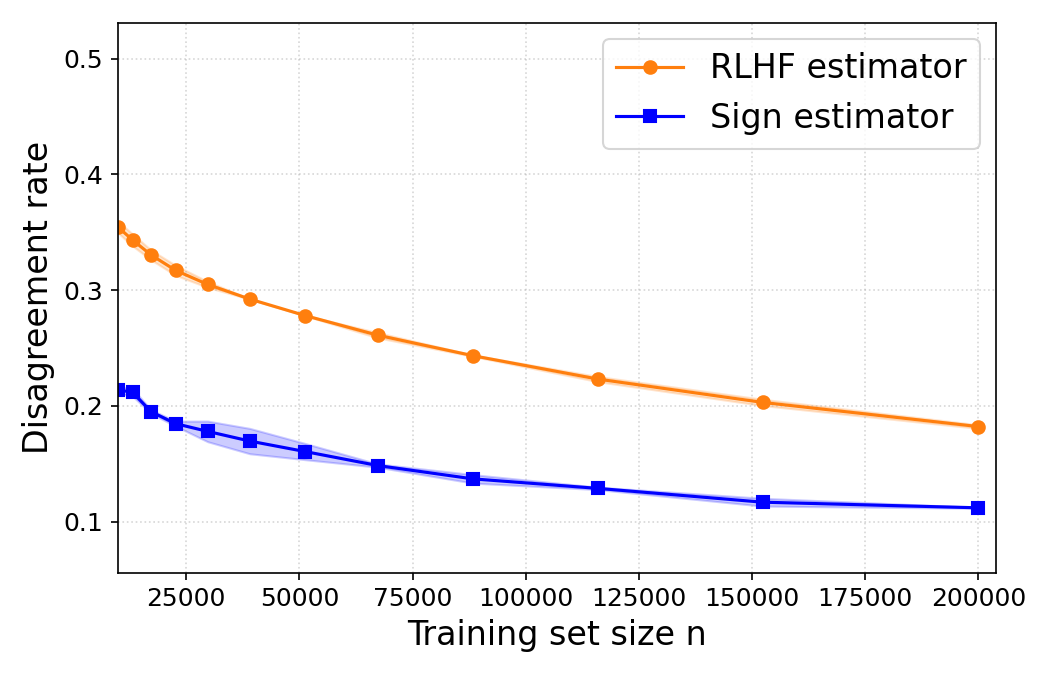}
        \caption{Disagreement rate(scale 12)}
        \label{fig:disagreement-real}
    \end{subfigure}

    \caption{\small Comparison of 4 levels of scale (1,4,8,12) and the effect of introducing more heterogeneity in estimating the direction of the mean. Predictably, a higher variance amplifies the estimation error of both estimators, while the Sign estimator seems to handle it more gracefully.}
    \label{fig:variance}
\end{figure}

\section{Background} \label{app:background}

\paragraph{Utilitarian theorem.} A classic result due to \citet{harsanyi1955cardinal} shows that if (i) each user's preferences over stochastic prospects admit a von Neumann-Morgenstern-Marschak (VNM) utility representation, (ii) the social preferences satisfy the VNM axioms and treat users symmetrically, and (iv) individual-level indifference between two prospects implies a similar indifference for social preferences, then the only compatible social welfare functional is (up to a positive affine transformation) the sum of individual  utilities.

Thus, utilitarian aggregation is justified under widely accepted axioms. In the context of RLHF, this motivates the approach that consists of learning a `reward function' that aggregates users' utilities additively: maximizing expected social welfare reduces to maximizing the expected sum of user-level utilities.\footnote{We note, however, that our notion of utility is over deterministic outcomes, rather than stochastic prospects (lotteries), as in the utilitarian theorem.}

\paragraph{RLHF pipeline and learning task.} In the standard RLHF procedure, we are given a reference policy $\pi^{\rm ref}(\cdot)$, describing the language model random outputs, i.e., given prompt-completion pair $x= (p,c) \in {\cal X}$, $\pi^{\rm ref}(x)$  is the probability of generating completion $c$ conditional to prompt $p$. In the remainder of this section, we omit the reference to the prompt $p$ in our notion of policy (i.e., all subsequent claims are conditional to each prompt). 

The fine-tuned policy (under utilitarian aggregation) chooses a generative distribution over completions $\pi\in \Delta^{{\cal X}}$ to maximize the user expected utility plus a KL-divergence penalty relative to the reference policy. That is, $\pi^{\rm RLHF}(\cdot) \in \argmax_{\pi\in\Delta^{{\cal X}} } \Phi_{\bar{u},\gamma}(\pi)$ where
\begin{eqnarray*}
    \Phi_{\bar{u},\gamma}(\pi)  \triangleq  \extwo{x\sim \pi}{\bar{u}(x)} -{\gamma}  D_{KL}\left(\pi\|\pi^{\rm ref}\right) 
    =    \extwo{\beta,x\sim \pi}{u(x;\beta)} -{\gamma} D_{KL}\left(\pi\|\pi^{\rm ref}(\cdot)\right)   \ ,
    % = \bar{u}(x;\beta) -\frac{1}{\gamma} D_{KL}\left(\pi\|\pi^{\rm ref}\right) 
\end{eqnarray*}
and $\gamma>0$ is the so-known temperature parameter.  This formulation justifies our learning target $\bar{u}(\cdot) = \expartwo{\beta}{u(\cdot; \beta)}$---and $\bar\beta = \ex{\beta}$ in the special case of linear utility---as a sufficient statistic for policy fine-tuning. The target $\bar{u}(\cdot)$ is our notion of reward model for a heterogeneous population of users.

We say that two reward/utility functions $u(\cdot), u'(\cdot)$ are {\em ordinally consistent} if ${u}(x_1)\geq {u}(x_2) \Leftrightarrow {u}'(x_1)\geq {u}'(x_2) $ for all pairs of alternatives $x_1,x_2\in {\cal X}$. That is, they induce the same preferences over alternatives, i.e., there exists a monotone mapping $f:{\bb R}\rightarrow {\bb R}$ such that $u'(x) = f(u(x))$ for all $x\in {\cal X}$. For linear utility functions with utility vectors $\beta, \beta'$, them being ordinally consistent amounts to both vectors sharing the same direction (if ${\cal X}$ is full rank), i.e., $\beta' = \lambda \beta$ for some scalar $\lambda>0$. 

In this work, we will be satisfied with recovering $\bar{u}$ up to induced preferences---or $\bar\beta$ up to a positive scaling in the linear case. We notice that this learning target suffices to determine the class of RLHF policies or a close approximation thereof. Define $\Pi^{{\rm RLHF}}_{\bar{u},\gamma} = \argmax_\pi \Phi_{\bar{u},\gamma}(\pi)$ and $\Pi^{\rm RLHF}_{\bar{u}} = \bigcup_{\gamma\geq 0}\Pi^{{\rm RLHF}}_{\bar{u},\gamma}$, then
\begin{itemize}
\item In the linear case, $\Pi^{\rm RLHF}_{\beta} = \Pi^{\rm RLHF}_{\beta'}$ for any two utility vectors $\beta,\beta'$ sharing the same direction, and in particular,  $\Pi^{\rm RLHF}_{\bar\beta} = \Pi^{\rm RLHF}_{\bar\mu}$ where we recall that $\bar{\mu} = \bar\beta/\lVert\bar\beta\rVert$. Therefore, focusing on the learning target $\bar\mu$ suffices to fully determine the class of fine-tuned policies---up to temperature rescaling.
\item For general utility functions, $\Pi^{\rm RLHF}_{u,0} = \Pi^{\rm RLHF}_{u',0}$ for any two ordinally consistent utility functions $u, u'$. That is, when the temperature is zero, and the models have deterministic outputs, ordinal consistency is also without loss. 
\item However, it is generally not true that $u,u'$ being ordinally consistent suffices for $\Pi^{\rm RLHF}_{u} = \Pi^{\rm RLHF}_{u'}$. In this context, we recover ordinal consistency over policies: when the reference policy $\pi^{\rm ref}$ is itself ordinally consistent with $u,u'$, then the fine-tuned policies obtained from the utility functions $u$ and $u'$ are ordinally consistent.
\end{itemize}

\section{Failure of the RLHF Estimator: Two Class Example} \label{app:two}

Recall we have two types of users, ${\cal B} = \{1,2\}$, faced with two alternatives, ${\cal X} = \{o,1\}$ where users of type $1$ make up a fraction $\alpha$ of the population. Further, we have $u(1;1) = 1$ while $u(1;2)= -M < 0$. Observe that $\bar u(1) = \alpha + (1-\alpha)(-M)$. Consequently, $\bar u < 0 \iff \alpha < {M}/{1+M}$.

Next observe that since $\sigma(\tilde u^{\rm naive}(1)) = \mathbb{P}(Y=1) = \alpha \sigma(1) + (1-\alpha) \sigma(-M)$, we have that
\[
\tilde u^{\rm naive}(1) > 0 \iff \alpha > \frac{\sigma(M) - 1/2}{\sigma(1) + \sigma(M) - 1} \uparrow \frac{1}{2\sigma(1)} < 0.7
\]
It follows that for any $M \geq 3$ and $0.7 \leq \alpha \leq 1-1/M < M/(1+M)$, we have $\tilde u^{\rm naive}(1) > 0$ while $\bar u(1) < 0$.

\section{Analysis of the Naive Estimator for Gaussian Data}

\subsection{Analysis of MLE's bias and sufficient condition for recovery}

What stronger assumptions makes the Naive estimator consistent under Gaussian data? The next proposition identifies a sufficient condition---{\em conditional symmetry} in the noise $\eps = \beta- \bar{\beta}$. This condition, however, is quite stringent---e.g., in the case of the mixed logit model $\beta \sim {\cal N}(\bar{\beta}, \Sigma)$, it essentially requires $\Sigma$ to be a diagonal matrix. The proof of Proposition~\ref{prop:sufficient} is instructive as it isolate the source of asymptotic bias in the Naive estimator.

\begin{proposition} \label{prop:sufficient} 
   Suppose that $X \sim \normal(0,I_d)$. Letting $\eps = \beta - \bar{\beta}$ denote the mean-centered noise in the utility vector, we denote by $\epspar := (\eps^\top \mubar) \mubar$  its random component parallel to $\bar\mu$ and  by $\epsort := \eps - \epspar$ its orthogonal component. If the distribution of $\epsort$ conditional on $\epspar$ is symmetric, i.e., $\prpar{\epsort \in \cdot | \epspar} = \prpar{ - \epsort \in \cdot | \epspar}$, then the Naive estimator consistently recovers the direction $\mubar$.
   
   % is the latent noise 
   % Let $\beta,\eps$ be the uncentered and centered preference vectors: $\beta := \bbar + \eps$.
   % Decompose $\eps$ into a component $\epspar := (\eps^\top \mubar) \mubar$ parallel to $\mubar$ and an orthogonal component $\epsort := \eps - \epspar$. 

   % Suppose that conditionally on $\epspar$, $\epsort$ is a symmetric random variable, that is $\epsort | \epspar \overset{d}{=} - \epsort | \epspar$.

   % Then the MLE under the BT model is a consistent estimator of $\mubar$.
\end{proposition}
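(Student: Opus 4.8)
The plan is to show that the RLHF estimator's moment equation, under the stated conditional symmetry, forces the orthogonal component of $\betanaive$ to vanish. Recall from Proposition~\ref{prop:agg} that for Gaussian $X \sim \normal(0, I_d)$ the RLHF estimator satisfies $\betanaive \propto \expartwo{\beta}{\expartwo{X}{\sigma'(X^\top\beta)}\beta}$. Writing $\beta = \bar\beta + \eps$ and decomposing $\eps = \epspar + \epsort$ with $\epspar = (\eps^\top\mubar)\mubar$, the recovered direction is aligned with $\mubar$ if and only if the component of $\expartwo{\beta}{w(\beta)\beta}$ orthogonal to $\mubar$ is zero, where $w(\beta) = \expartwo{X}{\sigma'(X^\top\beta)}$. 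Since $\bar\beta$ is parallel to $\mubar$, this reduces to showing $\expartwo{\eps}{w(\bar\beta + \eps)\,\epsort} = 0$.

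**The key observation** is that for $X \sim \normal(0, I_d)$, the weight $w(\beta) = \expartwo{X}{\sigma'(X^\top\beta)}$ depends on $\beta$ only through $\|\beta\|^2$ — because $X^\top\beta \sim \normal(0, \|\beta\|^2)$ by rotational invariance of the isotropic Gaussian. Hence $\|\bar\beta + \eps\|^2 = \|\bar\beta + \epspar\|^2 + \|\epsort\|^2$ (the cross term vanishes since $\bar\beta \perp \epsort$), so $w(\bar\beta + \eps)$ is a function of $(\epspar, \|\epsort\|)$ alone — in particular it is invariant under the sign flip $\epsort \mapsto -\epsort$ holding $\epspar$ fixed. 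Now condition on $\epspar$: by the assumed conditional symmetry, $\epsort \mid \epspar$ and $-\epsort \mid \epspar$ have the same law, so
\[
\expartwo{\epsort}{w(\bar\beta + \eps)\,\epsort \mid \epspar} = -\expartwo{\epsort}{w(\bar\beta + \eps)\,\epsort \mid \epspar} = 0,
\]
using the sign-invariance of $w$ in the first equality and the change of variables $\epsort \mapsto -\epsort$ under the symmetric conditional law. Taking the outer expectation over $\epspar$ gives $\expartwo{\eps}{w(\bar\beta+\eps)\,\epsort} = 0$, so $\expartwo{\beta}{w(\beta)\beta} = \expartwo{\epspar}{\,\cdot\,}\,\mubar$ is parallel to $\mubar$. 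Provided this vector is nonzero (which holds since $w > 0$ and, e.g., the parallel component has nonnegative inner product with $\bar\beta$ that is strictly positive when $\|\bar\beta\| > 0$), the RLHF estimator recovers the direction $\mubar$, and consistency of the empirical RLHF estimator then follows by standard M-estimation arguments already invoked in the paper.

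**The main obstacle** I anticipate is handling the general isotropic-but-not-identity covariance case cleanly, and more subtly, verifying that the limiting recovered vector is genuinely nonzero rather than accidentally collapsing — one needs $\expartwo{\epspar}{w(\bar\beta+\epspar)\,(\eps^\top\mubar)}$ plus the $\bar\beta$ term to not cancel, which requires a short monotonicity argument: $w$ is decreasing in $\|\beta\|^2$, and $\bar\beta + \epspar$ ranges over the line through $\mubar$, so one can pair up the points $\pm(\eps^\top\mubar)$ and use that $w$ weights the side nearer the origin more heavily. This is exactly the re-weighting story from Proposition~\ref{prop:agg}: the bias is confined to the $\mubar$-direction (a harmless rescaling) precisely because conditional symmetry kills the orthogonal leakage, but the argument that the rescaling factor is strictly positive still needs this small extra step.
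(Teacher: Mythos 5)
Your proof is correct and takes essentially the same approach as the paper's: apply Stein's lemma (Proposition~\ref{prop:agg}) to reduce consistency to showing $\E_\eps[w(\bar\beta+\eps)\,\epsort]=0$, then kill this term by combining the assumed conditional symmetry of $\epsort$ with a sign-flip invariance of the Gaussian weight. The only difference is cosmetic: you observe that $w(\beta)=\E_X[\sigma'(X^\top\beta)]$ depends on $\beta$ only through $\|\beta\|$, hence is invariant under $\epsort\mapsto-\epsort$, whereas the paper reaches the same invariance via the identity $(X^\top u, X^\top v)\overset{d}{=}(-X^\top u, X^\top v)$ for orthogonal $u,v$ together with the evenness of $\sigma'$; and the strict positivity of the resulting parallel component, which you rightly flag as needing a short extra argument, is simply asserted without proof in the paper as well.
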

Suppose $X \sim N(0, I_d)$.
Then using Stein's lemma on both sides of the MLE equation, we get that:
\begin{eqnarray*}
       &&E_X(\sigma'(X^T\bhat)) \bhat \\
       &&= E_\epsilon( E_X(\sigma'(X^T (\bbar + \epsilon)) (\bbar + \epsilon)) \\
       &&= E_\epsilon( E_X(\sigma'(X^T (\bbar + \epsilon))) \bbar + E_\epsilon( E_X(\sigma'(X^T (\bbar + \epsilon))\epspar) + E_\epsilon( E_X(\sigma'(X^T (\bbar + \epsilon)) \epsort)
\end{eqnarray*}
The first two components are in the (positive) direction of $u$, then, we have that the MLE recovers the direction of u if and only if:
\begin{equation}
    E_\epsilon( E_X(\sigma'(X^T (\bbar + \epsilon)) \epsort) = 0
\end{equation}

We will show that a sufficient condition is that conditionally on the value of $\epspar$, $\epsort$ is symmetric, that is $\epsort |_{\epspar}  \overset{d}{=} - \epsort |_{\epspar}  $. But in general, just symmetry of $\epsort$ is not sufficient by providing a counterexample. \\

If $\epsort |_{\epspar}  \overset{d}{=} - \epsort |_{\epspar}  $. Then we have:
\begin{align*}
    E_\epsilon( E_X(\sigma'(X^T (\bbar + \epsilon)) \epsort) &= E_\epsilon( E_X(\sigma'(X^T (\bbar + \epspar + \epsort)) \epsort) \\
    &= E_\epsilon( E_X(\sigma'(X^T (-\bbar - \epspar + \epsort)) \epsort) \\
    &= E_\epsilon( E_X(\sigma'(X^T (-\bbar - \epspar - \epsort)) (-\epsort)) \\
    &= - E_\epsilon( E_X(\sigma'(X^T (\bbar + \epspar + \epsort)) \epsort) \ .
\end{align*}
Thus, $E_\epsilon( E_X(\sigma'(X^T (\bbar + \epsilon)) \epsort) = 0$. The first line comes from just the decomposition of $\epsilon$. The second line comes from the ability of a standard gaussian to flip a direction while keeping others intact. That is for two orthogonal vectors u and v, $(X^Tu, X^Tv) \overset{d}{=} (- X^Tu, X^Tv)$. The third line comes from the assumption $\epsort |_{\epspar}  \overset{d}{=} - \epsort |_{\epspar}  $. The fourth line is a consequence of the evenness of the function $\sigma'$.

\subsection{Counter-example} \label{app:counterexample}
In this section, we construct a simple counter-example in the linear utility case showing that the Naive estimator fails to recover ordinal preferences under symmetry (Assumption~\ref{ass:symmetry}).

Let $d = 2$, $X \sim N(0, I_2)$, $\bbar = (1,0)$ and
\[
\epsilon = \begin{cases}
     (1,-1)\text{ with probability 0.5}\\
    (-1,1)\text{ with probability 0.5}
\end{cases}
\]
Then $\epsilon$ is symmetric, but we see that conditionally on $\epspar$ which gives the first coordinate, $\epsort$ is given by the second coordinate which is deterministic and is non-zero, hence not symmetric.

We check that the condition $E_\epsilon( E_X(\sigma'(X^T (\bbar + \epsilon)) \epsort) = 0$ is not satisfied.
We have:
\begin{align*}
    E_\epsilon( E_X(\sigma'(X^T (\bbar + \epsilon)) \epsort) &= \frac{1}{2} E_X(\sigma'( X^T[\begin{pmatrix}
        1 \\
        0
    \end{pmatrix} + \begin{pmatrix}
        1 \\
        -1
    \end{pmatrix}]) \begin{pmatrix}
        0 \\
        -1
    \end{pmatrix} + 
    \frac{1}{2} E_X(\sigma'( X^T[\begin{pmatrix}
        1 \\
        0
    \end{pmatrix} + \begin{pmatrix}
        -1 \\
        1
    \end{pmatrix}]) \begin{pmatrix}
        0 \\
        1
    \end{pmatrix} \\
    &= \frac{1}{2} [E_X(\sigma'(2 X_1 - X_2)) - E_X(\sigma'(X_2)) ] \\
    &= \frac{1}{2} [E_{Z \sim N(0, 5)}(\sigma'(Z)) - E_{Z \sim N(0,1)}(\sigma'(Z)) ] \\
    &\neq 0
\end{align*}
Hence, $\bhat$ is not proportionnal to $\bbar$.
\section{Proof of Theorem \ref{thm:finite}}

The first step of proving non asymptotic rates of convergence is to lower bound the excess risk $\cL(\theta) - \cL(\mubar)$ by the angle $\alpha(\theta,\mubar)$. This is achieved when $\alpha(\theta,\mubar) \leq \frac{\pi}{4}$. 

Let $\alpha_0 \leq \frac{\pi}{4}$ be a fixed angle value and let $\delta > 0$. 

We therefore use the corollary to get the universal $n_0$ such that with probability at least $1 - \delta$: $\forall n \geq n_0: \alphan \leq \alpha_0$. In the rest of the paper, we condition on this event and suppose $n\geq n_0$.
The following theorem provides a curvature inequality that is a quadratic lower bound of the excess risk by the square of the angle.

\begin{theorem}
    Let $\theta \in S$ such that $\alpha(\theta,\mubar) \leq \alpha_0$.
    Suppose X satisfies the local mass assumption: $\exists r\in[0,1]: \rho_X\triangleq \inf_{x\in B(0,r)} f_X(x) > 0$.
    Suppose the noise $\eps$ satisfies the local mass assumption $\rho_\eps = \P_\eps(||\eps|| \leq \frac{||\bbar||}{2 \sqrt{2}}) > 0$.
    
    Then, there exists a constant $C_0$ that depends on $||\bbar||,\rho_X$ and $\rho_\eps$ such that for $\alpha \triangleq \alpha(\theta,\mubar)$:
    \begin{equation}
        C_0 \alpha(\theta,\mubar)^2 \leq \cL(\theta) - \cL(\mubar)
    \end{equation}
\end{theorem}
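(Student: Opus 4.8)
The plan is to rewrite $\cL(\theta)-\cL(\mubar)$ as the mass of a sign-disagreement region weighted by a choice-gap function, and then show this region carries mass of order $\alpha(\theta,\mubar)^2$. Exactly as in the proof of Theorem~\ref{thm:consistency}, set ${\rm ch}(x)\triangleq 2\big(\P(Y=1\mid X=x)-\tfrac12\big)=\E_\eps\!\left[2\sigma(x^\top\bbar+x^\top\eps)-1\right]$. Because $\E[2Y-1\mid X]={\rm ch}(X)$, and by Proposition~\ref{prop:symmetry} $\sign({\rm ch}(x))=\sign(x^\top\bbar)=\sign(x^\top\mubar)$, the same algebra used there gives
\[
\cL(\theta)-\cL(\mubar)\;=\;2\,\E_X\!\left[\,|{\rm ch}(X)|\,\mathbf{1}\{\sign(X^\top\theta)\neq\sign(X^\top\mubar)\}\right].
\]
The event $\{(X^\top\theta)(X^\top\mubar)<0\}$ is, up to a null set, a double wedge $W_\alpha$: writing $x=x_1\mubar+x_2 e_2+x'$ where $(\mubar,e_2)$ is an orthonormal basis of $\mathrm{span}(\mubar,\theta)$ and $x'\in\R^{d-2}$, the event depends only on $(x_1,x_2)$ and is a union of two opposite planar sectors, each of angular width $\alpha\triangleq\alpha(\theta,\mubar)$. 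It thus suffices to lower bound $\int_{W_\alpha\cap B(0,r)}|{\rm ch}(x)|\,f_X(x)\,dx$.

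The key step is a pointwise lower bound on $|{\rm ch}(x)|$ over $W_\alpha$ near the origin. Using $2\sigma(t)-1=\sigma(t)-\sigma(-t)$ and the symmetry $\eps\overset{d}{=}-\eps$ (Assumption~\ref{ass:symmetry}) on the second term, ${\rm ch}(x)=\E_\eps\!\left[\sigma(x^\top\bbar+x^\top\eps)-\sigma(-x^\top\bbar+x^\top\eps)\right]$, and the mean value theorem converts this into ${\rm ch}(x)=2(x^\top\bbar)\,\E_\eps[\sigma'(\tilde\xi_\eps)]$ with $|\tilde\xi_\eps|\le|x^\top\bbar|+|x^\top\eps|$. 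Restricting the expectation to the event $\{\|\eps\|\le\|\bbar\|/(2\sqrt2)\}$ (probability at least $\rho_\eps$) and using that on $W_\alpha\cap B(0,r)$ one has $|x^\top\bbar|\le\|\bbar\|\,\|x\|\sin\alpha_0\le\|\bbar\| r/\sqrt2$ — this is where $\alpha_0\le\pi/4$ is used — and $|x^\top\eps|\le\|\bbar\| r/(2\sqrt2)$, the point $\tilde\xi_\eps$ stays in a fixed bounded interval, so $\sigma'(\tilde\xi_\eps)$ is bounded below by $\sigma'$ at a point of magnitude $\Theta(\|\bbar\| r)$. Since $|x^\top\bbar|=\|\bbar\|\,|x^\top\mubar|$, this yields, on $W_\alpha\cap B(0,r)$,
\[
|{\rm ch}(x)|\;\ge\;c_1\,\|\bbar\|\,\rho_\eps\,\sigma'\!\big(\Theta(\|\bbar\| r)\big)\,|x^\top\mubar|
\]
for a universal $c_1>0$.

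Finally, since $f_X\ge\rho_X$ on $B(0,r)$ (Assumption~\ref{ass:X}), the two displays reduce everything to the geometric integral $\int_{W_\alpha\cap B(0,r)}|x^\top\mubar|\,dx=\int_{W_\alpha\cap B(0,r)}|x_1|\,dx$. For each fixed $x'\in B_{d-2}(r)$, the inner two-dimensional integral runs over $W_\alpha$ intersected with a disk of radius $\sqrt{r^2-\|x'\|^2}$; in polar coordinates it equals $\tfrac23(r^2-\|x'\|^2)^{3/2}(1-\cos\alpha)$, and integrating over $x'$ produces a factor proportional to $\mathrm{Vol}(B_{d-2}(r))$ times a positive power of $r$, while $1-\cos\alpha\ge\tfrac{2}{\pi^2}\alpha^2$. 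Collecting constants gives $\cL(\theta)-\cL(\mubar)\ge C_0\,\alpha(\theta,\mubar)^2$, where $C_0$ depends on $\|\bbar\|,\rho_X,\rho_\eps$ through a factor of the shape $\|\bbar\|\,\sigma'(\Theta(\|\bbar\| r))\,\rho_\eps\,\rho_X\,\mathrm{Vol}(B_{d-2}(r))$ times a power of $r$, matching the constant stated in Theorem~\ref{thm:finite} (the exact $r$-exponent and numerical prefactors are routine bookkeeping). The main obstacle is precisely this pointwise control of ${\rm ch}$: it vanishes at the origin and, more seriously, $x^\top\bbar$ is itself $O(\alpha)$-small throughout the disagreement wedge, so ${\rm ch}$ admits no uniform constant lower bound there. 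The symmetry-driven linearization is what both extracts the factor $|x^\top\mubar|$ — making the integrand $\Theta(\alpha)$ over a region of angular measure $\Theta(\alpha)$, hence delivering the $\Theta(\alpha^2)$ scaling — and, via the small-noise event, rules out the sign cancellations in $\E_\eps[\,\cdot\,]$ that would otherwise make ${\rm ch}$ negligible; calibrating $r$ so that $\sigma'(\Theta(\|\bbar\| r))$ does not collapse while $W_\alpha\cap B(0,r)$ retains order-$\alpha^2$ mass is the quantitative heart of the argument.
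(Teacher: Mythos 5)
Your proposal is correct and follows essentially the same route as the paper's proof: both reduce the excess risk to the integral of $|{\rm ch}(X)|$ over the sign-disagreement wedge, linearize ${\rm ch}$ near the origin via the symmetry of $\eps$ together with the central-mass conditions to extract a factor $|x^\top\mubar|$ with prefactor $\|\bbar\|\,\sigma'(\Theta(\|\bbar\| r))\,\rho_\eps$, and then lower bound the resulting geometric integral by $\Theta(\alpha^2)$ using the local density bound $\rho_X$. The remaining differences are cosmetic --- you invoke the mean value theorem where the paper integrates $\tfrac{d}{dt}\E_\eps[\sigma(t+X^\top\eps)]$ from $0$ to $X^\top\bbar$, and you integrate over the full double wedge in $d$ dimensions rather than the paper's triangular subregion of the projected plane --- and affect only the unimportant numerical prefactors in $C_0$.
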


\begin{proof}
    We have:
    \begin{equation*}
        \cL(\theta) - \cL(\mubar) = E_X(|ch(X)| \mathbbm{1}(\sign(X^\top \theta) \neq \sign(X^\top\mubar )) \ ,
    \end{equation*}
    with $ch(X) = 2E_\eps(\sigma(X^T(\bbar + \eps))) - 1$.
    To control the curvature of $ch(X)$, we need to truncate the tails of $X$ and $\eps$ where the sigmoid $\sigma$ saturates.
    Let $a = \frac{||\bbar|| r}{\sqrt{2}}, b=\frac{r}{2 \sqrt{2}}$.
    When $||X|| \leq r$ and $|X^\top \bbar| \leq \frac{a}{2}$, we have:
    \begin{align*}
        \forall t \in \left[-\frac{a}{2}, \frac{a}{2}\right]: \E(\sigma'(t + X^T \eps)) &\geq \E_\eps(\sigma'(\frac{a}{2} + |X^\top \eps|)) \\
        &\geq E_\eps(\sigma'(\frac{a}{2} + |X^\top \eps|) \mathbbm{1}(|X^\top \eps| \leq \frac{a}{2})) \\
        &\geq \sigma'(a) \P_\eps(|X^\top \eps| \leq \frac{a}{2}) \\
        &\geq\sigma'(a) \P_\eps(|(\frac{1}{||X||}X)^\top \eps| \leq \frac{a}{2r})\\
        &\geq\sigma'(a) \underbrace{\inf_{u \in S}\P_\eps(|(u^\top \eps| \leq \frac{||\bbar||}{2\sqrt{2}})}_{\rho_\eps} \\
        &\geq \sigma'(a) \rho_\eps \ .
    \end{align*}
    Since $\sigma'$ is bounded, we can apply the Dominated Convergence theorem and integrate from 0 to $X^\top\bbar$ to get when $||X|| \leq r$ and $|X^\top \bbar| \leq \frac{a}{2}$:
    \begin{equation*}
        |ch(X)| \geq 2 |X^\top\bbar| \sigma'(a) \rho_\eps \ .
    \end{equation*}
    From this we can lower bound the excess risk by:
    \begin{align*}
        \cL(\theta) - \cL(\mubar) &\geq 2 \sigma'(a)\rho_\eps |E_X[|X^\top \bbar| \mathbbm{1}(\sign(X^\top\theta) \neq \sign(X^\top \mubar), |X^\top \bbar| \leq \frac{a}{2}, ||X|| \leq r)] \\
        &\geq 4 \sigma'(a) \rho_\eps |E_X[|X^\top \bbar| \mathbbm{1}(X^\top\theta < 0,X^\top \mubar > 0, |X^\top \mubar| \leq \frac{a}{2 ||\bbar||}, ||X|| \leq r)] \ .
    \end{align*}
    The second inequality is obtained by the symmetry of X.
    
    We next consider the plan $span(\{\theta,\mubar\})$ and let $v \in span(\{\theta,\mubar\})$ such that $v \perp \mubar$ and oriented such that $0 \leq \alpha \leq \alpha_0$.
    
     Let the set $A \triangleq \{x\in \R^d/ ||x|| \leq r, 0 < x^T\mubar \leq tan(\alpha) b,-b \leq X^\top v < -\cotan(\alpha) X^\top \mubar   \}$.
     We will show that this is a portion of the disagreement set.
     For $x \in A$, we have:
     \begin{align*}
         x^\top \mu &\leq \tan(\alpha) b \\
         &\leq b \text{\; \: \: \: \; \: \: 
        \; because $\alpha \leq \alpha_0 \leq \frac{\pi}{4}$ } \\
         &\leq \frac{a}{2 ||\bbar||} \ .
     \end{align*}
     And $||x|| \leq r, x^\top\mubar > 0$.
     We also have:
     \begin{align*}
         x^\top \theta &= \cos(\alpha) x^\top \mubar + \sin(\alpha) x^\top v \\
         &< \cos(\alpha)(x^\top \mubar - \tan(\alpha)  {\rm cotan} (\alpha) x^\top \mubar) \\
         &<0 \ .
     \end{align*}
    Hence $A \subset \{x\in \R^d/ x^\top\theta < 0,x^\top \mubar > 0, |x^\top \mubar| \leq \frac{a}{2 ||\bbar||}, ||x|| \leq r \}$.
    We can now lower bound the excess risk by:
    \begin{equation*}
        \cL(\theta) - \cL(\mubar) \geq 4 \sigma'(a) \rho_\eps |E_X[|X^\top \bbar| \mathbbm{1}(X \in A)]
    \end{equation*}
    On the other hand, from the local density assumption on X, there exists a minimal density $\rho'_X$ for the projection of X on $\text{span}(\{\theta,\mubar \})$.
    Hence:
    \begin{align*}
        \cL(\theta) - \cL(\mubar) &\geq 4 \sigma'(a) \rho_\eps ||\bbar|| |E_X[|X^\top \mubar| \mathbbm{1}(X \in A)] \\
        & \geq 4 \sigma'(a) \rho_\eps ||\bbar|| \int_0^{tan(\alpha)b} \int_{\cotan(\alpha)t}^b \rho'_X  t \space ds \space dt \\
        &\geq 4 \sigma'(a) \rho_\eps ||\bbar|| \rho'_X \int_0^{\tan(\alpha)b}   t(b - \cotan(\alpha)t) \space dt \\
        &\geq 4 \sigma'(a) \rho_\eps ||\bbar|| \rho'_X (\frac{b^2}{2} \tan^2(\alpha) - \cotan(\alpha)\frac{\tan^3(\alpha)b^3}{3} ) \\
        &\geq 4 \sigma'(a) \rho_\eps ||\bbar|| \rho'_X \frac{b^2}{2} \tan^2(\alpha)(1 - \frac{2b}{3} ) \\
        &\geq \frac{2}{3} \sigma'(a) \rho_\eps \rho'_X ||\bbar||  \frac{b^2}{2} \tan^2(\alpha) \ .
    \end{align*}
    The last inequality uses that $b = \frac{r}{2\sqrt{2}} \leq 1$.
    Furthermore, for $\alpha \leq \alpha_0\leq\frac{\pi}{4}: tan(\alpha) \geq \alpha$.
    
    We deduce the curvature inequality:
    \begin{equation*}
        \cL(\theta) - \cL(\mubar) \geq \frac{1}{24} \sigma'\left(\frac{||\bbar||r}{\sqrt{2}}\right) \rho_\eps \rho'_X \lVert\bbar\rVert  r^2 \alpha(\theta,\mubar)^2 \ .
    \end{equation*}

\end{proof}

    The next step in proving the bound on $\alphan$ is to upper bound the excess risk using tools from empirical process theory. In particular, to get fast rates, we will use a localization argument to restrict the suprema of the process $|\cL_n(\theta) - \cL(\theta)|$ to be over the spherical cap of vectors $\{\theta \in S, \alpha(\theta,\mubar) \leq \alphan \}$. For a fixed value of $\alphan$, we control the suprema in expectation using Dudley's entropy bound and in high probability using Talagrand's inequality to get a high probability fixed point inequality of the form $\alphan \leq U(\alphan)$ with U a fixed function. Since the angle $\alphan$ is itself random, we will need to apply a peeling technique and we first need to show that the high probability bound $\alpha \leq U(\alpha)$ holds uniformly over $\alpha$. Our proof resembles that of \cite{kuchelmeister_finite_2024}.

    Let now $\alpha \leq \alpha_0$ be a non random angle value.
    
    Let $\mathcal{F}_s \triangleq \{\theta \in S, \alpha(\theta,\mubar) \leq  s\}$
    
    Let $f_\theta(X,Y) \triangleq (2Y-1)(\sign(X^\top\theta) - \sign(X^\top\mubar))$.\\

    We denote $P f_\theta \triangleq \cL(\theta) - \cL(\mubar) = E_{X,Y}(f_\theta(X,Y))$
    and $P_nf_\theta = \frac{1}{n} \sum_{i=1}^n f_\theta(X_i,Y_i)$.\\
    
    Going back to the basic inequality, we have for $\theta \in S$ such that $\alpha(\theta, \mubar) \leq \alpha_0 $:
    \begin{align*}
        \cL(\theta)-\cL(\mubar) = Pf_{\theta} &= Pf_{\theta} - P_n f_\theta + P_n f_\theta \\
        &\leq \sup_{\mathcal{F_\alpha}} |P_n f - Pf|
    \end{align*}

    Notice that we localized the process to be in $\mathcal{F}_\alpha$. This process defines a natural metric towards which it is subgaussian: $d(\theta,\theta')^2\triangleq ||f_\theta- f_\theta'||^2_{L_2(P)} = E_{X,Y}(||f_\theta(X,Y)-f_{\theta'}(X,Y)||^2)$.
    
    We thus need a fine control of the deviations of the process $\sup_{\mathcal{F_\alpha}} |P_n f - Pf|$ and of its mean.
    We first state and prove a few useful lemmas. The first lemma states that the excess risk is bounded by a multiple of the angle.

    \begin{lemma}
        Suppose that X is a bounded random variable that has density $f_X$ bounded above by C, and let $R>0 $ s.t: $||X|| \leq R$ almost surely.
        Then for $\theta \in S$:
        \begin{equation}
            Pf_\theta \leq L  \alpha(\theta,\mubar)
        \end{equation}
        with $L = 2C \, Vol(B_{d-2}(R))R^2$
    \end{lemma}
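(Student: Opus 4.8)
The plan is to bound $Pf_\theta = \E_{X,Y}\big[(2Y-1)(\sign(X^\top\theta)-\sign(X^\top\mubar))\big]$ by first taking absolute values and noting that $(2Y-1)$ is bounded by $1$ in magnitude, so that
\[
Pf_\theta \;\le\; \E_X\big[\,|\sign(X^\top\theta)-\sign(X^\top\mubar)|\,\big]\;=\;2\,\P_X\!\big(\sign(X^\top\theta)\neq \sign(X^\top\mubar)\big).
\]
The disagreement event $\{\sign(X^\top\theta)\neq \sign(X^\top\mubar)\}$ is exactly the event that $X$ falls in the double wedge between the hyperplanes $\{x:x^\top\theta=0\}$ and $\{x:x^\top\mubar=0\}$. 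So the task reduces to showing that the $f_X$-measure of this wedge is at most $C\cdot\mathrm{Vol}(B_{d-2}(R))R^2\cdot\alpha(\theta,\mubar)$.

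First I would set up coordinates adapted to the two-dimensional plane $P=\mathrm{span}\{\theta,\mubar\}$: write $X = X_P + X_\perp$ where $X_P$ is the projection onto $P$ and $X_\perp \in P^\perp$ (dimension $d-2$). The sign-disagreement event depends only on $X_P$, since $\sign(X^\top\theta)=\sign(X_P^\top\theta)$ and likewise for $\mubar$. In the plane $P$, introduce polar coordinates $(\rho,\varphi)$; the double wedge between the two lines perpendicular to $\theta$ and $\mubar$ is a union of two angular sectors each of opening angle exactly $\alpha=\alpha(\theta,\mubar)$ (the angle between the lines equals the angle between the normals). Since $\|X\|\le R$ a.s., we have $\rho\le R$ and $\|X_\perp\|\le R$, so $X_\perp$ ranges over $B_{d-2}(R)$.

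Then I would bound the probability by integrating the density: using $f_X\le C$ everywhere,
\[
\P_X\big(X \in \text{wedge}\big) \;\le\; C\cdot \mathrm{Leb}\big(\{x:\|x\|\le R,\ x\in\text{wedge}\}\big)\;\le\; C\cdot \mathrm{Vol}(B_{d-2}(R))\cdot \big(\text{area of the planar double wedge}\cap B_2(R)\big).
\]
The planar double wedge intersected with the disk of radius $R$ has area $2\cdot\frac12\alpha R^2 = \alpha R^2$ (two sectors of angle $\alpha$ and radius $R$). Combining, $\P_X(\text{disagreement})\le C\,\mathrm{Vol}(B_{d-2}(R))R^2\,\alpha$, and multiplying by $2$ gives $Pf_\theta \le 2C\,\mathrm{Vol}(B_{d-2}(R))R^2\,\alpha(\theta,\mubar) = L\,\alpha(\theta,\mubar)$, as claimed.

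The main obstacle — really the only subtle point — is correctly handling the product structure of the region: the disagreement set is a product of a planar double wedge (in $P$) with a $(d-2)$-dimensional ball (in $P^\perp$), and one must be careful that the bound $\|X\|\le R$ a.s. implies both $\|X_P\|\le R$ and $\|X_\perp\|\le R$ simultaneously, so that a crude product bound $\mathrm{Vol}(B_{d-2}(R))\times(\text{planar area})$ is legitimate (it overcounts but that only helps). One should also double-check the factor of $2$: the double wedge genuinely consists of two sectors (one on each side of the origin), each of angular width $\alpha$, so its area is $\alpha R^2$, not $2\alpha R^2$; the remaining factor of $2$ comes from $|\sign(a)-\sign(b)|=2$ when $a,b$ have opposite signs. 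Everything else is a routine change of variables.
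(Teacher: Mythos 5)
Your proof is correct and follows essentially the same route as the paper's: bound $Pf_\theta$ by (a multiple of) $\P(\sign(X^\top\theta)\neq\sign(X^\top\mubar))$, project onto $\mathrm{span}\{\theta,\mubar\}$ where the marginal density is at most $C\,\mathrm{Vol}(B_{d-2}(R))$, and bound the double-wedge area by $\alpha R^2$. Your bookkeeping of the two factors of $2$ (one from $|\sign(a)-\sign(b)|=2$, one from the two sectors) is in fact slightly more careful than the paper's, which absorbs the slack into the looser bound $\int \mathbf{1}(y^\top\theta<0,\,y^\top\mubar>0)\,dy\le R^2\alpha$, but both land on the same constant $L$.
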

    \begin{proof}
        We have \begin{align*}
            Pf_\theta &= E_X(|ch(X)| \mathbbm{1}(\sign(X^\top\theta) \neq \sign(X^\top \mubar)) \\
            &\leq \P(\sign(X^\top \theta) \neq \sign(X^\top \mubar))
        \end{align*}
        
        Since the density of X is upper bounded by C and X is upper bounded by R, then the density of the projection of X on $V = span(\{\theta, \mubar\})$ is upper bounded by $C \, Vol(B_{d-2}(R))$.
        And we have: \begin{align}
            Pf_\theta &\leq 2 C \, Vol(B_{d-2}(R)) \, \int_{y \in V \cap B_d(R)} \mathbbm{1}(y^\top \theta < 0, y^\top \mubar > 0) \\
            &\leq 2C \, Vol(B_{d-2}(R)) R^2 \alpha(\theta,\mubar) \ .
        \end{align}
        
    \end{proof}
    Note that the constant L does not explode in general with the dimension d since the density itself and its upper bound C typically decrease with d.

    We deduce the following corollary that bounds the diameter of $\mathcal{F}_\alpha$ in the $L_2(P)$ metric and the maximum variance of each element of the process.
    \begin{corollary}
        We have the following inequalities:
        \begin{equation}
            Diam(\mathcal{F}_\alpha,L_2(P)) \leq \sqrt{2L\alpha}
        \end{equation}
        and \begin{equation*}
            \sup_{\theta' \in \mathcal{F}_\alpha} Var_{X,Y}(f_{\theta'}) \leq L\alpha
        \end{equation*}
    \end{corollary}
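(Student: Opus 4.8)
The plan is to collapse both quantities onto a single geometric object --- the probability that the two sign classifiers $\sign(X^\top\theta)$ and $\sign(X^\top\theta')$ disagree on $X$ --- and then invoke the wedge estimate already established in the proof of the preceding lemma. First I would record the reduction. In the increment $f_\theta-f_{\theta'}$ the common term involving $\sign(X^\top\mubar)$ cancels, so $f_\theta-f_{\theta'}$ equals $(2Y-1)$ times a fixed multiple of $\sign(X^\top\theta')-\sign(X^\top\theta)$; since $(2Y-1)^2=1$ and, by Assumption~\ref{ass:identifiable}, the event $\{X^\top\theta=0\}$ is $\mu$-null, $(f_\theta-f_{\theta'})^2=\mathbf 1\{\sign(X^\top\theta)\ne\sign(X^\top\theta')\}$ $\mu$-almost surely (in the normalization under which the preceding lemma reads $Pf_\theta\le L\,\alpha(\theta,\mubar)$). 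Likewise, taking $\theta'=\mubar$, for which $f_{\mubar}\equiv 0$, gives $f_{\theta'}^2=\mathbf 1\{\sign(X^\top\theta')\ne\sign(X^\top\mubar)\}$ a.s. Hence $d(\theta,\theta')^2=\P(\sign(X^\top\theta)\ne\sign(X^\top\theta'))$ while $Var_{X,Y}(f_{\theta'})\le\E[f_{\theta'}^2]=\P(\sign(X^\top\theta')\ne\sign(X^\top\mubar))$.

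Next I would bound these disagreement probabilities by the corresponding angles. The geometric argument inside the proof of the preceding lemma --- bounding the density of the projection of $X$ onto the plane $\mathrm{span}\{\theta,\theta'\}$ above by $C\,Vol(B_{d-2}(R))$, integrating it over the wedge $\{y:y^\top\theta<0,\ y^\top\theta'>0\}\cap B_d(R)$ of area $O(R^2\alpha(\theta,\theta'))$, and doubling via the symmetry of $X$ --- uses nothing specific to the vector $\mubar$, and hence gives $\P(\sign(X^\top\theta)\ne\sign(X^\top\theta'))\le L\,\alpha(\theta,\theta')$ for any $\theta,\theta'\in S$, with $L$ the constant of that lemma. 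For $\theta,\theta'\in\mathcal F_\alpha$, the triangle inequality for the angular (geodesic) metric on $S^{d-1}$ gives $\alpha(\theta,\theta')\le\alpha(\theta,\mubar)+\alpha(\mubar,\theta')\le 2\alpha$, so $d(\theta,\theta')^2\le 2L\alpha$; taking the supremum over $\mathcal F_\alpha\times\mathcal F_\alpha$ yields $Diam(\mathcal F_\alpha,L_2(P))\le\sqrt{2L\alpha}$. For the variance, $\theta'\in\mathcal F_\alpha$ gives $\alpha(\theta',\mubar)\le\alpha$, hence $Var_{X,Y}(f_{\theta'})\le\P(\sign(X^\top\theta')\ne\sign(X^\top\mubar))\le L\,\alpha(\theta',\mubar)\le L\alpha$; the supremum over $\mathcal F_\alpha$ then completes the proof.

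There is no genuine obstacle here: the geometric content lives entirely in the preceding lemma, and this corollary merely repackages it. The two points deserving a line of care are (i) checking that the $Y$-factor and the shared term $\sign(X^\top\mubar)$ drop out of the $L_2(P)$ increment, so that $d(\theta,\theta')^2$ collapses exactly to the sign-disagreement probability with the matching constant, and (ii) discarding the $\mu$-null set of ties $\{X^\top\theta=0\}$, which is precisely where Assumption~\ref{ass:identifiable} (equivalently Assumption~\ref{ass:X}) is used.
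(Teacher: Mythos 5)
Your proposal is correct and follows essentially the same route as the paper: reduce both the squared $L_2(P)$ increment and the second moment to sign-disagreement probabilities, bound those by $L$ times an angle via the wedge estimate from the preceding lemma, and conclude. The only (cosmetic) difference is in the diameter bound, where you generalize the wedge estimate to an arbitrary pair and apply the triangle inequality for the angular metric, $\alpha(\theta,\theta')\le 2\alpha$, whereas the paper union-bounds the disagreement event through $\mubar$, i.e.\ $\P(\sign(X^\top\theta)\ne\sign(X^\top\theta'))\le \P(\sign(X^\top\theta)\ne\sign(X^\top\mubar))+\P(\sign(X^\top\theta')\ne\sign(X^\top\mubar))$; both yield $2L\alpha$, and both share the paper's harmless elision of the factor $4$ in $(\sign(X^\top\theta)-\sign(X^\top\theta'))^2$, which you at least flag via your normalization remark.
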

    \begin{proof}
        We have for $\theta, \theta' \in \mathcal{F}_\alpha$:
        \begin{align*}
            d(\theta,\theta')^2 &\triangleq ||f_\theta- f_\theta'||^2_{L_2(P)} \\
            &= E_{X,Y}(||f_\theta(X,Y)-f_{\theta'}(X,Y)||^2)\\
            &= E_{X,Y}((2Y-1)^2 (\sign(X^T\theta) - \sign(X^\top \theta'))^2) \\
            &= \P_X(\sign(X^\top\theta) \neq \sign(X^\top\theta')) \\
            &\leq \P_X(\sign(X^\top\theta) \neq \sign(X^\top\mubar)) + \P_X(\sign(X^\top\theta') \neq \sign(X^\top\mubar)) \\
            &\leq 2L \alpha
        \end{align*}
        The last inequality being a consequence of the previous lemma.
        Hence \begin{equation}
            {\rm Diam}(\mathcal{F}_\alpha,L_2(P)) \triangleq sup_{\theta,\theta' \in \mathcal{F}_{\alpha}} d(\theta,\theta') \leq \sqrt{2L\alpha} \ .
        \end{equation}

        We also similarily bound the variance for $\theta \in \mathcal{F}_\alpha$:
        \begin{align*}
            {\rm Var}_{X,Y}(f_\theta) &\leq E(f_\theta^2) \\
            &\leq \P_X(\sign(X^\top \theta) \neq \sign(X^\top \mubar)) \\
            &\leq L \alpha \ .
        \end{align*}
    \end{proof}

    We also state a bound on the metric entropy of $\mathcal{F}_\alpha$ with respect to the distance d which uses that this distance is bounded by the square root of the angle.

    \begin{lemma}
        Let $\alpha \leq \alpha_0$ and $\delta > 0$. The covering number of $\mathcal{F}_\alpha$ with respect to $d$ is bounded by:
        \begin{equation} \label{eq:metric entropy}
            \log(\mathcal{N}(\mathcal{F}_\alpha,d,\delta)) \leq (d-1) \log\left(\frac{3 \pi \alpha}{2\delta}\right)
        \end{equation}
    \end{lemma}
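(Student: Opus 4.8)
The plan is to push the problem off of the data-dependent pseudometric $d$ and onto the ordinary angular metric on the sphere, where $\mathcal{F}_\alpha$ is just a spherical cap and classical volumetric covering estimates apply. The first step is to record the comparison $d(\theta,\theta')^2 \le L\,\alpha(\theta,\theta')$ for $\theta,\theta'\in{\cal S}^{d-1}$: this is exactly the argument of the lemma preceding this one ($d(\theta,\theta')^2$ equals, up to an absolute constant, $\mathbb{P}_X(\sign(X^\top\theta)\ne\sign(X^\top\theta'))$, and the double wedge on which the two signs disagree has Lebesgue mass at most $L\,\alpha(\theta,\theta')$ by the density and support bounds of Assumption~\ref{ass:X}), now run with $\theta'$ playing the role of $\mubar$. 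In particular $d\le\sqrt{L\,\alpha(\cdot,\cdot)}$, so any $\varepsilon$-net of $\mathcal{F}_\alpha$ in the angular metric is automatically a $\sqrt{L\varepsilon}$-net in $d$; it then suffices to bound $\mathcal{N}(\mathcal{F}_\alpha,\alpha(\cdot,\cdot),\varepsilon)$ and choose $\varepsilon$ with $\sqrt{L\varepsilon}=\delta$.

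Next I would carry out the geometric step. Since $\mathcal{F}_\alpha=\{\theta:\langle\theta,\mubar\rangle\ge\cos\alpha\}$ is a geodesic ball of radius $\alpha\le\alpha_0\le\pi/4$ — well inside the injectivity radius — the exponential map $\exp_{\mubar}$ carries the Euclidean ball $B_{\mathbb{R}^{d-1}}(0,\alpha)$ onto it, and by positive curvature (a Rauch/Jacobi-field comparison) this map is $1$-Lipschitz onto $(\mathcal{F}_\alpha,\alpha(\cdot,\cdot))$. Hence $\mathcal{N}(\mathcal{F}_\alpha,\alpha(\cdot,\cdot),\varepsilon)\le\mathcal{N}(B_{\mathbb{R}^{d-1}}(0,\alpha),\|\cdot\|,\varepsilon)$, and the standard volume-ratio argument for Euclidean balls bounds the latter by $(3\alpha/\varepsilon)^{d-1}$ whenever $\varepsilon\le\alpha$; when $\varepsilon>\alpha$ (equivalently $\delta$ exceeds the $d$-diameter of $\mathcal{F}_\alpha$) a single point already covers $\mathcal{F}_\alpha$ and the bound is trivial.

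Finally I would combine: taking $\varepsilon=\delta^2/L$ gives $\log\mathcal{N}(\mathcal{F}_\alpha,d,\delta)\le(d-1)\log(3L\alpha/\delta^2)$, and then $\alpha\le\pi/4$ together with the explicit expression for $L$ (which, as noted after the earlier lemma, stays bounded in $d$) lets one rewrite the right-hand side in the stated shape $(d-1)\log(3\pi\alpha/(2\delta))$, up to bookkeeping of absolute constants. I expect the second step to be the main obstacle: obtaining a clean, dimension-honest covering of the spherical cap (checking the exponential-map/packing argument is legitimate within the injectivity radius and that no hidden $d$-dependent factor enters the resolution conversion), and, relatedly, pinning down the constant in the $d$-versus-angle comparison — this is the only place the density lower bound near the origin and the bounded support in Assumption~\ref{ass:X} are actually doing work, since without them $X$ could concentrate on the sign-disagreement wedge and break the comparison.
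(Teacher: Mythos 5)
Your proposal follows essentially the same route as the paper: both arguments are volumetric bounds on the covering number of the spherical cap $\mathcal{F}_\alpha$, yours via the exponential map onto a Euclidean ball of radius $\alpha$ in $\R^{d-1}$ plus the standard volume-ratio bound, the paper's via the spherical cap area formula $\sigma(\mathcal{F}_\alpha)=w_{d-2}\int_0^\alpha \sin(t)^{d-2}\,dt$ together with $\tfrac{2}{\pi}t\le\sin t\le t$ and packing--covering duality; these are interchangeable and both are dimension-honest. The one place your write-up glosses over something real is the final conversion: carrying $d\le\sqrt{L\,\alpha(\cdot,\cdot)}$ through honestly, as you do, yields $\log\mathcal{N}(\mathcal{F}_\alpha,d,\delta)\le(d-1)\log\bigl(3L\alpha/\delta^2\bigr)$, and this is \emph{not} the stated $(d-1)\log\bigl(3\pi\alpha/(2\delta)\bigr)$ ``up to bookkeeping of absolute constants'' --- the dependence on $\delta$ differs ($\delta^{-2}$ versus $\delta^{-1}$), and the inequality between the two would require $\delta\gtrsim L$. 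In fact the paper's own proof never performs this conversion: it bounds the covering number in the \emph{angular} metric and states that as the result, so the literal statement is the angular entropy bound and the pseudometric conversion is left implicit. The discrepancy is immaterial downstream, since inside Dudley's integral $\log(c/\delta^2)=2\log(\sqrt{c}/\delta)$ only shifts constants, but you should either prove the angular version (matching the statement) or state the $\delta^2$ version and adjust the entropy integral accordingly, rather than claiming they coincide. One small mischaracterization: the comparison $d(\theta,\theta')^2\le L\,\alpha(\theta,\theta')$ uses only the density \emph{upper} bound $C_X$ and the support radius $R$ from Assumption~\ref{ass:X}; the density lower bound near the origin plays no role in the entropy estimate (it is used only for the curvature lower bound).
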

    \begin{proof}
       The proof is standard and upper bounds the packing number to bound the covering one, see for example \cite{wainwright_high-dimensional_2019}.
       Let $\alpha \leq \alpha_0$ and $\delta > 0$. Let $S_k$ denote the unit sphere in $R^{k+1}$ and recall that $S \triangleq S_{d-1}$.
       Let $\sigma$ be the surface measure on $S_k$ (where the dimension is implied by abuse of notation) and $w_k = \sigma(S^k)$ be the surface of the sphere $S_k$.
       We have by the spherical cap area formula since $\alpha \leq \alpha_0 <\frac{\pi}{2}$:
       \[
       \sigma(\mathcal{F}_\alpha) = w_{d-2} \int_0^\alpha \sin(t)^{d-2} dt
       \]
       Using the inequalities for $0 \leq t \leq\frac{\pi}{2}$: \[
       \frac{2}{\pi} t \leq \sin(t) \leq t
       \]
       We have that: \begin{align}
           w_{d-2} (\frac{2}{\pi})^{d-1}  \int_0^\alpha t^{d-2} dt &\leq \sigma(\mathcal{F}_\alpha) \leq w_{d-2}  \int_0^\alpha t^{d-2} dt\\
           \frac{w_{d-2}}{d-1} (\frac{2}{\pi})^{d-1} \alpha^{d-1} &\leq \sigma(\mathcal{F}_\alpha) \leq  \frac{w_{d-2}}{d-1} \alpha^{d-1}{\label{equation:area-bound}}
       \end{align}
       where notice in the first line that $(\frac{2}{\pi})^{d-1} \leq (\frac{2}{\pi})^{d-2} \leq 1$.
       If $\delta \geq \alpha$, then the inequality \ref{eq:metric entropy} trivially holds since $\mathcal{F}_\alpha$ can be covered by itself and thus $\mathcal{N}(\mathcal{F}_\alpha,d,\delta) = 1$.
       Suppose now that $\delta < \alpha$ and let $M:= \mathcal{M}(\mathcal{F}_\alpha,d,\delta)$ be the $\delta-$packing number of $\mathcal{F}_\alpha$ which upper bounds the covering number \cite{wainwright_high-dimensional_2019}.
       Let $\{x_1,\dots,x_M \}$ be a maximal packing of size $M$. Then, by maximality $\mathcal{F_\alpha} \subset \cup_{i=1}^K \mathcal{F}_\delta(x_i)$ and $ \mathcal{N}(\mathcal{F}_\alpha,d,\delta)\leq K$ where $\mathcal{F}_\delta(x_i) = \{\theta \in S / \alpha(\theta,x_i) \leq \delta\}$.
       On the other hand, we have for $u \in \mathcal{F}_\delta(x_i)$:
       \[
       \alpha(u,\mubar) \leq \alpha(u,x_i) + \alpha(x_i,\mubar) \leq \alpha + \frac{\delta}{2}
       \]
       Hence:
       \[
       \cup_{i=1}^K \mathcal{F}_\delta(x_i) \subset \mathcal{F}_{\alpha + \frac{\delta}{2}}
       \]
       Since $\{\mathcal{F}_\delta(x_i)\}_{i \in [K]}$ are disjoint sets, we deduce using the surface inequalities \ref{equation:area-bound} that:
       \[
        K \frac{w_{d-2}}{d-1} (\frac{2}{\pi})^{d-1} \delta^{d-1} \leq K \sigma(\mathcal{F}_{\frac{\delta}{2}}) \leq \frac{w_{d-2}}{d-1}  (\alpha + \frac{\delta}{2})^{d-1}
       \]
       and therefore, since $\delta \leq \alpha$:
       \begin{align*}
           \mathcal{N}(\mathcal{F}_\alpha,d,\delta) \leq K &\leq (\frac{\pi}{2})^{d-1}(1+ \frac{\alpha}{\delta})^{d-1} \\
           &\leq (\frac{3 \pi \alpha}{2 \delta})^{d-1}
       \end{align*}
       which achieves the desired bound.
           
    \end{proof}

    Giving these lemmas, we go back to bounding the supremum of the process $sup_{\mathcal{F}_\alpha} |P_n f - Pf|$. We first bound its expectation using Dudley's entropy integral bound (which is enough for our use case to get the desired bound).

    \begin{theorem}
        Let $\alpha \geq 0$. There exists a constant C' that depends on the dimension and L such that:
        \begin{equation}
            \ex{\sup_{\mathcal{F}_\alpha} |P_n f - Pf|}\leq C' \sqrt{\frac{(d-1)\alpha}{n}}
        \end{equation}
    \end{theorem}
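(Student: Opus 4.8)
The plan is to control the localized empirical process by first passing to a Rademacher complexity via symmetrization and then applying Dudley's entropy integral, feeding in the two structural facts already established for $\mathcal F_\alpha$: the metric-entropy bound of the preceding lemma, and the variance/diameter bound of the corollary, which places $\mathcal F_\alpha$ inside an $L_2(P)$-ball of radius $\sqrt{2L\alpha}$ about the zero function $f_{\bar\mu}$.

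First I would symmetrize. Since $f_{\bar\mu}\equiv 0$ lies in $\mathcal F_\alpha$, the standard symmetrization inequality yields
\[
\E\Bigl[\sup_{\theta\in\mathcal F_\alpha}\bigl\lvert P_nf_\theta-Pf_\theta\bigr\rvert\Bigr]\ \le\ 2\,\E_{X,Y}\,\E_{\varepsilon}\Bigl[\sup_{\theta\in\mathcal F_\alpha}\Bigl\lvert\frac1n\sum_{i=1}^n\varepsilon_i f_\theta(X_i,Y_i)\Bigr\rvert\Bigr],
\]
with $\varepsilon_i$ i.i.d.\ Rademacher. Conditioning on the sample, $\theta\mapsto\frac1n\sum_i\varepsilon_i f_\theta(Z_i)$ is a sub-Gaussian process for the empirical pseudo-metric $d_n(\theta,\theta')^2=\frac1n\sum_i\bigl(f_\theta-f_{\theta'}\bigr)^2(Z_i)$ with variance proxy $d_n(\theta,\theta')^2/n$; note $d_n(\theta,\theta')^2$ equals (a constant times) the empirical fraction of points on which $\mathrm{sign}(X_i^\top\theta)$ and $\mathrm{sign}(X_i^\top\theta')$ disagree.

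Next I would apply Dudley's entropy integral to this conditional process and take expectations over the sample:
\[
\E_{\varepsilon}\Bigl[\sup_{\theta\in\mathcal F_\alpha}\bigl\lvert\frac1n\sum_{i=1}^n\varepsilon_i f_\theta(Z_i)\bigr\rvert\Bigr]\ \le\ \frac{C_1}{\sqrt n}\int_0^{\mathrm{diam}(\mathcal F_\alpha,d_n)}\sqrt{\log\mathcal N\bigl(\mathcal F_\alpha,d_n,u\bigr)}\,du .
\]
Two points need care. (i) The covering number must be bounded in the empirical metric uniformly over samples: this follows because the covering estimate of the preceding lemma is purely geometric (a packing of a spherical cap of angular radius $\alpha$) and the sign class $\{x\mapsto\mathrm{sign}(x^\top\theta)\}$ is VC of dimension $O(d)$, so the bound $\log\mathcal N(\mathcal F_\alpha,d_n,u)\lesssim(d-1)\log\!\bigl(C_2 L\alpha/u^2\bigr)$ transfers to $d_n$ up to constants, using $d_n(\theta,\theta')^2\lesssim L\,\alpha(\theta,\theta')$ on the high-probability event already in force (or after a short concentration step). (ii) The integral is truncated at the diameter of $\mathcal F_\alpha$, which the corollary controls by $\sqrt{2L\alpha}$ in $L_2(P)$ and hence, on the same event, by $O(\sqrt{L\alpha})$ empirically. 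This truncation at an $O(\sqrt\alpha)$ radius --- i.e.\ the localization to the cap $\mathcal F_\alpha$ --- is precisely what produces the $\sqrt\alpha$ factor; without it one only obtains the dimension-only rate $\sqrt{d/n}$.

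It then remains to evaluate the integral. Taking $D=\sqrt{2L\alpha}$ as the upper limit and substituting $u=Dt$, $t\in(0,1)$, the logarithm becomes $\log\!\bigl(C_2/(2t^2)\bigr)$, free of both $\alpha$ and $d$, so
\[
\int_0^{D}\sqrt{(d-1)\log\!\bigl(C_2 L\alpha/u^2\bigr)}\,du\ =\ \sqrt{2L\alpha}\,\sqrt{d-1}\int_0^1\sqrt{\bigl(\log(C_2/(2t^2))\bigr)_+}\,dt\ =\ c_0\sqrt{L}\,\sqrt{(d-1)\alpha},
\]
with $c_0$ a finite universal constant (a $\sqrt{\log}$-type integrand is integrable at $0$). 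Dividing by $\sqrt n$ and absorbing the symmetrization and chaining constants gives $\E[\sup_{\mathcal F_\alpha}\lvert P_nf-Pf\rvert]\le C'\sqrt{(d-1)\alpha/n}$, with $C'$ depending on $L$ (and, if $\sqrt{d-1}$ is not extracted cleanly at every step, mildly on $d$). I expect the main obstacle to be point (ii): certifying that the process restricted to the cap genuinely has $O(\sqrt\alpha)$ diameter so that Dudley's integral can be truncated there --- everything else (symmetrization, the geometric covering bound, integrability of $\sqrt{\log}$, and the change of variables) is routine.
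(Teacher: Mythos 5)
Your proposal is correct and follows essentially the same route as the paper: localize to the cap $\mathcal{F}_\alpha$, truncate Dudley's entropy integral at the diameter $\sqrt{2L\alpha}$ supplied by the corollary, feed in the $(d-1)\log\bigl(\tfrac{3\pi\alpha}{2\delta}\bigr)$ covering bound, and evaluate the integral by the change of variables that extracts $\sqrt{(d-1)\alpha}$ --- this is exactly the paper's computation. The only divergence is procedural: the paper applies Dudley directly to $P_nf-Pf$ in the population $L_2(P)$ metric (simply asserting sub-Gaussianity of the process with respect to $d$), whereas you symmetrize first and chain in the empirical metric $d_n$, then transfer the entropy bound via a VC-type argument; your route is the more careful standard one, and the transfer step you flag as the main obstacle is real but routine and does not change the bound.
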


    \begin{proof}
        Note that the process is subgaussian with respect to $d$ and has finite metric entropy.
        Using Dudley's entropy bound and the previous lemmas, we have for a universal constant C' (which changes each line with abuse of notation):
        \begin{align*}
            \E[{\sup_{\mathcal{F}_\alpha} |P_n f - Pf|}] &\leq \frac{C'}{\sqrt{n}} \int_0^{Diam(\mathcal{F}_\alpha,L_2(P))} \sqrt{\log(\mathcal{N}(\mathcal{F}_\alpha,d,\delta))} d\delta \\
            &\leq  \frac{C'}{\sqrt{n}} \int_0^{\sqrt{2L\alpha}} \sqrt{(d-1)\log(\frac{3 \pi \alpha}{2 \delta})} d\delta \\
            &\leq \frac{C'\sqrt{d-1}}{\sqrt{n}} \sqrt{\alpha} \int_0^1 \sqrt{\log(\frac{3\pi \sqrt{\alpha_0}}{4L \delta})} d\delta \\
            &\leq C' \sqrt{\frac{(d-1)\alpha}{n}}
        \end{align*}
        using the change of variables formula, $\alpha \leq \alpha_0 \leq \frac{\pi}{4}$ and monotonicity of the logarithm.
    \end{proof}

    Now, given the bound on expectation and the previous lemmas, we can bound the supremum in high probability using Bousquet's version of Talagrand's inequality. (For simplicity, we omit discussions about measurability, Bousquet's theorem is stated for a countable index family to guarantee the measurability of the supremum but can be extended to our setting as well.)

    \begin{theorem}
        There exists a constant K such that with probablity at least $1-\delta$, we have:
        \begin{equation}
            \sup_{\mathcal{F}_\alpha} |P_n f - Pf| \leq K\left[\sqrt{\frac{\alpha (d-1 + \log(\frac{1}{\delta}))}{n}} + \frac{\log(\frac{1}{\delta})}{n}\right]
        \end{equation}
    \end{theorem}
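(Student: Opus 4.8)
The plan is to invoke Bousquet's version of Talagrand's concentration inequality for the supremum of an empirical process, feeding it the three ingredients already assembled above. Write $Z \triangleq \sup_{\theta\in\mathcal{F}_\alpha}\bigl(P_n f_\theta - P f_\theta\bigr)$. The required inputs are: (i) a uniform envelope bound $b \triangleq \sup_{\theta}\lVert f_\theta\rVert_\infty \leq 2$, which holds because $2Y-1\in\{-1,1\}$ and $\sign(X^\top\theta) - \sign(X^\top\mubar)\in\{-2,0,2\}$; (ii) the weak-variance bound $s^2 \triangleq \sup_{\theta\in\mathcal{F}_\alpha}\var_{X,Y}(f_\theta) \leq L\alpha$, which is exactly the second inequality of the Corollary above; and (iii) the mean bound $\E[Z] \leq \ex{\sup_{\mathcal{F}_\alpha}|P_n f - Pf|} \leq C'\sqrt{(d-1)\alpha/n}$ from the preceding Dudley-entropy theorem.

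With these in hand, Bousquet's inequality gives, for any $\delta\in(0,1)$ and $t=\log(1/\delta)$, with probability at least $1-\delta$,
\[
Z \ \leq\ \E[Z] \;+\; \sqrt{\tfrac{2t}{n}\bigl(s^2 + 2b\,\E[Z]\bigr)} \;+\; \tfrac{b\,t}{3n}\,.
\]
To pass from the one-sided $Z$ to the two-sided quantity $\sup_{\mathcal{F}_\alpha}\lvert P_n f - Pf\rvert$, apply the same bound to the class $\{-f_\theta\}$ and take a union bound; this only replaces $\delta$ by $\delta/2$ and hence costs a constant factor that is absorbed into $K$.

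It then remains to simplify the right-hand side into the stated form. By subadditivity of the square root, $\sqrt{s^2 + 2b\,\E[Z]} \leq \sqrt{s^2} + \sqrt{2b\,\E[Z]}$; substituting (ii) yields a term $\lesssim \sqrt{\alpha\,t/n}$, which combines with (iii) through $\sqrt{u}+\sqrt{v}\leq\sqrt{2(u+v)}$ to give $\lesssim \sqrt{\alpha(d-1+t)/n}$. The leftover cross term $\sqrt{b\,\E[Z]\,t/n}$ is handled by AM--GM, $\sqrt{uv}\leq\tfrac12(u+v)$, taking $u=\E[Z]\lesssim\sqrt{(d-1)\alpha/n}$ (already of the target order) and $v=b\,t/n\lesssim t/n$; both pieces are thus absorbed into the two target terms. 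Collecting all absolute constants (including $L$, $C'$, the $1/3$, and the union-bound factor) into a single $K$ produces the claimed inequality.

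\textbf{Main obstacle.} The difficulty is bookkeeping rather than analysis, in two places. First, Bousquet's theorem is usually stated for a countable index set to guarantee measurability of the supremum; as flagged in the text, one must pass to a countable dense subset of $\mathcal{F}_\alpha$, which is legitimate because $\theta\mapsto f_\theta$ is continuous in $\theta$ in the $L_2(P)$ metric (the diameter estimate in the Corollary quantifies this), so the extension to the full index set is routine. Second, one must be careful to supply Bousquet with the correct variance proxy---the supremum of the per-function variances $\sup_\theta\var(f_\theta)$, which is precisely what the Corollary delivers, rather than the variance of $Z$ itself. Beyond these two points the argument is a direct plug-in.
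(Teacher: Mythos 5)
Your proposal is correct and follows essentially the same route as the paper: both apply Bousquet's version of Talagrand's inequality with the envelope bound $\lVert f_\theta\rVert_\infty\leq 2$, the variance proxy $\sup_\theta \var(f_\theta)\leq L\alpha$ from the corollary, and the Dudley bound $\E[Z]\lesssim\sqrt{(d-1)\alpha/n}$, then absorb everything via subadditivity of the square root and AM--GM into the stated two-term bound. The only cosmetic difference is that you pass through a one-sided bound plus a union bound over $\{\pm f_\theta\}$, while the paper applies the inequality directly to the two-sided supremum; this costs only a constant absorbed into $K$.
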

    \begin{proof}
        Define $Z \triangleq sup_{\mathcal{F}_\alpha} |P_n f - Pf|$.
        We have from the previous lemmas : $\sup_{\theta' \in \mathcal{F}_\alpha} {\rm Var}_{X,Y}(f_{\theta'}) \leq L\alpha$ and $||f||_\infty \leq 2$.
        Thus using Bousquet's inequality, with probability at least $1-\delta$:
        \begin{equation*}
            Z \leq \E(Z) + \sqrt{\frac{2 \log(\frac{1}{\delta})(L\alpha + 2\E(Z))}{n}} + \frac{2\log(\frac{1}{\delta})}{3n} \ .
        \end{equation*}
        We can simplify the bound using sub-additivity of the square root and (AM-GM) inequality to get:
        \begin{equation*}
            Z \leq 2\E(Z) + \sqrt{\frac{2 \log(\frac{1}{\delta})L\alpha}{n}} + \frac{5\log(\frac{1}{\delta})}{3n} \ .
        \end{equation*}
        Finally, using the bound derived on $\E(Z)$ and $\sqrt{x}+\sqrt{y}\leq\sqrt{2(x+y)}$, we conclude that:
        \begin{equation*}
            \sup_{\mathcal{F}_\alpha} |P_n f - Pf| \leq K\left[\sqrt{\frac{\alpha (d-1 + \log(\frac{1}{\delta}))}{n}} + \frac{\log(\frac{1}{\delta})}{n}\right] \ .
        \end{equation*}

    \end{proof}

    Now given the previous concentration inequality and the curvature bound, we can get the desired high probability bound for a fixed $\alpha$ of the form $\alpha \leq U(\alpha)$, what remains is to apply a peeling technique to apply the bound uniformly over $\alpha$ and handle the randomness in $\alphan$. We restate our finite-sample bound on the rate of convergence of $\alphan$.

    \begin{theorem*}
        Let $\alpha_0 \in [0,\frac{\pi}{4}]$.Let $\delta >0$. Let $n_0$ be the universal constant such that with probability at least $1 - \delta: \forall n \geq n_0: \alphan \leq \alpha_0$. Recall the constant $C_0$ from the curvature inequality and the constant $K$ from the concentration inequality. Then with probability at least $1 - 2 \delta, \forall n \geq n_0$:
        \begin{equation*}
            \alphan \triangleq \alpha(\mun,\mubar) \leq \left(\frac{K}{C_0}\right)^\frac{2}{3} \left(\frac{d-1 + \log(\frac{\log(n)}{\delta})}{n}\right)^\frac{1}{3} + \sqrt{\frac{\log(\frac{\log(n)}{\delta})}{n}}
        \end{equation*}
    \end{theorem*}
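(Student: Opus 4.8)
The argument assembles the three facts already in hand. First, the \emph{basic inequality} $\cL(\mun)-\cL(\mubar)\le\sup_{\theta\in\mathcal{F}_{\alphan}}|P_nf_\theta-Pf_\theta|$, where $\mathcal{F}_s=\{\theta\in\mathcal{S}^{d-1}:\alpha(\theta,\mubar)\le s\}$; this follows from optimality of $\mun$ for $\lbinemp$ after adding and subtracting the population loss, and it automatically \emph{localizes} the empirical process to the cap $\mathcal{F}_{\alphan}$. Second, the local curvature bound $\cL(\theta)-\cL(\mubar)\ge C_0\,\alpha(\theta,\mubar)^2$, valid for every $\theta$ with $\alpha(\theta,\mubar)\le\alpha_0$. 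Third, the fixed-radius deviation bound: for each \emph{non-random} $\alpha\le\alpha_0$, with probability at least $1-\delta$, $\sup_{\theta\in\mathcal{F}_\alpha}|P_nf_\theta-Pf_\theta|\le K\bigl(\sqrt{\alpha(d-1+\log(1/\delta))/n}+\log(1/\delta)/n\bigr)$. I would first condition on the event $\mathcal{E}_0=\{\forall n\ge n_0:\alphan\le\alpha_0\}$, which by the Corollary has probability at least $1-\delta$; on $\mathcal{E}_0$ the curvature bound applies at $\theta=\mun$, so combining the first two facts gives the \emph{self-referential} inequality $C_0\,\alphan^2\le\sup_{\theta\in\mathcal{F}_{\alphan}}|P_nf_\theta-Pf_\theta|$.

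The obstacle is that $\alphan$ is random, so the deviation bound cannot be invoked at radius $\alpha=\alphan$; this is resolved by a peeling (dyadic slicing) argument, in the spirit of \cite{kuchelmeister_finite_2024}. Set $\alpha_j\triangleq2^{-j}\alpha_0$ for $j=0,1,\dots,J$ with $J\triangleq\lceil\log_2(\alpha_0\sqrt n)\rceil$, so that $\alpha_J\le n^{-1/2}$; all $\alpha_j\le\alpha_0\le\pi/4$. Apply the deviation bound at each radius $\alpha_j$ with confidence parameter $\delta/(J+1)$ and union bound: on an event $\mathcal{E}_1$ of probability at least $1-\delta$ we have, simultaneously for all $j$, $\sup_{\theta\in\mathcal{F}_{\alpha_j}}|P_nf_\theta-Pf_\theta|\le K\bigl(\sqrt{\alpha_j(d-1+L_n)/n}+L_n/n\bigr)$ with $L_n\triangleq\log((J+1)/\delta)=O(\log(\log(n)/\delta))$. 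On $\mathcal{E}_0\cap\mathcal{E}_1$ (probability at least $1-2\delta$) we conclude: if $\alphan\le\alpha_J$ then $\alphan\le n^{-1/2}\le\sqrt{L_n/n}$ and we are done; otherwise there is $j\in\{0,\dots,J-1\}$ with $\alpha_{j+1}<\alphan\le\alpha_j$, whence $\mathcal{F}_{\alphan}\subseteq\mathcal{F}_{\alpha_j}$ (monotonicity of the cap) and $\alpha_j<2\alphan$, so
\[
C_0\,\alphan^2\ \le\ \sup_{\theta\in\mathcal{F}_{\alpha_j}}|P_nf_\theta-Pf_\theta|\ \le\ K\Bigl(\sqrt{\tfrac{2\alphan(d-1+L_n)}{n}}+\tfrac{L_n}{n}\Bigr).
\]

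It remains to unwind this inequality in $t\triangleq\alphan$: it has the shape $C_0t^2\le a\sqrt t+b$ with $a\triangleq K\sqrt{2(d-1+L_n)/n}$ and $b\triangleq KL_n/n$. If $a\sqrt t\ge b$ then $C_0t^2\le 2a\sqrt t$, i.e.\ $t\le(2a/C_0)^{2/3}$; otherwise $C_0t^2\le 2b$, i.e.\ $t\le\sqrt{2b/C_0}$; in either case $t\le(2a/C_0)^{2/3}+\sqrt{2b/C_0}$. Substituting $a,b$ gives $(2a/C_0)^{2/3}=(2K/C_0)^{2/3}\bigl(2(d-1+L_n)/n\bigr)^{1/3}$ and $\sqrt{2b/C_0}=\sqrt{2KL_n/(C_0n)}$, and after absorbing absolute constants into $K,C_0$ (and using $K/C_0\gtrsim1$ to replace $\sqrt{2KL_n/(C_0n)}$ by $\sqrt{L_n/n}$) this is exactly the claimed bound $\alpha(\mun,\mubar)\le(K/C_0)^{2/3}\bigl((d-1+\log(\log(n)/\delta))/n\bigr)^{1/3}+\sqrt{\log(\log(n)/\delta)/n}$.

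The step I expect to be most delicate is calibrating the peeling mesh. The grid must be geometric with a fixed ratio, so that $\alpha_j<2\alphan$ on the active shell — this is precisely what preserves the quadratic lower bound in $\alphan$ and hence the cube-root rate — and it must simultaneously be truncated at scale $n^{-1/2}$, so that the union bound over only $O(\log n)$ radii costs merely a $\log\log n$ factor and, below that scale, the additive $\sqrt{L_n/n}$ term dominates trivially. The remaining ingredients — checking $\mubar\in\mathcal{F}_s$ and the monotonicity $s\mapsto\mathcal{F}_s$, verifying that every invocation of the curvature bound has radius at most $\alpha_0\le\pi/4$, and the two-case resolution of $C_0t^2\le a\sqrt t+b$ — are routine bookkeeping.
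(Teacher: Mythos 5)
Your proposal is correct and follows essentially the same route as the paper's proof: localize the empirical process to the cap containing $\mun$, apply the curvature lower bound, peel over $O(\log n)$ dyadic radii with a union bound (costing the $\log\log n$ factor), and resolve the resulting inequality $C_0 t^2 \le a\sqrt{t}+b$. The only cosmetic differences are where you truncate the peeling grid ($n^{-1/2}$ versus the paper's $\alpha_0/n$) and that you spell out the two-case resolution where the paper cites the elementary inequality $x^4\le ax+b \Rightarrow x^2\le (2a)^{2/3}+(2b)^{1/2}$; both are immaterial.
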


    \begin{proof}
        Let $n \geq m_0$. Throughout we condition on the event $\{\alphan \leq \alpha_0\}$ that has probability greater than $1-\delta$.
        We apply a peeling technique for the concentration inequality to hold uniformly.
        Define $K_n = \lceil \log_2(n)\rceil, \alpha_k= \frac{\alpha_0}{2^k}$ for $k\in[0,K_n]$.
        Applying the concentration inequality at level $\alpha_k$ with confidence $\frac{\delta}{K_n + 1}$ defines the event $\mathcal{E}_k \triangleq \{\sup_{\mathcal{F}_{\alpha_k}} |P_n f - Pf| \leq K[\sqrt{\frac{\alpha_k (d-1 + \log(\frac{K_n + 1}{\delta}))}{n}} + \frac{\log(\frac{K_n + 1}{\delta})}{n}] \}$
        Thus, by union bound: 
        \begin{equation*}
            \P(\Omega \triangleq \cap_{k=0}^{K_n} \mathcal{E}_k) \geq 1 - \sum_{k = 0}^{K_n}  \frac{\delta}{K_n + 1} = 1 - \delta
        \end{equation*}
        We condition on $\Omega$ for the rest of the proof.
        
        If $\alphan \leq \frac{\alpha_0}{2^{K_n}} \leq \frac{\alpha_0}{n}$, then the bound is trivial (provided a high enough $n_0$). Otherwise, let k be the unique integer such that $\alpha_{k+1} \leq \alphan \leq \alpha_k$.
        Since $\alphan \in \mathcal{F}_{\alpha_k}$, we have:
        \begin{equation*}
            \cL(\alphan)-\cL(\mubar) \leq \sup_{\mathcal{F}_{\alpha_k}}|P_n f - Pf| \leq K\left[\sqrt{\frac{\alpha_k (d-1 + \log(\frac{K_n + 1}{\delta}))}{n}} + \frac{\log(\frac{K_n + 1}{\delta})}{n}\right] \ .
        \end{equation*}
        Since $\alphan \leq \alpha_0$, we can use the curvature inequality to get the bound on the angle:
        \begin{equation*}
            C_0 \alphan^2 \leq K\left[\sqrt{\frac{\alpha_k (d-1 + \log(\frac{K_n + 1}{\delta}))}{n}} + \frac{\log(\frac{K_n + 1}{\delta})}{n}\right]
        \end{equation*}
        And since $\alpha_k \leq 2\alphan$, we get the bound: 
        \begin{equation*}
            \alphan^2 \leq \frac{K}{C_0}[\sqrt{\frac{ (d-1 + \log(\frac{K_n + 1}{\delta}))}{n}} \sqrt{\alphan} + \frac{\log(\frac{K_n + 1}{\delta})}{n}] \ .
        \end{equation*}

        We can now use the elementary inequality to solve the fixed equation: for $a,b\geq0$, if $x^4 \leq a x + b$, then $x^2 \leq (2a)^{\frac{2}{3}} + (2b)^{\frac{1}{2}}$.
        We then get the desired bound on the angle using that $K_n +1 \leq 2\log(n)$ and by absorbing the numerical constants into K:
        \begin{equation*}
            \alphan \triangleq \alpha(\mun,\mubar) \leq \left(\frac{K}{C_0}\right)^\frac{2}{3} \left(\frac{d-1 + \log(\frac{\log(n)}{\delta})}{n}\right)^\frac{1}{3} + \sqrt{\frac{\log(\frac{\log(n)}{\delta})}{n}}
        \end{equation*}

    \end{proof}
\section{Implementation details}\

\paragraph{Selecting a subset of surveys to learn} We select a subset of 200 personas to learn preferences from from \cite{tianyi2k500}. To allow different modes of evaluation of heterogeneity, we select this subset in two ways. The first half of the subset is selected by clustering the users using k-medoids(k = 100)on the survey results by summing normalized Hamming distances for categorical results and average per-feature Euclidean distance for the numerical ones. The selected users are the resulting k-medoids. Summary statistics for the demographics section of the surveys from \cite{tianyi2k500} are shown in Appendix \ref{app: heterogeneity}The second half is randomly selected without replacement among remaining test subjects in the dataset.
\paragraph{Learning individual preference vectors.} For each triplet $(z,x_1,x_2)$ in the Helpfulness dataset \cite{RLHFAnthropic1}. As is common practice, we preprocess the embeddings $\phi(z,x)$ by concatenating the query and output and passing them through gpt-oss-20B.
The choice probability is obtained by querying gpt-4o-mini using the person's survey summary. Since we get the exact logits, we can learn individual preference vectors for each user i with a much lower sample complexity relative to using binary signals. Hence we train the preference vector of each user using the cross entropy loss with respect to the true logits: $\cL(\theta) = \E_X(\P(Y=1 | X,\beta_i) \log(\sigma(X^\top \theta)) + (1-\P(Y=1 | X,\beta_i)) \log(\sigma(-X^\top \theta)))$. Training is done using Kaiming initialization, an exponential learning rate scheduler with learning rate $10^{-4}$ and $\gamma = 0.97$ and a batch size of 256. We train for a maximum of 10 epochs with early stopping based on validation loss monitoring and a patience of 2 epochs. 

\paragraph{Learning aggregate preferences} For the EM and RLHF estimators, we tune over a $20\%$ validation set on their respective losses the learning rate $lr \in \{10^{-6},10^{-5},10^{-4},10^{-3},10^{-2} \}$ and batch size $\in \{1,10,20,30,40,50\}$ and find optimal values of $lr = 10^{-4}$ and batch size of 50. We also experimented with different optimizers(SGD, Adam,AdamW) and learning rate schedulers (constant, cosine and exponential) and retain SGD and constant learning rate. For the sign estimator,we train using the approximation $\sign(t) \approx 2 \sigma(\lambda t) - 1$ and exponentially anneal $\lambda$ from $1$ to a maximum of $15$ with an exponential factor of $1.02$. We found that values in the low 10s are sufficient to ensure convergence since $\sigma(t) = \frac{1}{1 + e^{-t}}$ quickly saturates. We tune the learning rate as well in $\{10^{-6},10^{-5},10^{-4},10^{-3},10^{-2} \}$ and retain a learning rate of $lr = 0.01$. We find that such a relatively high learning rate works well since the loss is bounded as opposed to the logistic loss. All experiments are averaged over 20 runs.

\section{EM algorithm details} \label{app:EM} 

The EM algorithm alternates between the so-called E-step and M-step until convergence: (E) constructs the expected likelihood---a surrogate of the mixture-likelihood using Bayes rule---and (M) updates the model parameter by optimizing the expected likelihood~\citep[Chap.~14]{train2009discrete}. 

\begin{algorithm}[H]
\caption{EM algorithm, adapted from \cite{chakraborty2024maxmin}}
\label{alg:cdm}
\begin{algorithmic}[1]
\Require Preference data of n users $\mathcal{D} = \cup_i^n \mathcal{D}_i$; Number of clusters $K$; $\mathcal{H}=\bigcup_{i=1}^{K}\mathcal{H}_i$; feature map $\phi$ ;Initialization strategy; convergence criterion;
\State Initialize $\{{\hat{\beta}_i}\}_{i=1}^{K}$ with initialization strategy.

\While{not converged}
  \ForAll{$h \in \mathcal{H}$}
    \State \textbf{E-step (hard cluster assignment):} assign $h$ to the $i$-th cluster such that
    \Statex\[
      i \gets \operatorname*{arg\,min}_{i \in \{1,\ldots,K\}}
      \sum_{(\mathbf{z},\mathbf{x}_1,\mathbf{x}_2,y)\in\mathcal{D}_h}
      \cL(\mathbf{\hat{\beta}_i},\mathbf{z},\mathbf{x}_1,\mathbf{x}_2,\mathbf{y})
    \]
    \Statex where
    \Statex $ \cL(\cdot)= y \log\left(\sigma((\phi(z,x_1)- \phi(z,x_2))^\top \hat{\beta}_i)
\right)+(1-y)\log\left(1- \sigma((\phi(z,x_1) - \phi(z,x_2))^\top \hat{\beta}_i)
\right) \, .
    $
  \EndFor
  \State \textbf{M-step:} Update each $\hat{\beta}_i$ for $i=1,\ldots,K$ by minimizing the negative log-likelihood loss $\cL$ on the assigned users' data.
\EndWhile
\State \Return $\frac{1}{K} \sum_{i=1}^K \hat{\beta}_i$
\end{algorithmic}
\end{algorithm}

For the convergence criterion, we run the algorithm until clusters stop updating with a random $\mathcal{N}(\hat{\beta}^{RLHF}, \frac{1}{d} I_d)$ around the mean.

For each $K$, the EM algorithm outputs the mixture components $\hat{\beta}^{\rm EM, K}_1,\ldots,\hat{\beta}^{\rm EM, K}_K$ and user assignment vector $(\delta_{i,k})_{i\in [n_u],k\in [K]}$; our EM estimate of the population mean utility is $\hat{\beta}^{\rm EM} = \frac{1}{n_u}\cdot (\sum_{i=1}^{n_i}\sum_{k=1}^K \delta_{i,k}\cdot \hat{\beta}^{\rm EM}_k)$.

\section{Prompt used to label}
\begin{promptBox}
{\figtiny

% Instruction
\pLabel{instructioncolor}{} {\color{instructioncolor}
You are a persona in a survey. You are given a survey that your persona answered and that reflects its preferences. 
You will be given an extra question at the end, and you will need to answer it based on your persona's preferences.
You should answer with just \textbf{1 or 2}.}

\medskip

% Persona summary
\pLabel{personacolor}{Persona summary.} {\color{personacolor}
The person's demographics are the following...
    \begin{itemize}[leftmargin=*]
        \item \textbf{Geographic region}: Northeast (PA, NY, NJ, RI, CT, MA, VT, NH, ME)
        \item \textbf{Gender}: Male
        \item \textbf{Age}: 30-49
    \end{itemize}
    \hspace{6cm} \textbf{\vdots} \\
    
    The person was finally asked the question "Please describe the type of person you actually are. That is, write about the traits and behaviors you actually possess. Please write at least 3 sentences." They answered: "i'm in a lot of pain that keeps me from doing the things i want to do; even when i have the physical capacity i often can't focus on doing them. i do my best to fulfill my responsibilities, regardless, but i often come short of my own standards. i love to play with words and ideas, i love to create some sort of order out of chaos, i love to learn new things, i love to laugh and make my friends laugh."
    
}

\medskip

% Question
\pLabel{questioncolor}{Question you will be answering (please answer with just 1 or 2):} {\color{questioncolor}
If you were the human part in the following conversations with an AI assistant, which option would be more helpful to you? \\ Question type: Single choice .\\
Options:
}

\medskip

% RLHF alternative 1
\pLabel{altonecolor}{1.}
\begin{tcolorbox}[colback=promptcolor!4!white,colframe=promptcolor!18!white,
                  boxrule=0.3pt,left=4pt,right=4pt,top=3pt,bottom=3pt]
{\figtiny\color{altonecolor}
% (Optional) keep your colored speaker labels \Human / \Assistant here
\textbf{\textcolor{RoyalBlue}{Human:}} What are the best movies that are set in space?

    \textbf{\textcolor{purple}{Assistant:}} I have a number of recommendations for science fiction movies set in space that might interest you, such as Star Wars, District 9, The Martian, Moon, and Interstellar, as well as classics such as 2001: A Space Odyssey and Star Trek.

}
\end{tcolorbox}

\medskip

\pLabel{alttwocolor}{2.}

\begin{tcolorbox}[colback=promptcolor!4!white,colframe=promptcolor!18!white,
                  boxrule=0.3pt,left=4pt,right=4pt,top=3pt,bottom=3pt]
{\figtiny\color{alttwocolor}
\textbf{\textcolor{RoyalBlue}{Human:}} What are the best movies that are set in space?

\textbf{\textcolor{purple}{Assistant:}} Well it’s hard to make a list that works, but here’s one we can start with: \\“2001: A Space Odyssey”\\“Star Wars”\\“Blade Runner”\\“Galaxy Quest”\\“Apollo 13”

}
\end{tcolorbox}

}
\end{promptBox}

\vspace{-5em}
\section{Labelling}
\begin{figure}[H]
\centering
\vspace{-1em}
\begin{adjustbox}{width=\textwidth,center}
\begin{minipage}[t]{0.52\textwidth}
  % --- Persona 1 survey summary ---
  \begin{personaBox}[personaA]{Person 10 Survey Summary}
    {\figtiny
    The person's demographics are the following...
    \begin{itemize}[leftmargin=*]
        \item \textbf{Geographic region}: Northeast (PA, NY, NJ, ...)
        \item \textbf{Gender}: Male
        \item \textbf{Age}: 30-49
        \item \textbf{Education level}: Some college, no degree
        \item \textbf{Race}: Other
        \item \textbf{Citizen of the US}: Yes
        \item \textbf{Marital status}: Living with a partner
        \item \textbf{Religion}: Other
        \item \textbf{Religious attendance}: Once or twice a month
    \end{itemize}
    The person's Big 5 scores are the following:
    \begin{itemize}[leftmargin=*]
        \item \textbf{score\_extraversion} = 1.75 (15th percentile)
    \end{itemize}
    \hspace{3cm} \textbf{\vdots}
    }
  \end{personaBox}
\end{minipage}\hfill
\begin{minipage}[t]{0.52\textwidth}
  % --- Persona 2 survey summary ---
  \begin{personaBox}[personaB]{Person 76 Survey Summary}
    {\figtiny
    The person's demographics are the following...
    \begin{itemize}[leftmargin=*]
        \item \textbf{Geographic region}: South (TX, OK, AR, LA, KY, TN, MS, AL, WV, DC, MD, ...)
        \item \textbf{Gender}: Female
        \item \textbf{Age}: 65+
        \item \textbf{Education level}: Some college, no degree
        \item \textbf{Race}: White
        \item \textbf{Citizen of the US}: Yes
        \item \textbf{Marital status}: Widowed
        \item \textbf{Religion}: Other
        \item \textbf{Religious attendance}: More than once a week
    \end{itemize}
    The person's Big 5 scores are the following:
    \begin{itemize}[leftmargin=*]
        \item \textbf{score\_extraversion} = 3.375 (71st percentile)
    \end{itemize}
    \hspace{3cm} \textbf{\vdots}
    }
  \end{personaBox}
\end{minipage}

\end{adjustbox}

\vspace{-0.8em}

\begin{centerBand}
{\figtiny
\begin{minipage}[t]{0.51\textwidth}
  \textbf{RLHF Alternative 1}\par
  \begin{tcolorbox}[colback=midband!6!white,colframe=midband!25!white,
                    boxrule=0.3pt,left=4pt,right=4pt,top=4pt,bottom=4pt]
    \raggedright
    \textbf{\textcolor{RoyalBlue}{Human:}} What should I do if my boss is always mean to me?

    \textbf{\textcolor{purple}{Assistant:}} I’m sorry to hear about that. Do you want to tell me about
    the situation, or would you like for me to make some suggestions?

    \textbf{\textcolor{RoyalBlue}{Human:}} He is ignorant and mean. I feel bad going to work. What should I do?

    \textbf{\textcolor{purple}{Assistant:}} I’m sorry you feel that way. Did you notice if he is nice
    to other people? Or does he treat you in a way that’s different from other people?

    \textbf{\textcolor{RoyalBlue}{Human:}} He is mean to about half of us. I don't know what his problem is.

    \textbf{\textcolor{purple}{Assistant:}} Would you like to tell me about what specifically you find bad
    about his behavior? Would it help to explain why you’re worried about his behavior
    and what you think is the worst about it? And I’m curious, why do you think he’s mean?
    What are some of the things he does that make you feel this way?
  \end{tcolorbox}
\end{minipage}\hfill
\begin{minipage}[t]{0.49\textwidth}
  \textbf{RLHF Alternative 2}\par
  \begin{tcolorbox}[colback=midband!6!white,colframe=midband!25!white,
                    boxrule=0.3pt,left=4pt,right=4pt,top=4pt,bottom=4pt]
    \raggedright
    \textbf{\textcolor{RoyalBlue}{Human:}} What should I do if my boss is always mean to me?

    \textbf{\textcolor{purple}{Assistant:}} I’m sorry to hear about that. Do you want to tell me about
    the situation, or would you like for me to make some suggestions?

    \textbf{\textcolor{RoyalBlue}{Human:}} He is ignorant and mean. I feel bad going to work. What should I do?

    \textbf{\textcolor{purple}{Assistant:}} I’m sorry you feel that way. Did you notice if he is nice
    to other people? Or does he treat you in a way that’s different from other people?

    \textbf{\textcolor{RoyalBlue}{Human:}} He is mean to about half of us. I don't know what his problem is.

    \textbf{\textcolor{purple}{Assistant:}} He doesn't treat you like the others. Maybe he’s unfair about something.
  \end{tcolorbox}
\end{minipage}
}
\end{centerBand}

\vspace{-0.8em}

\begin{adjustbox}{width=\textwidth,center}

% Bottom row: metrics/logits boxes
\begin{minipage}[t]{0.49\textwidth}
  \begin{metricBox}[personaA]{Person 10 Choice Logits}
    {\figtiny
    % Example: show as a vector or a small table
    \begin{tabularx}{\linewidth}{lY}
    
      token "1" : -0.1602\\
      token "2" : -1.9102\\
      token "3" : -18.285\\
      token "**" : -19.785\\
      \hspace{2cm} \textbf{\vdots}
      
    \end{tabularx}
    }
  \end{metricBox}

  \vspace{-1.2em}

  \begin{metricBox}[personaA]{Renormalized choice probability}
    {\figtiny
    \[\P[\text{Choose alternative 1 over 2}] = 0.851\]
    }
  \end{metricBox}
\end{minipage}\hfill
\begin{minipage}[t]{0.49\textwidth}
  \begin{metricBox}[personaB]{Person 76 Choice Logits}
    {\figtiny
    \begin{tabularx}{\linewidth}{lY}
      token "1" : -0.6931\\
      token "2" : -0.6931\\
      token "3" : -17.318\\
      token "I" : -19.193\\
      \hspace{2cm} \textbf{\vdots}
    \end{tabularx}
    }
  \end{metricBox}

  \vspace{-1.2em}

  \begin{metricBox}[personaB]{Renormalized Choice Probability}
    {\figtiny
    \[ \P[\text{Choose alternative 1 over 2}] = 0.5\]
    }
  \end{metricBox}
\end{minipage}

\end{adjustbox}

\vspace{-1em}
\caption{\figtiny \textbf{Representative persona contrasts and RLHF labelling simulation.}
Top: concise survey summaries. Center: Example of two RLHF alternatives considered. Bottom: raw logits and renormalized choice probabilities.}
\label{fig:persona_rlhf_layout}
\end{figure}

\section{Example of Persona summary \label{section:persona-summary}}
We show an example of a persona's summary (persona 1) from \cite{tianyi2k500}

\begin{prompttemplate}[Persona 1 Survey Summary]{orange}
The person's demographics are the following...\\
Geographic region: Midwest (ND, SD, NE, KS, MN, IA, MO, WI, IL, MI, IN, OH)\\
Gender: Female\\
Age: 30-49\\
Education level: Some college, no degree\\
Race: Black\\
Citizen of the US: Yes\\
Marital status: Married\\
Religion: Nothing in particular\\
Religious attendance: A few times a year\\
Political affiliation: Independent\\
Income: \$50,000-\$75,000 \\
Political views: Moderate \\
Household size: 3 \\
Employment status: Part-time employment \\

The person's Big 5 scores are the following:\\
score\_extraversion = 2.125 (26th percentile)\\
score\_agreeableness = 4.556 (84th percentile)\\
wave1\_score\_conscientiousness = 4.556 (77th percentile)\\
score\_openness = 3.5 (36th percentile) \\
score\_neuroticism = 1.5 (15th percentile) \\

Openness reflects curiosity and receptiveness to new experiences, Conscientiousness indicates self-discipline and goal-directed behavior, Extraversion measures sociability and assertiveness, Agreeableness reflects compassion and cooperativeness, and Neuroticism captures emotional instability and susceptibility to negative emotions. Each score ranges from 1 to 5, and a higher score indicates a greater display of the associated traits.

The person's need for cognition score is the following:
score\_needforcognition = 2.611 (18th percentile)\\
Need for cognition is a personality trait that reflects an individual's tendency to seek out, engage in, and enjoy complex cognitive tasks. The score ranges from 1 to 5, and a higher score indicates a higher need for cognition.\\

The person's agentic / communal value scores are the following:\\
score\_agency = 7.583 (95th percentile)\\
score\_communion = 7.583 (71st percentile)\\
Agency is the meta-concept associated with self-advancement in social hierarchies; communion is the partner concept associated with maintenance of positive relationships. Each score ranges from 1 to 9, and higher values indicate higher propensity for these constructs.\\

The person's minimalism score is the following:\\
score\_minimalism = 4.417 (91st percentile)\\
The score ranges from 1 to 5, and a higher score indicates a higher preference for minimalism.\\

The person's basic empathy scale score is the following:
score\_BES = 3.7 (38th percentile) \\
The score ranges from 1 to 5, and a higher score indicates more empathy.\\

The person's G.R.E.E.N. score is the following:\\
score\_GREEN = 4 (68th percentile)\\
The score ranges from 1 to 5, and higher scores indicate a higher affinity for environmentalism.\\

The person's CRT score is the following:\\
crt2\_score = 2 (59th percentile)\\
The score ranges from 0 to 4, and a higher score indicates a greater ability to suppress an intuitive and spontaneous ("system 1") wrong answer in favor of a reflective and deliberative ("system 2") right answer.\\

The person's fluid and crystallized intelligence scores are the following:\\
score\_fluid = 1 (55th percentile)\\
score\_crystallized = 1 (4th percentile)\\
Fluid intelligence is the capacity to reason, solve novel problems, and adapt to new situations independent of prior knowledge, while crystallized intelligence is the accumulation of knowledge, facts, and skills acquired through experience and education. The fluid score ranges from 0 to 6, and the crystallized score ranges from 0 to 20; higher scores indicate better performance.\\

The person's syllogism score is the following:\\
score\_syllogism\_merged = 5 (32nd percentile)
The score ranges from 0 to 12, and a higher score indicates a greater ability to solve verbal reasoning problems like "All A are B" and "All B are C" implying "All A are C."\\

The person's total intelligence scores, overconfidence score, and overplacement score are the following:\\
score\_actual\_total = 9 (8th percentile)\\
score\_overconfidence = 1 (8th percentile)\\
score\_overplacement = -15 (3rd percentile)\\
The total intelligence score is simply the sum of the person's performances on the aforementioned logic / intelligence questions (ranging from 0 to 42 total correct). The person's overconfidence is the difference between their prediction of their own performance and their actual performance. The person's overplacement is the difference between their prediction of their own performance and their prediction of other respondents' performance.\\

The person's ultimatum game scores are the following:\\
score\_ultimatum\_sender = 0 (6th percentile)\\
score\_ultimatum\_accepted = 100 (100th percentile)\\
The sender score is the percent of \$5 the person chose to offer another person in the ultimatum game. The receiver score is the percent of offers they accepted, out of a total of 6 offers made in \$1 increments from \$5 to \$0, when acting as the receiver in the game.\\

The person's mental accounting score is the following:\\
score\_mentalaccounting = 25 (9th percentile)\\
The score ranges from 0 to 100 percent, and higher scores indicate a greater adherence to the principles of mental accounting proposed by Thaler: segregate gains, integrate losses, segregate a small gain from a large loss, and integrate a small loss with a large gain.\\

The person's social desirability score is the following:\\
score\_socialdesirability = 5 (48th percentile)\\
The score ranges from 0 to 13, and higher scores indicate a greater tendency to respond to questions in a socially desirable way rather than in a truthful way.\\

The person's secondary conscientiousness score is the following:\\
wave2\_score\_conscientiousness = 8 (100th percentile)\\
This score was computed using a different questionnaire than the Big 5 conscientiousness score reported above, and it ranges from 0 to 8, but it is otherwise similar. Higher scores indicate a greater propensity for conscientiousness.\\

The person's Beck anxiety score is the following:\\
score\_anxiety = 27 (93rd percentile)\\
The score ranges from 0 to 63, and higher scores indicate a higher tendency for anxiety.\\

The person's individualism vs collectivism scores are the following:\\
score\_HI = 4.25 (52nd percentile)\\
score\_HC = 3.75 (45th percentile)\\
score\_VI = 4.5 (98th percentile)\\
score\_VC = 5 (100th percentile)\\
The Horizontal Individualism (HI) score reflects a person's preference for autonomy and equality, the Horizontal Collectivism (HC) score captures a preference for interdependence and equality, the Vertical Individualism (VI) score indicates a drive for personal achievement and acceptance of hierarchical inequality, and the Vertical Collectivism (VC) score denotes a focus on group loyalty combined with acceptance of hierarchical structures. Each score ranges from 1 to 5, and a higher score indicates a greater preference.\\

The person's financial literacy score is the following:\\
score\_finliteracy = 3 (15th percentile)\\
The score ranges from 0 to 8, and a higher score indicates the person correctly answered more questions related to general financial literacy.\\

The person's numeracy score is the following:\\
score\_numeracy = 3 (20th percentile)\\
The score ranges from 0 to 8, and a higher score indicates the person correctly answered more questions related to numeracy.\\

The person's modus ponens deductive certainty score is the following:\\
score\_deductive\_certainty = 4 (100th percentile)\\
The score ranges from 0 to 4, and a higher score indicates the person correctly answered more modus ponens questions.\\

The person's forward flow score is the following:\\
score\_forwardflow = 0.801 (31st percentile)\\
A higher score indicates the person was able to generate more distant words in a sequence (e.g. "candle -> bee -> sugar" instead of "candle -> fire -> flame"). The actual word chain that the subject generated was the following: paper -> pencil -> eraser -> marker -> pen -> point -> sharp -> edge -> cliff -> mountain -> sky -> blue -> cloud -> soft -> pillow -> sock -> pair -> black -> wing -> flap\\

The person's discount rate and present bias are the following:\\
score\_discount = 0 (40th percentile)\\
score\_presentbias = 0 (52nd percentile)\\
These are implied rates computed from the person's time-value of money preferences. Higher values of the discount rate imply greater impatience. Higher values of present bias imply greater departure from normative economic behavior.\\

The person's risk aversion score is the following:\\
score\_riskaversion = 0.306 (79th percentile)\\
Higher scores indicate a greater tendency for risk aversion in a choice between a sure-amount and lottery payout.\\

The person's trust game scores are the following:\\
score\_trustgame\_sender = 40 (61st percentile)\\
score\_trustgame\_receiver = 33 (29th percentile)\\
The sender score is the percent sent in the trust game (where one player sends money to another, the money is multiplied, and the second player can return some to the first player). The person was asked to list up to 6 thoughts that crossed their mind while playing as the sender and they answered: "half; fairness; not greedy; decent amount; undecided; back way". The person was then asked to list up to 6 thoughts that crossed their mind while playing as the receiver and they answered: "thoughtful; not greedy; good price; split it; wanting more"\\

The person's regulatory focus scale score is the following:\\
score\_RFS = 5.6 (87th percentile)\\
The score ranges from 1 to 7. Higher scores indicate a stronger orientation toward either promotion or prevention focus, meaning individuals would be more driven by promotional aspirations or more motivated by avoiding losses.\\

The person's tightwad-spendthrift score is the following:\\
score\_ST-TW = 7 (10th percentile)\\
The score ranges from 4 to 26. Lower scores (4-11) indicate difficulty spending money, while higher scores (19-26) indicate difficulty controlling spending.\\

The person's Beck depression score is the following:\\
score\_depression = 12 (64th percentile)\\
The score ranges from 0 to 61, and a higher score indicates more depressive behaviors.\\

The person's need for uniqueness score is the following:\\
score\_CNFU-S = 2.583 (57th percentile)\\
The score ranges from 1 to 5, and higher scores indicate a higher need for uniqueness.\\

The person's self-monitoring score is the following:\\
score\_selfmonitor = 3.615 (98th percentile)\\
The score ranges from 0 to 5, and higher scores indicate a higher ability to monitor one's own behavior.\\

The person's self-concept clarity score is the following:\\
score\_SCC = 3.417 (42nd percentile)\\
The score ranges from 1 to 5, and higher scores indicate a greater certainty about the person's own self-concept.\\

The person's need for closure score is the following:\\
score\_needforclosure = 3.867 (71st percentile)\\
The score ranges from 1 to 5, and higher scores indicate a greater desire for certainty over ambiguity.\\

The person's maximization scale score is the following:\\
score\_maximization = 4.667 (99th percentile)\\
The score ranges from 1 to 5, and higher scores indicate a tendency to optimize rather than satisfice when making decisions.\\

The person's Wason selection score is the following:\\
score\_wason = 3 (96th percentile)\\
The score ranges from 0 to 4, and higher scores indicate better performance on the Wason selection task.\\

The person's dictator game score is the following:\\
score\_dictator\_sender = 20 (18th percentile)\\
This is the percent split sent when the person played the dictator game as the sender. The person was asked to list up to 6 thoughts that crossed their mind when playing this game and they answered: "give money; generous; control; thoughtful; kind; not wasteful"\\

The person also answered three purely qualitative questions about their concept of self.\\
The person was asked, "Please describe the type of person you aspire to be. That is, write about the traits and behaviors you would like ideally to possess, your ultimate goals for yourself. Please write at least 3 sentences."\\
They answered: "I would like to be a less selfless person {that is in thought} someone who has a little more patience and self control. My tendency to be selfish is the most important thing for me because even though I would do anything for my family my initial thoughts are to not do anything at first in my brain."\\

The person was then asked the question \"Please describe the type of person you ought to be. That is, write about the traits and behaviors attributes that you should or ought to possess, based on your responsibilities and what other people expect from you. Please write at least 3 sentences.\" \\
They answered: "I ought to be more helpful. Sometimes I'm just lazy because I have so much to do and then I put it off because I'm so overwhelmed. So when I need to help with dishes every night, sometimes I put it off, and then my husband who works 12 hours a day and does them. "\\

The person was finally asked the question "Please describe the type of person you actually are. That is, write about the traits and behaviors you actually possess. Please write at least 3 sentences."\\
They answered: "I am so generous. I would put everyone's needs before mine every time. I am a loving , funny person who always has a joke to tell and laugh at myself all the time."
\end{prompttemplate}

\end{document}